\newif\ifanonymous
\anonymousfalse         %

\newif\ifarxiv
\arxivtrue              %

\documentclass[letterpaper,10pt,conference]{ieeeconf}

\IEEEoverridecommandlockouts  %
\ifarxiv
\else
  \providecommand{\subparagraph}{}
  \usepackage{titlesec}
  \titlespacing*{\section}{0pt}{8pt plus 2pt minus 2pt}{4pt plus 2pt minus 2pt}
  \titlespacing*{\subsection}{0pt}{6pt plus 2pt minus 2pt}{3pt plus 2pt minus 2pt}

  \let\oldthebibliography\thebibliography
  
  \renewcommand{\thebibliography}[1]{%
    \oldthebibliography{#1}%
    \setlength{\itemsep}{0pt}%
    \setlength{\parskip}{0pt}%
  }

\fi

\usepackage{setspace}
\ifarxiv
\else
\fi

\PassOptionsToPackage{table}{xcolor} %
\usepackage{amsmath}
\usepackage{amssymb}
\usepackage{amsfonts}
\usepackage{mathtools, cuted} %
\usepackage{bm}
\usepackage{empheq}
\usepackage[T1]{fontenc}
\usepackage{newtxtext} %
\usepackage{newtxmath} %

\usepackage{graphicx}
\usepackage[usenames,dvipsnames]{xcolor}
\usepackage{tikz}
\usetikzlibrary{shapes, snakes, tikzmark, calc, arrows.meta, positioning, shapes.multipart, patterns}
\usepackage{caption}
\usepackage{subcaption}
\DeclareCaptionLabelSeparator{periodspace}{.\quad}
\usepackage{mdframed}

\usepackage{tabularx}
\usepackage{multirow}
\usepackage{booktabs}
\usepackage{makecell}
\usepackage{longtable}
\usepackage{arydshln}
\usepackage{footnote}
\makesavenoteenv{tabular}
\makesavenoteenv{table}

\newcolumntype{H}{>{\setbox0=\hbox\bgroup}c<{\egroup}@{}}

\usepackage{algorithm}
\usepackage{algpseudocode}

\usepackage[sort,compress]{cite}
\usepackage[pdfstartview={FitV},pdfpagelayout={TwoColumnLeft},bookmarks=true,bookmarksopen=true,plainpages=false,colorlinks=true,linkcolor=black,citecolor=blue,urlcolor=blue,filecolor=black,pagebackref=false,hypertexnames=false,pdfpagelabels]{hyperref}
\usepackage[nameinlink,capitalize]{cleveref}

\Crefname{figure}{Fig.}{Figs.}
\crefname{figure}{Fig.}{Figs.}
\Crefname{table}{Table}{Tables}
\crefname{table}{Table}{Tables}
\Crefname{section}{Sec.}{Secs.}
\crefname{section}{Sec.}{Secs.}
\Crefname{equation}{}{}
\crefname{equation}{}{}
\Crefname{algorithm}{Alg.}{Algs.}
\crefname{algorithm}{Alg.}{Algs.}

\usepackage[]{siunitx}
\usepackage{esvect}         %
\usepackage[nolist]{acronym}
\usepackage{xspace}
\usepackage{etoolbox}
\usepackage[normalem]{ulem}
\usepackage{enumerate}
\usepackage{balance}        %
\usepackage{pifont}         %
\usepackage{textcomp}
\usepackage{calc}

\robustify\bfseries
\robustify\uline

\ifdefined\qtyproduct
\else
  \ifdefined\NewCommandCopy
    \NewCommandCopy\qtyproduct\SI
  \else
    \NewDocumentCommand\qtyproduct{O{}mm}{\SI[#1]{#2}{#3}}
  \fi
\fi

\usepackage{amsthm}                                                                                                             
     
\newtheoremstyle{nodefindent}%
  {6pt}   %
  {6pt}   %
  {\normalfont} %
  {0pt}   %
  {\bfseries} %
  {}     %
  { }     %
  {}      %

\theoremstyle{nodefindent}
\newtheorem{assumption}{Assumption}

\newtheorem{theorem}{Theorem}

\newcommand{\ie}{i.e.,\xspace}
\newcommand{\eg}{e.g.,\xspace}

\newcommand{\vect}[1]{\boldsymbol{{#1}}}

\newcommand{\cmark}{\cellcolor{LimeGreen!25} \textcolor{ForestGreen}{\ding{51}}}   %
\newcommand{\xmark}{\cellcolor{red!10} \textcolor{red}{\ding{55}}}                 %
\newcommand{\wmark}{\cellcolor{yellow!20} \textcolor{YellowOrange}{\tikz[baseline=-0.1ex]{\draw[line width=0.8pt] (0,0) -- (0.35em,0.6em) -- (0.7em,0) -- cycle;}}} %

\newcommand{\static}{\cellcolor{red!10} \textbf{\textcolor{red}{Static}}}
\newcommand{\dynamic}{\cellcolor{LimeGreen!25} \textbf{\textcolor{ForestGreen}{Both}}}
\newcommand{\open}{\cellcolor{red!10} \textbf{\textcolor{red}{Open}}}
\newcommand{\both}{\cellcolor{LimeGreen!25} \textbf{\textcolor{ForestGreen}{Both}}}

\newcommand{\YesGreen}{\textbf{\textcolor{ForestGreen}{Yes}}}
\newcommand{\NoRed}{\textbf{\textcolor{red}{No}}}
\newcommand{\best}[1]{\textbf{\textcolor{ForestGreen}{#1}}}
\newcommand{\worst}[1]{\textbf{\textcolor{red}{#1}}}

\newacro{slam}[SLAM]{Simultaneous Localization and Mapping}
\newacro{uav}[UAV]{Unmanned Aerial Vehicle}
\acrodefplural{uav}[UAVs]{unmanned aerial vehicles}
\newacro{gns}[GNS]{Global Navigation Satellite}
\newacro{gnss}[GNSS]{Global Navigation Satellite System}
\acrodefplural{gnss}[GNSS]{Global Navigation Satellite Systems}
\newacro{mcl}[MCL]{Monte-Carlo localization}
\newacro{imu}[IMU]{Inertial Measurement Unit}
\newacro{dof}[DOF]{degree-of-freedom}
\acrodefplural{dof}[DOFs]{degrees-of-freedom}
\newacro{ransac}[RANSAC]{random sample consensus}
\newacro{map}[MAP]{maximum a posteriori}
\newacro{mle}[MLE]{maximum likelihood estimation}
\newacro{rms}[RMS]{root-mean-square}
\newacro{dem}[DEM]{digital elevation model}
\newacro{vio}[VIO]{visual-inertial odometry}
\newacro{cnn}[CNN]{convolutional neural network}
\newacro{pdf}[pdf]{probability density function}
\newacro{ahrs}[AHRS]{attitude and heading reference system}
\newacro{lidar}[LIDAR]{light detection and ranging}
\newacro{relu}[ReLU]{rectified linear unit}
\newacro{rtk}[RTK]{real-time kinematic}
\newacro{gps}[GPS]{global positioning system}
\newacro{fcn}[FCN]{fully-connected network}
\newacro{brm}[BRM]{building ratio map}
\newacro{sfm}[SfM]{Structure-from-Motion}
\newacro{vpr}[VPR]{visual place recognition}
\newacro{fov}[FOV]{field of view}
\newacro{poc}[POC]{partially overlapping circular}

\title{\LARGE \bf SANDO: Safe Autonomous Trajectory Planning \\ for Dynamic Unknown Environments}

\ifanonymous
  \author{The authors' names are withheld for double-blind reviews.}
\else
  \author{Kota Kondo$^{1}$, Jes\'us Tordesillas$^{2}$, Jonathan P.\ How$^{1}$}
\fi

\begin{document}

\ifarxiv
  \pagenumbering{arabic}
  \maketitle
  \thispagestyle{plain}
  \pagestyle{plain}
\else
  \maketitle
  \thispagestyle{empty}
  \pagestyle{empty}
\fi

\ifanonymous\else
  \begingroup
  \renewcommand\thefootnote{}\footnotetext{%
  $^{1}$K. Kondo and J. P. How are with the Departments of Aeronautics and Astronautics, and Mechanical Engineering, Massachusetts Institute of Technology. \texttt{\{kkondo, jhow\}@mit.edu}.\\
  $^{2}$J. Tordesillas is an Assistant Professor with the Department of Electronics, Automation, and Communications, Comillas ICAI. \texttt{jtordesillas@comillas.edu}.\\
  This work is supported by DSTA.%
  }
  \addtocounter{footnote}{-1}
  \endgroup
\fi

\begin{abstract}
This paper presents SANDO, a safe trajectory planner for 3D dynamic unknown environments.
In such environments, the locations and motions of obstacles are not known in advance, and a collision-free plan can become unsafe at any moment as obstacles move unpredictably, requiring fast replanning.
Existing soft-constraint planners are fast but do not guarantee collision-free paths, while hard-constraint methods typically ensure safety at the cost of longer computation times.
SANDO addresses this trade-off through three main contributions.
First, to avoid overly conservative or infeasible corridors near dynamic obstacles, a heat map-based A$^\ast$ global planner steers the path away from high-risk regions using soft costs, and a spatiotemporal safe flight corridor (STSFC) generator produces time-layered polytopes that inflate obstacles only by their worst-case reachable set at each time layer, rather than by the worst case over the entire horizon.
Second, trajectory optimization is formulated as a Mixed-Integer Quadratic Program (MIQP) with hard collision-avoidance constraints, and a variable elimination technique reduces the number of decision variables, enabling fast computation.
Third, a formal safety analysis establishes collision-free guarantees under explicit velocity-bound and estimation-error assumptions.
Ablation studies confirm that variable elimination provides up to $7.4\times$ speedup in optimization time, and that STSFCs are critical for maintaining feasibility in dense dynamic environments.
Benchmark simulations against state-of-the-art methods across standardized static benchmarks, obstacle-rich forests, and dynamic environments show that SANDO consistently achieves the highest success rate with no constraint violations across all difficulty levels, and additional perception-only experiments without ground truth obstacle information confirm robust performance under realistic sensing conditions.
Hardware experiments on a UAV with fully onboard planning, perception, and localization demonstrate six safe flights in static environments and ten safe flights among dynamic obstacles.
\acresetall

\end{abstract}

\section*{Supplementary Material}
\ifanonymous
  \noindent\textbf{Video}: \href{https://doi.org/10.6084/m9.figshare.31964682.v2}{https://doi.org/10.6084/m9.figshare.31964682.v2} \\
  \noindent\textbf{Code}: \href{https://anonymous.4open.science/r/sando-4774}{https://anonymous.4open.science/r/sando-4774}
\else
  \noindent\textbf{Video}: \url{https://youtu.be/_T10DJiLQXg} \\
  \noindent\textbf{Code}: \url{https://github.com/mit-acl/sando.git}
\fi

\section{Introduction}\label{sec:introduction}

\begin{figure}
    \centering
    \includegraphics[width=\columnwidth]{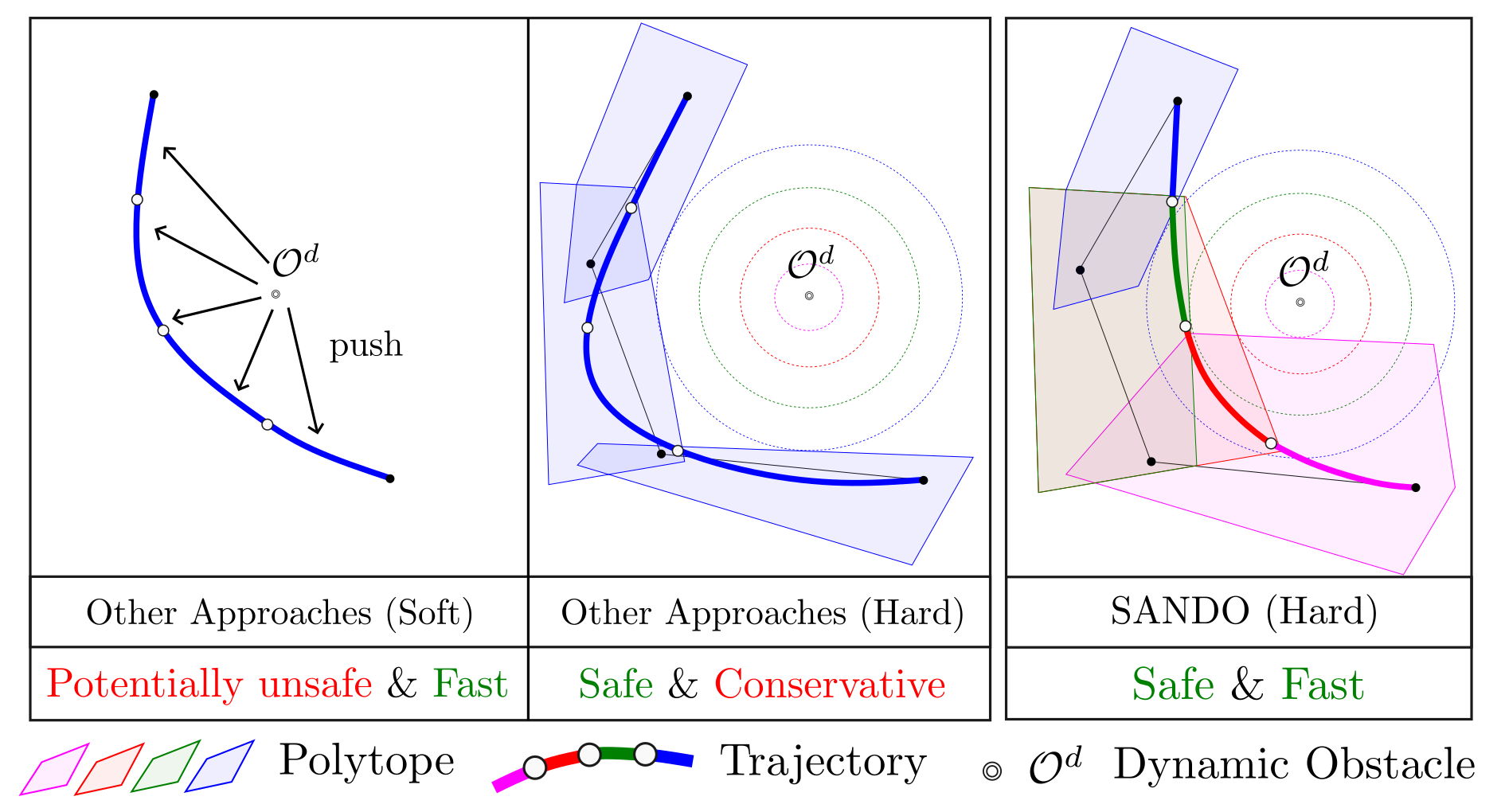}
    \caption{Comparison of trajectory planning approaches near a dynamic obstacle $\mathcal{O}^d$.
    \textbf{Left (Other Approaches -- Soft):} Soft-constraint methods use penalty-based collision avoidance, producing fast but potentially unsafe trajectories.
    \textbf{Center (Other Approaches -- Hard):} Other hard-constraint methods inflate the obstacle by its full worst-case reachable set and generate purely spatial corridors, producing safe but overly conservative trajectories that detour far from the obstacle.
    \textbf{Right (SANDO):} SANDO generates spatiotemporal safe flight corridors (STSFCs) that account for when the trajectory passes each region, inflating the obstacle only by its reachable set at the corresponding time layer, yielding a safe and fast trajectory.
    (The dynamic obstacle is depicted as a point mass for visual clarity; in practice, obstacles have finite axis-aligned bounding box (AABB) extents, and the reachable set is the Minkowski sum of the AABB with a cube whose half-side equals the per-layer reachable radius.)\label{fig:money_shot}}
    \vspace{-1em}
\end{figure}

Autonomous unmanned aerial vehicle (UAV) navigation in dynamic unknown environments is challenging: the future trajectories of obstacles are unknown, so a collision-free path can become unsafe moments after it is computed.
This requires planners that can quickly recompute safe trajectories while handling spatiotemporal collision avoidance.
Existing approaches address parts of this challenge but fall into three categories with distinct limitations (see Table~\ref{tab:state_of_the_art_comparison}).

First, planners designed for unknown static environments (\eg EGO-Planner~\cite{zhou2021ego-planner}, FASTER~\cite{tordesillas2022faster}, HDSM~\cite{toumieh2024high}, and SUPER~\cite{ren2025super}) achieve fast replanning but assume that once space is observed as free, it remains free, making them unsuitable for environments with moving obstacles.
Second, planners that handle dynamic obstacles with soft constraints (\eg HOPA~\cite{feng2021uav}, ViGO~\cite{xu2022vision}, FAPP~\cite{lu2024fapp}, Risk-Aware~\cite{zong2025risk}, and FHD~\cite{Fan2025flying}) can react to moving obstacles but provide no formal safety guarantee, as collision avoidance is encouraged via penalty terms rather than enforced (see Fig.~\ref{fig:money_shot}).
Third, planners that enforce hard safety constraints in dynamic environments (\eg CC-MPC~\cite{lin2020robust}, OA-MPC~\cite{firoozi2022occlusion}, Liu et al.~\cite{liu2023tight}, Stamouli et al.~\cite{stamouli2024recursively}, STS~\cite{quan2025state}, and IP-MPC~\cite{xu2025intent}) guarantee safety only at discretized model predictive control (MPC) or sample points, leaving the trajectory potentially unsafe between time steps.
While PANTHER~\cite{tordesillas2022panther} provides continuous-time safety in dynamic environments, it has only been demonstrated in open settings and the computation scales poorly with the number of obstacles.
As summarized in Table~\ref{tab:state_of_the_art_comparison}, no existing method simultaneously (1) handles spatiotemporal dynamic obstacle collision avoidance, (2) guarantees continuous-time safety, and (3) operates in dynamic unknown confined environments.
To address these gaps, we introduce SANDO (\textbf{S}afe \textbf{A}uto\textbf{N}omous trajectory planning for \textbf{D}ynamic unkn\textbf{O}wn environments), a trajectory planner that integrates a heat map-based A$^\ast$ global planner, a novel Spatiotemporal Safe Flight Corridor (STSFC) generation method, and a variable elimination-based hard-constrained optimization technique to provide continuous-time safety guarantees in dynamic unknown environments.

\begin{table*}[!t]
    \renewcommand{\arraystretch}{1.4}
    \scriptsize
    \begin{centering}
    \caption{State-of-the-art UAV trajectory planners in unknown environments. Safety guarantee: \textcolor{ForestGreen}{\ding{51}}\,=\,continuous, \textcolor{YellowOrange}{$\triangle$}\,=\,discretized, \textcolor{red}{\ding{55}}\,=\,none.}
    \label{tab:state_of_the_art_comparison}
    \resizebox{1.0\textwidth}{!}{
    \begin{tabular}{>{\centering\arraybackslash}m{0.16\textwidth}
                    >{\centering\arraybackslash}m{0.08\textwidth}
                    >{\centering\arraybackslash}m{0.08\textwidth}
                    >{\centering\arraybackslash}m{0.06\textwidth}
                    >{\raggedright\arraybackslash}m{0.35\textwidth}}
    \toprule
    \multirow{2}{*}{\textbf{Method}} & \multicolumn{2}{c}{\textbf{Environments}} & \multirow{2}{*}{\makecell{\textbf{Safety} \\ \textbf{Guarantee}}} & \multicolumn{1}{c}{\multirow{2}{*}{\textbf{Note}}} \tabularnewline
    \cline{2-3} 
    & \textbf{Static/Dynamic} & \textbf{Open/Confined} && \tabularnewline
    \midrule
    \textbf{CC-MPC}~\cite{lin2020robust} (2020) & \dynamic & \both & \wmark & Chance hard constraint at discretized MPC steps \tabularnewline
    \cline{0-0} \cline{5-5}
    \textbf{EGO-Planner} \cite{zhou2021ego-planner} (2021) & \static & \both & \xmark & Soft constraint \tabularnewline
    \cline{0-0} \cline{5-5} 
    \textbf{HOPA} \cite{feng2021uav} (2021) & \dynamic & \open & \xmark & Soft constraint \tabularnewline
    \cline{0-0} \cline{5-5}
    \textbf{FASTER} \cite{tordesillas2022faster} (2022) & \static & \both & \cmark & Continuous guarantee; Assume static env. \tabularnewline
    \cline{0-0} \cline{5-5}
    \textbf{PANTHER} \cite{tordesillas2022panther} (2022) & \dynamic & \open & \cmark & Continuous guarantee; Probabilistic inflation for obstacle uncertainty \tabularnewline
    \cline{0-0} \cline{5-5}
    \textbf{EGO-Swarm2} \cite{zhou2022swarm} (2022) & \dynamic & \both & \xmark & Soft constraint \tabularnewline
    \cline{0-0} \cline{5-5}
    \textbf{ViGO} \cite{xu2022vision} (2022) & \dynamic & \both & \xmark & Soft constraint \tabularnewline
    \cline{0-0} \cline{5-5}
    \textbf{OA-MPC}~\cite{firoozi2022occlusion} (2022) & \dynamic & \both & \wmark & Hard constraint at discretized MPC steps \& Temporally conservative (See Fig.~\ref{fig:money_shot}) \tabularnewline 
    \cline{0-0} \cline{5-5}
    \textbf{Liu et al.} \cite{liu2023tight} (2023) & \dynamic & \both & \wmark & Chance hard constraint at discretized MPC steps \tabularnewline
    \cline{0-0} \cline{5-5}
    \textbf{HDSM} \cite{toumieh2024high} (2024) & \static & \both & \cmark & Continuous guarantee; Assume static env. \tabularnewline
    \cline{0-0} \cline{5-5}
    \textbf{FAPP} \cite{lu2024fapp} (2024) & \dynamic & \both & \xmark & Soft constraint \tabularnewline
    \cline{0-0} \cline{5-5}
    \textbf{Stamouli et al.}~\cite{stamouli2024recursively} (2024) & \dynamic & \open & \wmark & Probabilistic safety guarantee at discretized MPC steps \tabularnewline
    \cline{0-0} \cline{5-5}
    \textbf{Xu et al.} \cite{xu2024heuristic} (2024) & \dynamic & \both & \xmark & Use ViGO~\cite{xu2022vision} as planner \tabularnewline
    \cline{0-0} \cline{5-5}
    \textbf{SUPER} \cite{ren2025super} (2025) & \static & \both & \xmark & Soft constraint \tabularnewline
    \cline{0-0} \cline{5-5}
    \textbf{FHD} \cite{Fan2025flying} (2025) & \dynamic & \both & \xmark & Learning-based approach \tabularnewline
    \cline{0-0} \cline{5-5}
    \textbf{STS} \cite{quan2025state} (2025) & \dynamic & \both & \wmark & Safety only guaranteed at discretized sample points \tabularnewline
    \cline{0-0} \cline{5-5}
    \textbf{IP-MPC} \cite{xu2025intent} (2025) & \dynamic & \both & \wmark & Hard constraint at discretized MPC steps \tabularnewline
    \cline{0-0} \cline{5-5}
    \textbf{Risk-Aware} \cite{zong2025risk} (2025) & \dynamic & \both & \xmark & Soft constraint \tabularnewline
    \cline{0-0} \cline{5-5}
    \textbf{SANDO (proposed)} & \dynamic & \both & \cmark & Continuous guarantee; Bound dynamic obstacles' maximum speed \tabularnewline
    \bottomrule
    \end{tabular}}
    \par\end{centering}
    \vspace{-3em}
\end{table*}

\section{Related Work}\label{sec:related_work}

\subsection{Environment Taxonomy}\label{subsec:classification_of_environments}

We focus on unknown environments, where no prior map is available, and classify them into two categories: (1) static vs.\ dynamic, and (2) open vs.\ confined.
Static environments have fixed obstacles, whereas dynamic environments include moving obstacles.
Open environments have relatively sparse obstacle distributions, whereas confined environments involve occlusions and narrow passages.

We organize our review around the three criteria in Table~\ref{tab:state_of_the_art_comparison}, namely, environment type (static vs.\ dynamic), environment structure (open vs.\ confined), and safety guarantee level (none, discretized, or continuous).
We first discuss planners designed for static environments, and then review dynamic-environment planners grouped by their safety guarantee.

\subsection{Planning in Unknown Static Environments}\label{sec:lit_review_static}

Several planners are designed for unknown static environments and achieve fast replanning by assuming that once space is observed as free, it remains free.
EGO-Planner~\cite{zhou2021ego-planner} uses a gradient-based local optimizer with soft collision penalties, enabling fast replanning in both open and confined static settings but providing no safety guarantees.
FASTER~\cite{tordesillas2022faster} builds on the MIQP trajectory planning approaches~\cite{deits2015miqp, landry2016aggressive} by generating safe flight corridors via convex decomposition and solving a hard-constrained MIQP within them, providing continuous-time collision-free guarantees in static environments; however, it does not model dynamic obstacles.
HDSM~\cite{toumieh2024high} extends the corridor-based approach with a high-degree spline representation that improves trajectory smoothness and corridor utilization, also providing continuous safety guarantees but only under the static assumption.
SUPER~\cite{ren2025super} adopts FASTER's exploratory-safe trajectory framework and replaces the hard-constraint optimizer with a gradient-based solver for faster computation, but trades off safety guarantees by using soft constraints and is limited to static environments.
While these methods work well in static settings, none of them handle dynamic obstacles.

\subsection{Soft-Constrained Planning in Dynamic Environments}\label{sec:lit_review_soft}

A number of planners handle dynamic obstacles but use soft constraints, meaning collision avoidance is encouraged via penalty terms but not strictly enforced.
HOPA~\cite{feng2021uav} uses obstacle positions and potential fields to perform avoidance maneuvers, but operates only in open environments and provides no hard safety guarantee.
ViGO~\cite{xu2022vision} fuses vision-based obstacle detection with a gradient-based planner to navigate dynamic clutter in both open and confined settings, but relies on soft collision penalties.
FAPP~\cite{lu2024fapp} tightly integrates perception and planning for dynamic cluttered environments, demonstrating fast replanning and good practical performance; however, its collision avoidance is soft-constrained using MINCO~\cite{wang2022minco}, so it does not guarantee safety.
Xu et al.~\cite{xu2024heuristic} propose a heuristic-based incremental probabilistic roadmap (PRM) for efficient replanning in dynamic environments, using ViGO~\cite{xu2022vision} as the underlying local planner and thus inheriting its soft-constraint limitation.
Risk-Aware~\cite{zong2025risk} incorporates risk metrics into the planning objective to improve safety awareness in dynamic environments, but the risk terms act as soft penalties rather than hard constraints.
FHD~\cite{Fan2025flying} uses a learning-based approach to navigate dynamic environments, achieving agile flight without explicit obstacle modeling; however, the learned policy provides no formal safety guarantee.
While these methods demonstrate strong empirical performance, the absence of hard constraints means safety cannot be formally guaranteed, which is a critical limitation for safety-critical UAV deployments.

\subsection{Safety-Guaranteed Planning in Dynamic Environments}\label{sec:lit_review_safe}

Early approaches to dynamic collision avoidance include velocity obstacles~\cite{fiorini1998velocity}. Formal safety frameworks such as control barrier functions~\cite{ames2019cbf} and Hamilton-Jacobi reachability~\cite{chen2018hj} provide rigorous guarantees but can be computationally expensive in 3D cluttered environments. In the trajectory planning literature, a smaller set of planners enforce collision avoidance as hard constraints in dynamic environments.
We distinguish between discretized guarantees (safety enforced only at sampled time steps or MPC knot points) and continuous guarantees (safety enforced continuously along the trajectory).

\subsubsection{Discretized Safety Guarantees}

CC-MPC~\cite{lin2020robust} formulates chance constraints within an MPC framework to handle obstacle uncertainty, providing probabilistic safety guarantees at discretized MPC steps but not between them.
OA-MPC~\cite{firoozi2022occlusion} provides hard-constraint guarantees at MPC steps via worst-case reachability analysis and accounts for occluded regions, but ignores the temporal motion history of obstacles, leading to excessive conservatism (see Fig.~\ref{fig:money_shot}).
Liu et al.~\cite{liu2023tight} compute tight collision probability bounds and enforce chance constraints at MPC steps, but safety is only guaranteed at discrete time points.
Stamouli et al.~\cite{stamouli2024recursively} combine conformal prediction with shrinking-horizon MPC to provide probabilistic safety guarantees at MPC steps without distributional assumptions; however, their approach has only been demonstrated in open environments.
STS~\cite{quan2025state} uses state-time space to handle dynamic environments; however, it is only demonstrated in 2D scenarios and only provides safety guarantees at discretized sample points but not continuously.
IP-MPC~\cite{xu2025intent} incorporates intent prediction of dynamic obstacles into an MPC framework with hard constraints at MPC steps, improving avoidance quality but without continuous-time guarantees.

All of these methods share a fundamental limitation: collisions can occur between the discrete time points at which constraints are enforced, particularly for fast-moving obstacles or long MPC intervals.

\subsubsection{Continuous Safety Guarantees}

PANTHER~\cite{tordesillas2022panther} constructs per-obstacle convex hulls to enforce continuous-time collision avoidance in dynamic unknown environments using probabilistic reachable-set inflation for obstacle uncertainty.
However, its per-obstacle convex hull representation becomes computationally expensive as the number of obstacles grows, and it has been demonstrated primarily in open environments without many static obstacles.

\subsection{Contributions}\label{subsec:sando_contributions}

To address the gaps identified above, SANDO makes the following contributions:

\begin{enumerate}
    \item \textbf{STSFC Generation:} A novel Spatiotemporal Safe Flight Corridor generation method that produces time-layered polytope sequences accounting for worst-case obstacle reachable sets at each time layer, addressing the lack of time-varying corridor generation framework for dynamic environments (Section~\ref{subsec:temporal_safety_corridor}).
    \item \textbf{Heat Map-based Global Planner:} A heat map-based A$^\ast$ planner that assigns soft costs around both static and dynamic obstacles, guiding the global path toward regions where larger STSFC corridors can be generated (Section~\ref{subsec:global_planner_heatmap}).
    \item \textbf{MIQP Trajectory Optimization with STSFCs:} A hard-constraint MIQP formulation that assigns each trajectory piece to a time-layered STSFC polytope, whose continuous-time collision-free guarantee is established by the formal safety analysis below (Section~\ref{sec:trajectory_optimization}).
    \item \textbf{Formal Safety Analysis:} Safety guarantees through worst-case reachable set inflation with explicit assumptions, and a discussion of recursive feasibility limitations.
    \item \textbf{Extensive Evaluation} in simulation across diverse environments and hardware experiments on a fully automated UAV platform, demonstrating the practical effectiveness of SANDO in real-world scenarios.
\end{enumerate}

\section{System Overview}\label{sec:system_overview_section}

\begin{figure*}[htbp]
    \centering
    \includegraphics[width=\textwidth]{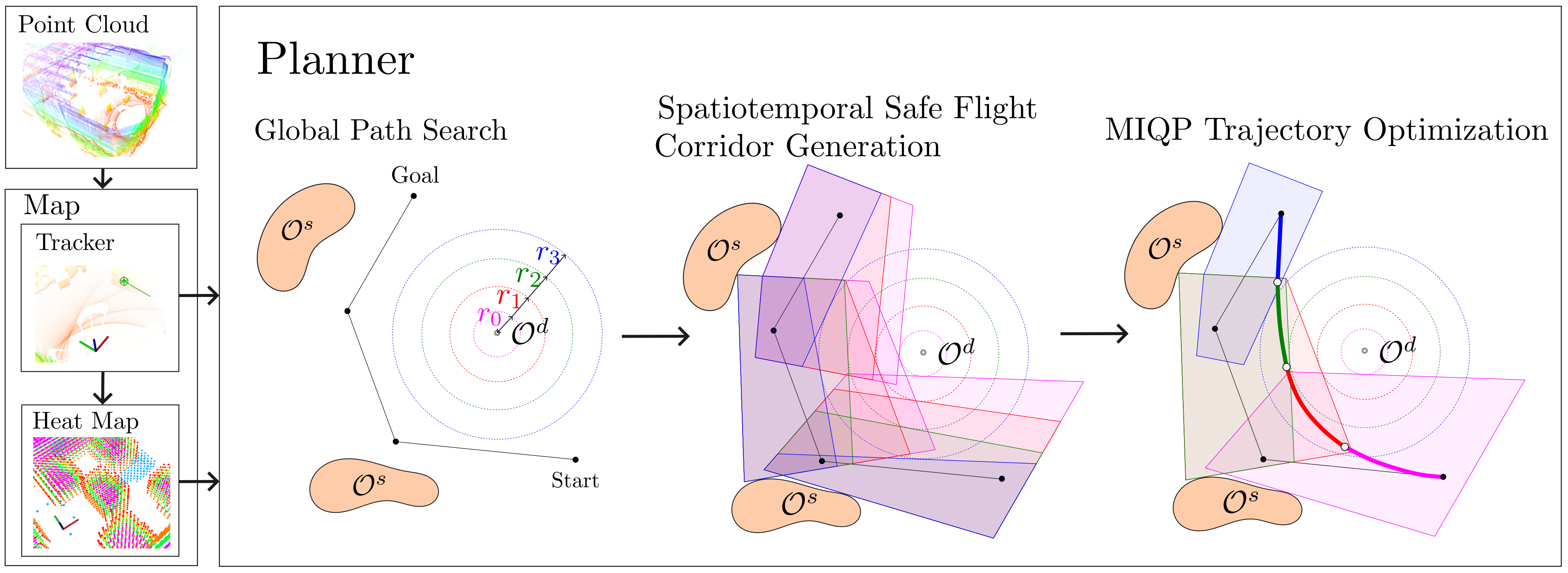}
    \caption{SANDO system overview: Point cloud data are processed by the Map Manager, which detects static obstacles (denoted $\mathcal{O}^s$) and dynamic obstacles (denoted $\mathcal{O}^d$) and estimates the states of dynamic obstacles.
    The Dynamic Obstacle Tracker feeds its predictions to the Heat Map module, which generates soft costs around both static and dynamic obstacles (Section~\ref{subsec:global_planner_heatmap}).
    The heat map-based global planner computes a global path using A$^\ast$ search, and the predicted obstacle trajectories are used by the STSFC Generation module to produce time-layered polytope sequences (Section~\ref{subsec:temporal_safety_corridor}).
    These time-varying polytopes are then used in the Trajectory Optimization module to compute a safe trajectory (Section~\ref{sec:trajectory_optimization}).\label{fig:system_overview}}
    \vspace{-2em}
\end{figure*}

SANDO consists of five modules (see Fig.~\ref{fig:system_overview}): a dynamic obstacle tracker, a heat map generator, a global path planner, an STSFC generator, and a hard-constraint local trajectory optimizer.
Each module is designed to handle \textbf{dynamic unknown obstacles}, which requires planning in spatiotemporal space, adapting trajectories as obstacles move unpredictably, and computing safe trajectories in real time.

SANDO processes point cloud data from a LiDAR sensor and/or a depth camera.
The point cloud is processed by the map manager, which generates a voxel map with heat map costs for both static and dynamic obstacles.
Point cloud data are also used by a dynamic obstacle tracker, where dynamic obstacles are detected, clustered, and tracked to estimate their current positions and predict their future trajectories, as detailed in Section~\ref{sec:obstacle_tracking}.

The voxel map and predicted obstacle trajectories are provided as inputs to the heat map-based global planner and the STSFC Generation module.
The global planner computes a global path from the agent's start position to a \emph{subgoal} (a projection of the goal position on the local map around the agent) using heat map-based A$^\ast$ search with soft costs for both static and dynamic obstacles; see Section~\ref{subsec:global_planner_heatmap} for details.
SANDO then constructs STSFCs, which account for spatial and temporal dimensions, by inflating dynamic obstacles based on their worst-case reachable sets at each time layer, as described in Section~\ref{subsec:temporal_safety_corridor}.

Since dynamic obstacles can change their motion unpredictably, fast trajectory optimization is needed.
SANDO introduces a variable elimination approach that reduces the number of decision variables and constraints in the optimization problem, enabling fast hard-constraint optimization; see Section~\ref{sec:trajectory_optimization}.
The resulting trajectory is then sent to the low-level controller for execution.
Replanning is triggered at 100\,Hz; however, a new replanning cycle begins only after the previous one completes, so the effective replanning rate is bounded by the total replanning time (typically 20--35\,ms in hardware, yielding an effective rate of approximately 30--50\,Hz).
If replanning fails to find a feasible trajectory, the agent continues executing the previously computed trajectory; if no valid trajectory remains, the agent hovers in place.
Point cloud processing, dynamic obstacle tracking, map management, and SANDO's planning modules are all parallelized.

\section{Spatiotemporal Safe Flight Corridor (STSFC) Generation}\label{subsec:temporal_safety_corridor}

We first define the key terminology used throughout the paper.
Let $t_0$ denote the \emph{current planning time}.
The trajectory is composed of $N$ \emph{pieces} $\boldsymbol{x}_0, \ldots, \boldsymbol{x}_{N-1}$, each a cubic B\'ezier polynomial indexed by $n \in \{0, \ldots, N{-}1\}$.
Each piece $n$ executes during the \emph{time interval} $[t_0 + n \cdot dt,\; t_0 + (n{+}1) \cdot dt]$, where $dt$ is the uniform duration per piece, so that the full trajectory spans $[t_0,\; t_0 + N \cdot dt]$.
The global path consists of $P$ \emph{path segments}, indexed by $p \in \{0, \ldots, P{-}1\}$.
Each of the $K$ tracked dynamic obstacles, indexed by $k \in \{1, \ldots, K\}$, occupies a region $\mathcal{O}_k(t) \subset \mathbb{R}^3$ at time $t$, with true centroid $\mathbf{c}_k(t) \in \mathbb{R}^3$.
The obstacle region is contained in an axis-aligned bounding box (AABB) centered at $\mathbf{c}_k(t)$ with half-extents $\mathbf{h}_k = (h_k^x,\, h_k^y,\, h_k^z) \in \mathbb{R}_{>0}^3$:
\[
    \mathcal{O}_k(t) \;\subseteq\; \bigl\{\mathbf{x} \in \mathbb{R}^3 : |x_i - c_k^i(t)| \le h_k^i, \;\; i \in \{x,y,z\}\bigr\}.
\]

In dynamic environments, safety corridors must account for both spatial constraints and how obstacles move over time.
Traditional corridor generation methods, such as~\cite{liu2017planning,tordesillas2022faster,deits2014iris}, produce purely spatial polytopes along a path, which is sufficient for static environments but does not capture how free space changes as obstacles move.
This requires corridors that account for both the spatial location and the time at which each region is safe.
To this end, we introduce \emph{STSFCs}, time-varying polytope sequences where each time layer represents the collision-free region at a given duration in the trajectory's time horizon.
This section describes the structure of STSFCs, the time allocation strategy, and the obstacle inflation method used to ensure safety guarantees.
The global path input used during corridor generation is described in Section~\ref{subsec:global_planner_heatmap}.

\subsection{Temporal Corridor Structure}

An STSFC is represented as a two-dimensional array $\mathcal{C}[n][p]$ where:
\begin{itemize}
    \item $n \in \{0, \ldots, N-1\}$ indexes the \emph{time layer}, representing discrete time intervals along the trajectory,
    \item $p \in \{0, \ldots, P-1\}$ indexes the \emph{spatial polytope} within each time layer, representing free-space regions at that time.
\end{itemize}

Each element $\mathcal{C}[n][p]$ is a convex polytope defined by linear constraints $\{\mathbf{F}_{np}, \mathbf{g}_{np}\}$ such that a point $\mathbf{x} \in \mathbb{R}^3$ is inside the polytope if $\mathbf{F}_{np}\mathbf{x} \le \mathbf{g}_{np}$.
Each time layer $n$ corresponds to the time interval $[t_0 + n \cdot dt,\; t_0 + (n{+}1) \cdot dt]$, matching the time interval of trajectory piece $n$.
This structure allows the trajectory optimizer to select different spatial corridors at different times, adapting to the evolving obstacle configuration.
Note that we generate a polytope for every combination of time layer $n$ and path segment $p$, even though some assignments may seem unlikely (\eg the last time layer in the first path segment, or the first time layer in the last path segment).
This is necessary because the MIQP solver determines the piece-to-polytope assignment, and the assignment of pieces is not known a priori.
Moreover, as reported in Tables~\ref{tab:unknown_dynamic_benchmark} and~\ref{tab:hw_dynamic_round2}, the total STSFC generation time for 10--15 polytopes is only a few milliseconds, so generating polytopes for all combinations adds negligible computational cost.

\subsection{Obstacle Inflation by Reachable Radius}

For each dynamic obstacle $k$ with estimated position $\hat{\mathbf{c}}_k(t_0) \in \mathbb{R}^3$ and each time layer $n$, we compute the worst-case reachable position set by inflating the obstacle's AABB half-extents by the per-layer reachable radius:
\[
r_n = v^{\mathrm{obs}}_{\max} \cdot (n{+}1) \cdot dt + \epsilon,
\]
where $(n{+}1) \cdot dt$ is the end time of layer $n$ (we use the end time rather than the midpoint to ensure that the inflation covers the worst-case obstacle displacement over the \emph{entire} layer), $v^{\mathrm{obs}}_{\max}$ is the maximum obstacle velocity, and $\epsilon \ge 0$ is a bound on the per-axis position estimation error from the obstacle tracker (Section~\ref{sec:obstacle_tracking}).
Because $r_n$ grows with $n$, obstacles are inflated more in later time layers, reflecting the increasing uncertainty in obstacle position over time.

The inflated obstacle region for time layer $n$ is the Minkowski sum of the obstacle's estimated AABB with an axis-aligned cube of half-side $r_n$:
\[
\hat{\mathcal{O}}_k^n \coloneqq \bigl\{\mathbf{x} \in \mathbb{R}^3 : |x_i - \hat{c}_k^i(t_0)| \le h_k^i + r_n, \;\; i \in \{x,y,z\}\bigr\},
\]
yielding an enlarged AABB with half-extents $\mathbf{h}_k + r_n \mathbf{1}$ centered at $\hat{\mathbf{c}}_k(t_0)$.
In addition to $r_n$, a fixed safety margin $r_{\text{margin}}$ (see Table~\ref{tab:experiment_parameters}) is added to the AABB half-extents $\mathbf{h}_k$ during obstacle detection (Section~\ref{sec:obstacle_tracking}), providing an additional buffer that absorbs position estimation error and controller tracking error beyond the reachable-set inflation.
In practice, this margin allows setting $\epsilon = 0$ in $r_n$ while still maintaining robustness to estimation and tracking errors, since $r_{\text{margin}}$ serves the same role as a nonzero $\epsilon$ (see Section~\ref{subsec:theoretical_safety_guarantees}).
The inflated AABB voxels are then used to generate STSFCs for time layer $n$.

\subsection{Unknown Space Inflation}\label{subsubsec:unknown_space_inflation}

\begin{figure}
    \centering
    \includegraphics[width=\columnwidth]{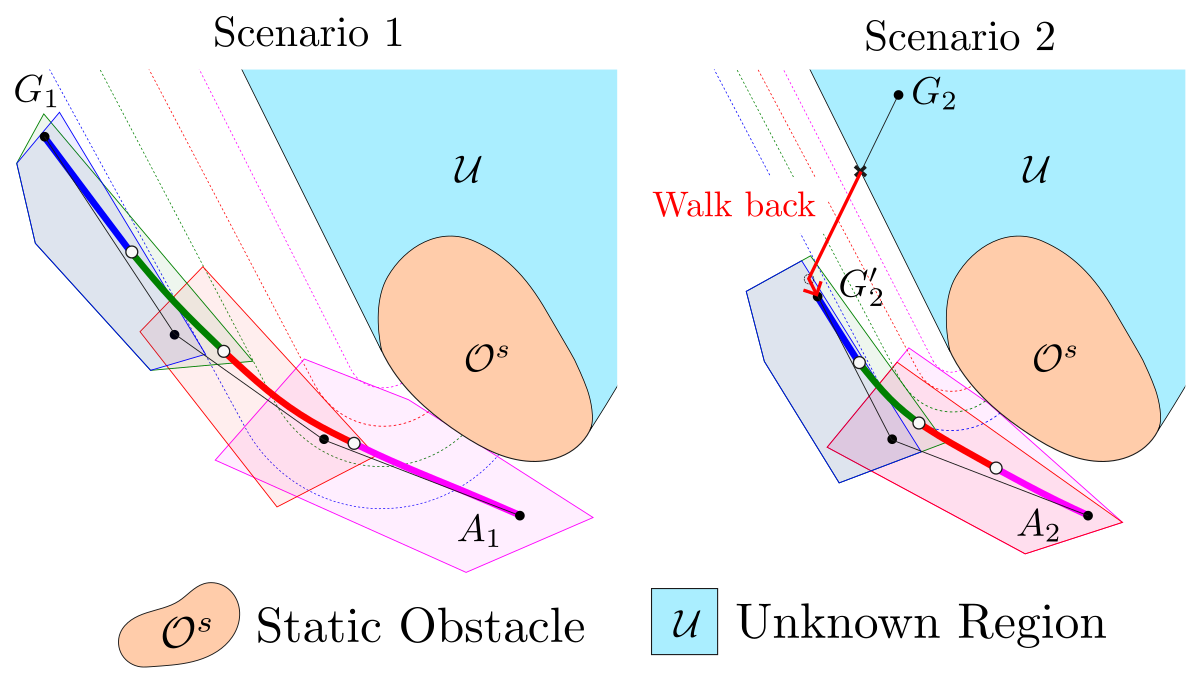}
    \caption{Unknown-space inflation for STSFC corridor generation.
    Boundary voxels of unknown regions are inflated using the same per-layer reachable radius $r_n$ as dynamic obstacles.
    \textbf{Scenario~1:} The goal $G_1$ lies outside the unknown region. The trajectory from $A_1$ to $G_1$ stays outside the inflated boundary, guaranteeing safety against unobserved dynamic obstacles.
    \textbf{Scenario~2:} The goal $G_2$ lies inside the unknown region. SANDO walks back along the global path from the unknown boundary until a point outside the worst-case inflated region is found, yielding a new subgoal $G_2'$. Since the trajectory to $G_2'$ is shorter, the per-layer $r_n$ values are smaller compared to Scenario~1, producing less conservative inflation and larger corridors.
    For simplicity, the figure does not show the inflation on the right side of the unknown region, which is also performed in practice. The same inflation process is applied to all unknown boundaries, ensuring safety regardless of the global path's direction.
    \label{fig:unknown_inflation}}
    \vspace{-1em}
\end{figure}

SANDO maintains a voxel map that is built incrementally from sensor observations: each voxel is classified as free, occupied, or unknown, where unknown voxels are those that have not yet been observed by the sensor.
In such environments, dynamic obstacles may emerge from unknown regions at any time.
If these regions are treated as free during corridor generation, the resulting corridors may overlap with space that is actually occupied, violating safety.
To address this, SANDO inflates the boundaries of unknown (unobserved) voxels, treating them as occupied during the ellipsoid-based corridor decomposition.

Specifically, let $\mathcal{U}(t_0) \subset \mathbb{R}^3$ denote the set of unknown voxels at planning time $t_0$, and let $\mathcal{B}_{\mathcal{U}}(t_0)$ denote its boundary voxels.
Each boundary voxel is inflated by the same per-layer radius $r_n$ used for dynamic obstacles.
The inflated unknown region for time layer $n$ is:
\[
    \hat{\mathcal{U}}^n \coloneqq \mathcal{U}(t_0) \;\cup\; \bigl(\mathcal{B}_{\mathcal{U}}(t_0) \oplus B_\infty(r_n)\bigr),
\]
where $B_\infty(r_n)$ is the $L_\infty$-ball of radius $r_n$ and $\oplus$ denotes the Minkowski sum.
The inflated unknown voxels are included in the obstacle set used for corridor generation, ensuring that any dynamic obstacles that could emerge from these regions are accounted for in the resulting polytopes $\mathcal{C}[n][p]$ (see Fig.~\ref{fig:unknown_inflation}).
When the goal lies outside the unknown region (Scenario~1 in Fig.~\ref{fig:unknown_inflation}), the trajectory stays entirely in observed space and the inflated boundary provides a safety buffer against unobserved obstacles.

\paragraph{Safe subgoal selection}
When the global path enters the unknown region, corridors cannot be generated beyond the unknown boundary.
To handle this, SANDO finds the first point where the global path intersects the unknown boundary and walks back along the path until it reaches a point outside the worst-case inflated unknown region, yielding a new subgoal $G'$ (Scenario~2 in Fig.~\ref{fig:unknown_inflation}).
The worst-case inflation is computed from the planning horizon, the agent's dynamic constraints, and the maximum time allocation factor (see Section~\ref{subsec:time_allocation}) in a given replanning cycle.
Since the trajectory to $G'$ is shorter than to the original goal, each per-layer $r_n$ is smaller, producing less conservative inflation and larger corridors.
Note that this is a heuristic: if the global path runs close to the unknown boundary, the walk-back may not find a subgoal fully outside the worst-case inflated region.
In practice, however, the reduced trajectory duration sufficiently shrinks $r_n$ to yield feasible corridors.

The resulting STSFCs are used as constraints in the MIQP trajectory optimization described in Section~\ref{sec:trajectory_optimization}.
System-level parameters, including $r_{\text{drone}}$, $v^{\mathrm{obs}}_{\max}$, and the replanning rate, are configured per-environment and listed in Section~\ref{sec:simulation-results}.

\section{Heat Map-Based Global Planner}\label{subsec:global_planner_heatmap}

\begin{figure*}
    \centering
    \includegraphics[width=\textwidth]{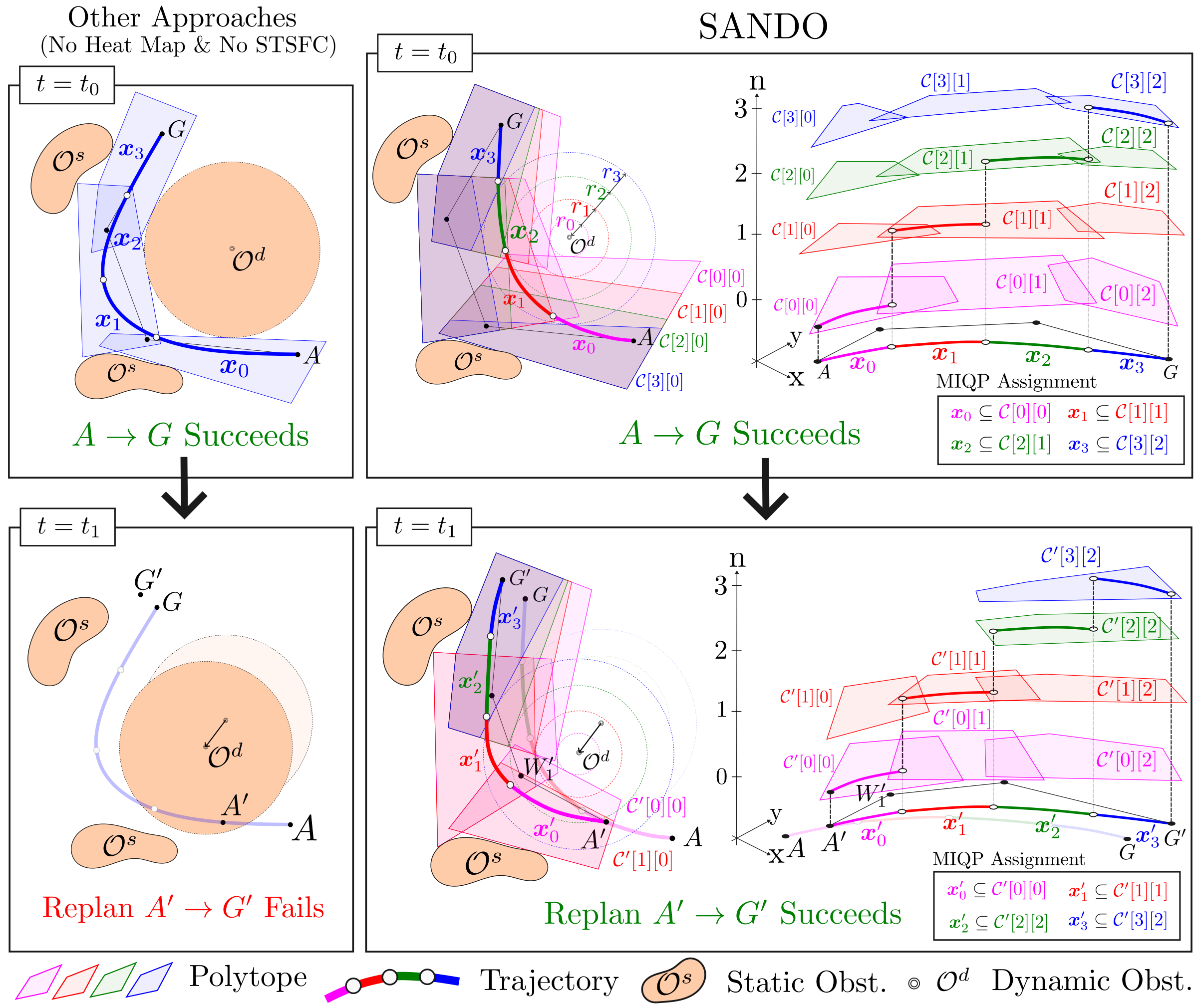}
    \caption{STSFC corridors and MIQP trajectory optimization in dynamic environments. For visualization, the figure shows a 2D ($x$-$y$) case with the time-layer axis as the third dimension; in practice, SANDO uses full 3D ($x$, $y$, $z$) STSFCs.
    \textbf{Left column, other approaches (hard-blocking reachable sets):}
    At $t=t_0$ (top left), the planner hard-blocks the worst-case reachable set of $\mathcal{O}^d$ as occupied and computes a conservative global path $A \to G$ that avoids the entire blocked region.
    Although planning succeeds, the resulting trajectory is overly conservative.
    At $t=t_1$ (bottom left), the agent replans from the new position $A'$ toward $G'$, but $A'$ now falls inside the shifted blocked region, making the query $A' \to G'$ infeasible.
    \textbf{Right column, SANDO (heat map-based soft costs):}
    At $t=t_0$, the 2D view (top center) shows the global path $A \to G$ steered away from $\mathcal{O}^d$ by soft heat costs, producing a less conservative trajectory.
    The temporal view (top right) visualizes the STSFC corridors $\mathcal{C}[n][p]$ along the time-layer axis, with increasing per-layer inflation radii $r_n$ (see Section~\ref{subsec:temporal_safety_corridor}); the MIQP assigns each trajectory piece to a corridor ($\boldsymbol{x}_0 \subseteq \mathcal{C}[0][0]$, $\boldsymbol{x}_1 \subseteq \mathcal{C}[1][1]$, $\boldsymbol{x}_2 \subseteq \mathcal{C}[2][1]$, and $\boldsymbol{x}_3 \subseteq \mathcal{C}[3][2]$).
    At $t=t_1$, the 2D view (bottom center) and temporal view (bottom right) show successful replanning from $A'$ to $G'$: the updated corridors $\mathcal{C}'[n][p]$ remain valid and the MIQP assignment ($\boldsymbol{x}'_0 \subseteq \mathcal{C}'[0][0]$, $\boldsymbol{x}'_1 \subseteq \mathcal{C}'[1][1]$, $\boldsymbol{x}'_2 \subseteq \mathcal{C}'[2][2]$, and $\boldsymbol{x}'_3 \subseteq \mathcal{C}'[3][2]$) succeeds, since soft costs never blocked the region around $A'$.
    Even though in the first global path section ($A' \to W_1$), the corridor generation for $n=2$ (green) and $n=3$ (blue) fails to produce corridors, since $n=0$ (pink) and $n=1$ (red) still have valid corridors, the MIQP can find a feasible trajectory that safely navigates through the dynamic environment.
    (The dynamic obstacle is depicted as a point mass for visual clarity; in practice, obstacles have finite AABB extents $\mathbf{h}_k$, and the reachable set is the Minkowski sum of the AABB with a cube of half-side $r_n$.)\label{fig:safe_corridor_in_dynamic_environments}}
    \vspace{-1em}
\end{figure*}

As described in Section~\ref{subsec:temporal_safety_corridor}, STSFC corridors shrink as dynamic obstacles are inflated by their worst-case reachable sets.
If the global path passes close to dynamic obstacles, the resulting corridors may become too small for the trajectory optimizer to find a feasible solution.
A naive approach would be to hard-block the entire worst-case reachable set as occupied in the planner's occupancy grid.
However, this creates a critical replanning failure mode: at the next replanning step, the agent's current position may fall inside the previously blocked region, since the obstacle has moved and the reachable set has shifted, making the new planning query infeasible (see \textit{No Heat Map \& No STSFC} approach at $t=t_1$ in Fig.~\ref{fig:safe_corridor_in_dynamic_environments}).

To avoid this, SANDO employs a heat map-based A$^\ast$ planner that uses soft costs rather than hard occupancy constraints for dynamic obstacles.
The heat map proactively steers the global path away from regions where corridors will shrink, without blocking those regions entirely.
This ensures that the global path search always has a feasible start configuration, while still biasing the path toward regions with larger corridors.
Static and dynamic obstacles are both incorporated through heat maps, and only known static obstacles and the current location of dynamic obstacles enforce hard infeasibility.
These occupied voxels are inflated by the drone radius $r_{\text{drone}}$ (the circumscribed radius of the vehicle) to account for the agent's physical extent.
Unknown space is treated as free in the global planner to allow the agent to plan paths toward unexplored regions.
In contrast, during STSFC generation, unknown space is treated as occupied, and when unknown-space inflation is enabled, the boundaries of unknown regions are further inflated by the per-layer reachable radius $r_n$ so that the resulting corridors account for obstacles that may emerge from unobserved space (see Section~\ref{subsubsec:unknown_space_inflation}).

\subsection{Static Obstacle Heat Map}

We define a static heat map $H^{s} : \mathbf{q} \in \mathbb{R}^3 \to \mathbb{R}_{\ge 0}$ using a distance-based cost that decreases with distance from obstacle surfaces.
Only boundary voxels (surface voxels of obstacles) serve as heat sources.
For each boundary voxel $b$ with center $\mathbf{c}_b$ and halo radius $R^s$ (the maximum distance at which the voxel influences the cost), the heat contribution at a query point $\mathbf{q} \in \mathbb{R}^3$ is:
\[H^{s}_b(\mathbf{q}) =
\begin{cases}
\alpha^s \left(1 - \dfrac{\|\mathbf{q} - \mathbf{c}_b\|}{R^s}\right)^{p^s}, & \|\mathbf{q} - \mathbf{c}_b\| \le R^s, \\[6pt]
0, & \text{otherwise},
\end{cases}\]
where $\alpha^s$ is the intensity scale and $p^s$ controls the decay shape.
The aggregate static heat is $H^{s}(\mathbf{q}) = \min(\max_b H^{s}_b(\mathbf{q}),\, H_{\max})$, using max aggregation to avoid artificially inflating costs in dense obstacle regions, capped at $H_{\max}$ (the maximum allowable heat value).
See Table~\ref{tab:system_parameters} for the static heat parameters used in our experiments.

\subsection{Dynamic Obstacle Heat Map}

For each tracked dynamic obstacle $k$ with estimated position $\hat{\mathbf{c}}_k$, AABB half-extents $\mathbf{h}_k$, and predicted trajectory $\mu_k(t)$ from the obstacle tracker (Section~\ref{sec:obstacle_tracking}) over a prediction time horizon $T_h$, we construct a per-obstacle heat $H_k(\mathbf{q})$ combining a penalty around the current position and a penalty around the predicted trajectory tube.

\paragraph{Base heat around the current position}
We define the base obstacle radius as $R_{0,k} \coloneqq \max_i h_k^i + r_{\text{margin}}$, where $r_{\text{margin}}$ is a fixed safety margin (see Table~\ref{tab:experiment_parameters}), and the horizon-limited reachable radius as $R^d_k \coloneqq R_{0,k} + v^{\mathrm{obs}}_{\max} T_h$.
The base heat is:
\[H^{\mathrm{base}}_k(\mathbf{q}) \;=\;
\begin{cases}
\alpha^d_0\,\left(1-\dfrac{\|\mathbf{q}-\hat{\mathbf{c}}_k\|}{R^d_k}\right)^{p^d}, & \|\mathbf{q}-\hat{\mathbf{c}}_k\| \le R^d_k, \\[6pt]
0, & \text{otherwise},
\end{cases}\]
where $\alpha^d_0$ is the base cost scale and $p^d$ controls the decay shape.

\paragraph{Tube penalty from predicted motion}
We discretize the horizon into $M_{\text{tube}}$ samples $\{t_j\}_{j=0}^{M_{\text{tube}}-1}$ with predicted centers $\hat{\mathbf{c}}_{k,j}=\mu_k(t_j)$.
The tube radius $R_{k,j} \coloneqq R_{0,k} + \gamma_k t_j$ grows with prediction time to account for increasing uncertainty, where $\gamma_k$ is the per-obstacle uncertainty growth rate.
The tube penalty is:
\begin{multline*}
\!\! H^{\mathrm{tube}}_k\! (\mathbf{q}) \! =\! \alpha^d_1 \max_{j} \Biggl(\! w(t_j)
\! \left[\! \max\!\left(\! 0,\,1\! -\! \dfrac{\|\mathbf{q}-\hat{\mathbf{c}}_{k,j}\|}{R_{k,j}}\right)\right]^{q^d}\Biggr),
\end{multline*}
where $w(t)=\exp(-t/\tau^d)$ with $\tau^d \coloneqq \tau^d_{\text{ratio}} \cdot T_h$ is an exponential time weight that discounts predictions further into the future (since they are less reliable), $\tau^d_{\text{ratio}}$ is a scaling factor that sets the decay time constant as a fraction of the horizon, $\alpha^d_1$ scales the tube penalty, and $q^d$ controls how sharply the cost increases near the predicted path.
The tube radius $R_{k,j}$ grows with time while the time weight $w(t_j)$ decays: near-future predictions receive high weight over a small region, whereas far-future predictions receive low weight over a larger region.
This is a heuristic that balances spatial coverage against prediction reliability, and we found empirically that it produces effective obstacle avoidance paths across a wide range of environments.
The total per-obstacle heat combines both components: $H_k(\mathbf{q}) = H^{\mathrm{base}}_k(\mathbf{q}) + H^{\mathrm{tube}}_k(\mathbf{q})$, where $H^{\mathrm{base}}_k$ penalizes proximity to the obstacle's current position and $H^{\mathrm{tube}}_k$ penalizes proximity to its predicted future path.
The dynamic heat across all obstacles is $H^{d}(\mathbf{q}) = \max_k H_k(\mathbf{q})$, using max aggregation so that the most dangerous obstacle dominates the cost.

\subsection{Combined Heat Formulation}

\begin{figure}
    \centering
    \includegraphics[width=\columnwidth]{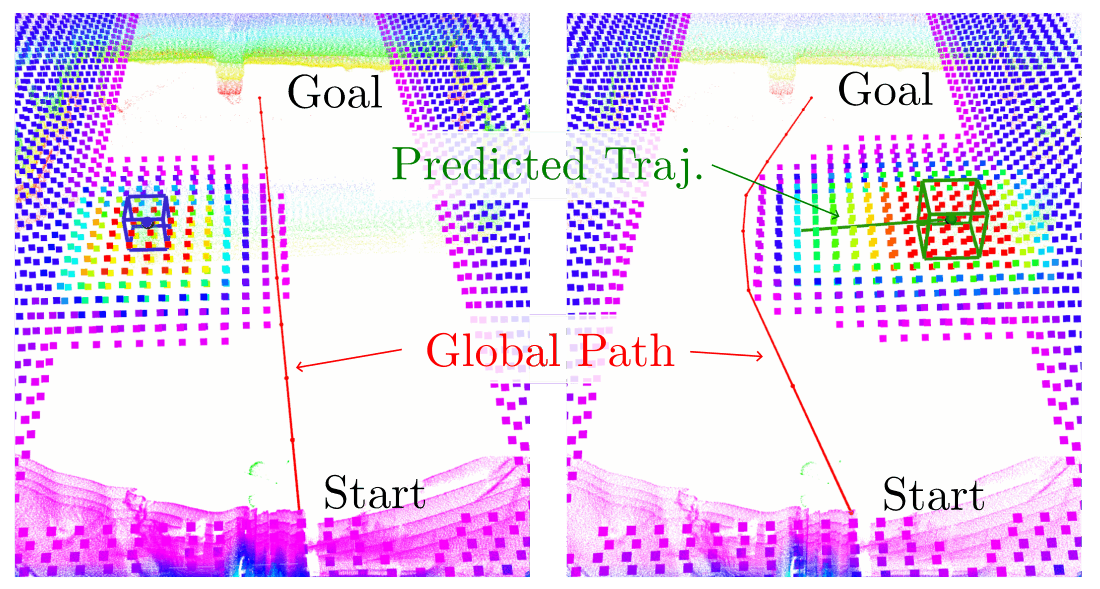}
    \caption{Visualization of the heat map-based global planner in hardware. Static and dynamic obstacle heat maps are combined via max aggregation, and the A$^\ast$ planner computes a global path that avoids high-cost regions while maintaining feasibility.
    From the wall on the side, the static heat decays with distance, while from the dynamic obstacle (blue and green boxes), the heat decays both spatially and temporally.
    On the left, the dynamic obstacle is not moving, so the heat is dominated by the base heat around its current position.
    On the right, the dynamic obstacle is moving toward the center of the room (the prediction is visualized as the green line), so the tube penalty creates a heat trail along the predicted path, steering the global path away from that region.
    \label{fig:heatmap_visualization}}
    \vspace{-1em}
\end{figure}

Fig.~\ref{fig:heatmap_visualization} illustrates the combined heat map in a hardware experiment.
The combined heat map merges the static and dynamic components via max aggregation:
\[H(\mathbf{q}) \;=\; \min\!\left(\max\!\left(H^{s}(\mathbf{q}),\, H^{d}(\mathbf{q})\right),\; H_{\max}\right),\]
where $H_{\max}$ caps the total heat.
Max aggregation avoids double-penalization and ensures the highest risk dominates.

\subsection{A$^\ast$ with Hard Occupancy and Soft Cost Penalties}

Let $\mathcal{G} = (\mathcal{V}, \mathcal{E})$ denote the 26-connected voxel graph (\ie each voxel is connected to all $3^3 - 1 = 26$ neighbors in its $3 \times 3 \times 3$ neighborhood), where $\mathcal{V}$ is the set of free voxel centers and $\mathcal{E}$ contains edges between neighboring free voxels.
For an edge $(\mathbf{q}_i, \mathbf{q}_{i+1}) \in \mathcal{E}$, the edge cost is:
\[c(\mathbf{q}_i,\mathbf{q}_{i+1}) \;=\; d(\mathbf{q}_i,\mathbf{q}_{i+1}) \;+\; w_{\text{heat}}\, H(\mathbf{q}_{i+1}),\]
where $d(\mathbf{q}_i,\mathbf{q}_{i+1})$ is the Euclidean distance between neighboring voxel centers and $w_{\text{heat}} > 0$ is a tunable weight that balances the heat penalty against path length.
A$^\ast$ searches for the minimum-cost path from start $\mathbf{q}_s$ to goal $\mathbf{q}_g$ using the goal-directed heuristic $h(\mathbf{q}) = \|\mathbf{q} - \mathbf{q}_g\|$, yielding a global path in known-free space that avoids regions near both static and dynamic obstacles.

\paragraph{Heat map parameter selection}
The heat map parameters (listed in Table~\ref{tab:system_parameters}) influence only the global path quality, not the safety guarantee, which is enforced by the hard STSFC constraints in the trajectory optimizer.
A path that passes too close to obstacles may result in small corridors and reduced optimization feasibility, while overly aggressive heat penalties can produce unnecessarily long detours.
In practice, we found the same parameter set (Table~\ref{tab:system_parameters}) to be effective across all simulation environments without per-scenario tuning.
For the hardware experiments, we increased $w_{\text{heat}}$ from 5.0 to 20.0 in the five-obstacle configuration (Experiments~11--16) to account for the smaller physical environment ($8 \times 20$\,m vs.\ $100 \times 40$\,m in simulation), where obstacles occupy a proportionally larger fraction of the space and stronger steering is needed to maintain corridor feasibility.

\section{Trajectory Optimization}\label{sec:trajectory_optimization}

Building on the MIQP framework for static environments~\cite{deits2015miqp, mellinger2012miqp,tordesillas2022faster}, SANDO optimizes position trajectories using hard-constraint Mixed-Integer Quadratic Programming (MIQP) extended to dynamic settings with time-varying corridors.
Although MIQP introduces binary variables for piece-polytope assignment and increases computational cost relative to soft-constraint methods, it guarantees collision-free trajectories.
To reduce complexity, we leverage a variable elimination technique~\cite{nocedal2006numerical} that reduces the number of decision variables by symbolically solving the linear equality constraints, similar to the closed-form reductions in~\cite{richter2016polynomial} but extended to the MIQP setting (see Section~\ref{subsubsec:variable_elimination} for details).
SANDO also incorporates a parallelized time allocation strategy that launches multiple MIQP instances with different time allocations concurrently, selecting the best solution that satisfies all constraints (see Section~\ref{subsec:time_allocation} for details).

\subsection{Trajectory Optimization}\label{subsec:trajectory_optimization_for_position}

We model the agent with triple integrator dynamics and state vector $\mathbf{x}^T = \left[\vect{x}^T ~\vect{v}^T ~\vect{a}^T\right]$, where $\vect{x}$, $\vect{v}$, and $\vect{a}$ denote position, velocity, and acceleration, respectively.

We formulate trajectory optimization using an $N$-piece composite B\'ezier curve with $P$ spatial polytopes per time layer.
As described in Section~\ref{subsec:temporal_safety_corridor}, STSFCs in dynamic environments are time-layered, with polytopes indexed by both time layer and spatial location.
We reuse $n\in\{0:N-1\}$ and $p\in\{0{:}P-1\}$ from Section~\ref{subsec:temporal_safety_corridor}: since each trajectory piece $n$ executes during time layer $n$ of the STSFC, the same index identifies both the piece and its corresponding time layer. 
Similarly, $p$ indexes the spatial polytope within that layer. The time interval $dt$ per piece is uniform and matches the STSFC layer duration.
Fig.~\ref{fig:safe_corridor_in_dynamic_environments} illustrates this process: the MIQP assigns each trajectory piece to a spatial polytope within its time layer (\eg $\boldsymbol{x}_0 \subseteq \mathcal{C}[0][0]$, $\boldsymbol{x}_1 \subseteq \mathcal{C}[1][1]$), ensuring the trajectory remains in free space both spatially and temporally.

The control input, jerk, remains constant within each piece, allowing the position trajectory of each piece to be represented as a cubic polynomial.
Note that this implies jerk is discontinuous at piece boundaries; however, cubic polynomial representations are widely adopted in real-time planning~\cite{tordesillas2022faster,zhou2021ego-planner, zhou2021raptor}:
\begin{equation}\label{eq:cubic_spline}
    \boldsymbol{x}_{n}(\tau)=\boldsymbol{a}_{n}\tau^{3}+\boldsymbol{b}_{n}\tau^{2}+\boldsymbol{c}_{n}\tau+\boldsymbol{d}_{n},\; \; \tau\in[0,dt]
\end{equation}
where $\boldsymbol{a}_{n}, \boldsymbol{b}_{n}, \boldsymbol{c}_{n}, \boldsymbol{d}_{n} \in \mathbb{R}^3$ are the coefficients of the cubic spline in piece $n$.

We now discuss the constraints for the optimization formulation.
Continuity constraints are added between adjacent pieces to ensure the trajectory is continuous in position, velocity, and acceleration:
\begin{equation}\label{eq:continuity_constraints}
    \mathbf{x}_{n+1}(0)=\mathbf{x}_{n}(dt) \quad \text{for} \quad n\in\{0{:}N-2\}
\end{equation}
The B\'ezier curve control points $\boldsymbol{p}_{nj}$ $(j \in \{0{:}3\})$ associated with each piece $n$ are:
\begin{align}
	&\boldsymbol{p}_{n0}=\boldsymbol{d}_{n},
	\quad \boldsymbol{p}_{n1}=\frac{\boldsymbol{c}_{n}dt+3\boldsymbol{d}_{n}}{3} \nonumber\\
	&\boldsymbol{p}_{n2}=\frac{\boldsymbol{b}_{n}dt^{2}+2\boldsymbol{c}_{n}dt+3\boldsymbol{d}_{n}}{3} \\
	&\boldsymbol{p}_{n3}=\boldsymbol{a}_{n}dt^{3}+\boldsymbol{b}_{n}dt^{2}+\boldsymbol{c}_{n}dt+\boldsymbol{d}_{n} \nonumber
\end{align}
Since a B\'ezier curve lies within the convex hull of its control points~\cite{farin2002curves}, constraining all control points to lie inside a convex polytope guarantees that the entire piece remains inside that polytope.
To assign pieces to polytopes, we introduce binary variables $z_{np}$, where $z_{np}=1$ if piece $n$ is assigned to polytope $p$, and $z_{np}=0$ otherwise.
This condition is enforced through the following constraint:
\begin{align}
    & z_{np}=1\implies \mathbf{F}_{np}\boldsymbol{p}_{nj}\leq\mathbf{g}_{np}, \, \text{for} \quad j\in\{0{:}3\}, \, \forall n, \forall p
    \label{eq:polytope_constraints_MIQP}
\end{align}
where polytopes are time-layered as described in Section~\ref{subsec:temporal_safety_corridor}, with $(\mathbf{F}_{np}, \mathbf{g}_{np})$ denoting the polytope at time layer corresponding to piece $n$ and spatial index $p$.
In dynamic environments, the polytope constraints $\mathbf{F}_{np}$ and $\mathbf{g}_{np}$ vary with both the trajectory piece $n$ (time) and spatial polytope index $p$, reflecting the temporal evolution of free space as obstacles move.
Each piece must be assigned to at least one polytope, which is ensured by the constraint:
\begin{align}
	& \sum_{p=0}^{P-1}z_{np}\ge1, \quad \forall n
    \label{eq:polytope_assignment_constraints}
\end{align}
To ensure the trajectory starts at the initial state and ends at the final state, we impose the following constraints:
\begin{align} \label{eq:initial_and_final_state}
    \mathbf{x}_{0}(0) &= \mathbf{x}_{\text{init}}, \qquad \mathbf{x}_{N-1}(dt) = \mathbf{x}_{\text{final}}
\end{align}
where $\mathbf{x}_{\text{init}}$ is the initial state, and $\mathbf{x}_{\text{final}}$ is the final state.
The final state is set to the $(P{+}1)$-th waypoint on the global path with zero velocity and zero acceleration, so that the trajectory spans exactly $P$ path segments and stops at the $(P{+}1)$-th waypoint.
Since SANDO replans in a receding horizon manner, only the initial portion of the trajectory is typically executed before a new plan is computed; the agent does not necessarily reach the final stop state of each optimized trajectory.

For dynamic constraints, we define the velocity control points $\boldsymbol{v}_{nj}$ ($j \in \{0{:}2\}$), acceleration control points $\boldsymbol{a}_{nj}$ ($j \in \{0{:}1\}$), and jerk $\boldsymbol{j}_{n}$ for each piece $n$.
By the convex hull property of B\'ezier curves, bounding these control points guarantees that the continuous velocity, acceleration, and jerk remain within limits throughout each piece:
\begin{align}
	\left\Vert \boldsymbol{v}_{nj} \right\Vert_{\infty} &\le v_{\text{max}}, \quad j \in \{0{:}2\} \nonumber\\
	\left\Vert \boldsymbol{a}_{nj} \right\Vert_{\infty} &\le a_{\text{max}}, \quad j \in \{0{:}1\} \label{eq:velocity_accel_jerk_constraints} \\
	\left\Vert \boldsymbol{j}_{n} \right\Vert_{\infty} &\le j_{\text{max}} \nonumber
\end{align}
where $v_{\text{max}}$, $a_{\text{max}}$, and $j_{\text{max}}$ denote the maximum allowable velocity, acceleration, and jerk, respectively.
The objective function penalizes the squared jerk along the trajectory for smooth motion:
\[
    J = \sum_{n=0}^{N-1} \left\| \boldsymbol{j}_n \right\|^2.
\]

The complete MIQP problem is then formulated as:
\begin{equation}\label{eq:MIQP_formulation}
\begin{split}
    \min_{\boldsymbol{a}_n, \boldsymbol{b}_n, \boldsymbol{c}_n, \boldsymbol{d}_n, z_{np}} \quad & J \\
    \text{s.t.} \quad \quad \quad \ & \text{Eqs.~\eqref{eq:continuity_constraints}, \eqref{eq:polytope_constraints_MIQP}, \eqref{eq:polytope_assignment_constraints}, \eqref{eq:initial_and_final_state}, and \eqref{eq:velocity_accel_jerk_constraints}}
\end{split}
\end{equation}

\subsubsection{Variable Elimination}\label{subsubsec:variable_elimination}

In the MIQP formulation of Eq.~\eqref{eq:MIQP_formulation}, each piece $n$ introduces four coefficient vectors ($\boldsymbol{a}_n$, $\boldsymbol{b}_n$, $\boldsymbol{c}_n$, $\boldsymbol{d}_n$), giving $4N$ decision variables per axis ($x$, $y$, $z$), \ie $12N$ in 3D.
However, the continuity constraints (Eq.~\eqref{eq:continuity_constraints}) and boundary conditions (Eq.~\eqref{eq:initial_and_final_state}) impose $3N+3$ equality constraints per axis: $3(N{-}1)$ from position, velocity, and acceleration continuity at the $N{-}1$ interior piece boundaries, plus $6$ from the initial and final boundary conditions (position, velocity, and acceleration at start and end).
By symbolically solving these equality constraints, we can express most coefficients as affine functions of a small set of remaining variables, which (1) reduces the number of decision variables to $N-3$ per axis and (2) removes all equality constraints from the optimization.

For instance, with $N = 4$ pieces there are $4N = 16$ variables and $3N+3 = 15$ equality constraints per axis, leaving only one decision variable per axis.
Symbolic elimination reveals this remaining variable to be $\boldsymbol{d}_3$ (the first control point of the final piece); all other coefficients become affine functions of $\boldsymbol{d}_3$.
In general, the number of remaining decision variables per axis is $4N - (3N+3) = N-3$; for example, $N=5$ yields 2 and $N=6$ yields 3 per axis, with the same elimination procedure applied.
However, the symbolic expressions grow combinatorially with $N$: each remaining variable's affine coefficients depend on all boundary conditions and continuity relations, so the closed-form expressions become prohibitively large for $N > 7$.
In our implementation, we precompute the symbolic elimination offline for $N \in \{4, 5, 6, 7\}$, which covers the operating range used in all experiments.

This leads to a revised MIQP with many fewer decision variables and no equality constraints:
\[\begin{aligned}
    \min_{\boldsymbol{d}_3,\, z_{np}} \quad & J \\
    \text{s.t.} \quad \,\, & \text{Eqs.~\eqref{eq:polytope_constraints_MIQP}, \eqref{eq:polytope_assignment_constraints}, and \eqref{eq:velocity_accel_jerk_constraints}}.
\end{aligned}\]
This MIQP is solved using Gurobi~\cite{gurobi}.
Section~\ref{sec:variable-elimination-benchmarking} performs an ablation study to evaluate the computational benefits of this variable elimination technique, and it demonstrates up to $7.4\times$ reduction in optimization time.

\subsubsection{Time Allocation and Parallelization}\label{subsec:time_allocation}

The time allocated to each trajectory piece strongly affects feasibility and optimality: too short and the dynamical limits are violated, too long and the trajectory is overly slow.
Following~\cite{tordesillas2022faster}, SANDO computes a baseline time per piece $dt_0$ from the per-axis minimum-time solutions under velocity, acceleration, and jerk limits, and scales it by a factor $f \ge 1$ to obtain the actual time per piece $dt = f \cdot dt_0$.

To search over time allocations efficiently, SANDO maintains a sliding window of $M$ candidate factors $\{f_1, \ldots, f_M\}$ spanning a range of width $2\kappa$ centered on a mean value, with uniform step size $\Delta f$, where $M = \lfloor 2\kappa / \Delta f \rfloor + 1$.
At each replanning iteration, $M$ MIQP solver threads are launched in parallel, one per factor, each with $dt = f_i \cdot dt_0$.
Since each factor yields a different $dt$, a separate STSFC is generated per thread to reflect the corresponding obstacle inflation radii; the per-thread STSFC computation cost is reported in Sections~\ref{sec:simulation-results} and~\ref{sec:hardware-experiments}.
As soon as any thread finds a feasible solution, the remaining threads are terminated and that solution is used.

The factor window adapts for the next replanning cycle based on the outcome:
\begin{itemize}
    \item \textbf{Success:} The window is recentered so that the successful factor becomes the median, biasing the next iteration toward similar time allocations.
    \item \textbf{All fail:} The window is shifted upward by one step $\Delta f$, increasing the time allocation to improve feasibility. If the window reaches a configurable upper bound $f_{\max}$ (see Table~\ref{tab:experiment_parameters}), it resets to the initial position.
\end{itemize}

\section{Dynamic Obstacle Detection and Tracking}\label{sec:obstacle_tracking}

SANDO detects and tracks dynamic obstacles from raw point cloud data through a multi-stage pipeline.

\subsection{Detection via Temporal Occupancy Grid}
Inspired by Dynablox~\cite{schmid2023dynablox}, SANDO constructs a temporal occupancy grid to classify voxels as static or dynamic.
Voxels that remain occupied beyond a configurable duration are classified as static.
When a voxel transitions from free to occupied and has fewer than a threshold number of static neighbors, it is classified as dynamic, indicating a newly appearing moving object rather than part of an existing static structure.
Dynamic labels persist for a configurable duration to maintain temporal continuity during brief sensor occlusions.
False positives are mitigated by the static-neighbor threshold: voxels adjacent to many static voxels are not classified as dynamic, preventing edges of static structures from being misidentified as moving objects.

\subsection{Clustering and Data Association}
Dynamic voxels are grouped into clusters using Euclidean clustering.
For each cluster, the centroid and AABB half-extents are computed.
Since onboard sensors typically observe only one face of an obstacle, the AABB is inflated to a cubic shape using the largest observed dimension, providing a conservative size estimate for collision avoidance.
Clusters are associated with existing tracks via nearest-neighbor matching within a distance threshold; unmatched clusters initialize new tracks.
Nearest-neighbor association can produce incorrect matches when obstacle trajectories cross or obstacles are closely spaced; however, the tracker is a modular component and can be replaced with more sophisticated methods (\eg the Hungarian algorithm) without changing the planning framework.

\subsection{Adaptive Extended Kalman Filter (AEKF)}
Each tracked obstacle $k$ is modeled with a 9-state vector $\mathbf{x}_k = [\mathbf{p}_k^\top,\, \mathbf{v}_k^\top,\, \mathbf{a}_k^\top]^\top$, where $\mathbf{p}_k$, $\mathbf{v}_k$, and $\mathbf{a}_k$ denote the obstacle's position, velocity, and acceleration, using a constant-acceleration process model.
The measurement is the cluster centroid (position only).
To handle unknown and time-varying noise characteristics, we employ an AEKF~\cite{akhlaghi2017adaptive} that continuously updates both the measurement noise covariance $R_i$ and the process noise covariance $Q_i$ at each filter step $i$ using exponential forgetting with factor $\alpha$:
\begin{eqnarray}
    R_i &=& \alpha R_{i-1} + (1-\alpha)\,\epsilon_i \epsilon_i^\top, \\
    Q_i &=& \alpha Q_{i-1} + (1-\alpha)\,K_i d_i d_i^\top K_i^\top,
\end{eqnarray}
where $\epsilon_i$ is the residual, $d_i$ is the innovation, and $K_i$ is the Kalman gain.
When a new obstacle is detected, its initial $Q_0$ and $R_0$ are set to the average of the covariances from existing tracks, allowing the filter to use prior knowledge of the environment's noise characteristics.

\subsection{Prediction and Output}
Future positions are predicted using a constant-velocity model with the AEKF-estimated velocity.
Although the AEKF uses a constant-acceleration process model for state estimation, we use constant-velocity for prediction because acceleration estimates are noisy and change rapidly, making them unreliable over longer prediction horizons; constant-velocity extrapolation is more robust in practice.
For each tracked obstacle, the system publishes the predicted trajectory and AABB half-extents.
Tracks that are not updated for a configurable timeout are deleted, allowing the system to discard obstacles that have left the sensor's field of view or stopped moving.
The predicted trajectories and reachable sets are then used by the heat map-based global planner (Section~\ref{subsec:global_planner_heatmap}) and STSFC generator (Section~\ref{subsec:temporal_safety_corridor}).

\section{Safety Analysis}\label{subsec:theoretical_safety_guarantees}

SANDO provides formal collision-free guarantees through its STSFC framework.
Using the notation introduced in Section~\ref{subsec:temporal_safety_corridor}, we recall that the per-layer inflation radius is:
\begin{equation}\label{eq:safety_rn}
    r_n \coloneqq v^{\mathrm{obs}}_{\max} \cdot (n+1) \cdot dt \;+\; \epsilon,
\end{equation}
where the first term accounts for worst-case obstacle displacement from $t_0$ to $t_0 + (n{+}1) \cdot dt$, and $\epsilon$ accounts for the position estimation error (see Assumption~\ref{ass:estimation_error}).

\subsection{Assumptions}

\begin{assumption}[Bounded obstacle velocity]\label{ass:bounded_vel}
    There exists a known constant $v^{\mathrm{obs}}_{\max} > 0$ such that:
    \[
        \left\|\dot{\mathbf{c}}_k(t)\right\|_\infty \le v^{\mathrm{obs}}_{\max}, \quad \forall\, k,\; \forall\, t \ge 0.
    \]
    That is, each component of the obstacle's velocity is bounded: $|\dot{c}_k^i(t)| \le v^{\mathrm{obs}}_{\max}$ for $i \in \{x,y,z\}$.
\end{assumption}

\begin{assumption}[Bounded position estimation error]\label{ass:estimation_error}
    The obstacle tracker provides an estimated position $\hat{\mathbf{c}}_k(t_0)$ of each obstacle at the planning time $t_0$.
    There exists a known constant $\epsilon \ge 0$ such that:
    \[
        \left\|\mathbf{c}_k(t_0) - \hat{\mathbf{c}}_k(t_0)\right\|_\infty \le \epsilon, \quad \forall\, k.
    \]
    That is, the per-axis estimation error is bounded by $\epsilon$.
\end{assumption}

\begin{assumption}[Perfect trajectory tracking]\label{ass:perfect_tracking}
    The low-level controller tracks the planned trajectory exactly, \ie the executed position coincides with the planned position at all times.
\end{assumption}

In practice, Assumptions~\ref{ass:estimation_error} and~\ref{ass:perfect_tracking} are not satisfied exactly.
However, the safety margin $r_{\text{margin}}$ (see Table~\ref{tab:experiment_parameters}), which is added to each obstacle's AABB half-extents during detection, provides an additional buffer beyond the reachable-set inflation $r_n$.
This margin absorbs both position estimation errors and controller tracking errors: when $r_{\text{margin}}$ exceeds the combined worst-case estimation and tracking error, one can set $\epsilon = 0$ in Eq.~\eqref{eq:safety_rn} and still maintain the safety guarantee.
In all experiments, we set $\epsilon = 0$ and rely on $r_{\text{margin}}$ (0.1\,m in simulation, 0.2\,m in hardware) together with the drone radius $r_{\text{drone}}$ to absorb these errors.
The larger hardware margin accounts for the greater tracking errors observed on the physical platform.

\begin{assumption}[Untracked obstacles in unknown space]\label{ass:untracked_in_unknown}
    Let $\mathcal{U}(t_0) \subset \mathbb{R}^3$ denote the set of unobserved (unknown) voxels at the planning time $t_0$.
    Any dynamic obstacle that is not currently tracked by the obstacle tracker is located entirely within $\mathcal{U}(t_0)$ at time $t_0$.
\end{assumption}

\subsection{Corridor Construction}\label{subsubsec:corridor_construction}

As described in Section~\ref{subsec:temporal_safety_corridor}, each dynamic obstacle $k$ is inflated by $r_n$ (Eq.~\eqref{eq:safety_rn}) to obtain $\hat{\mathcal{O}}_k^n$, and when unknown-space inflation is enabled, the unknown region boundary is inflated by the same radius to obtain $\hat{\mathcal{U}}^n$ (Section~\ref{subsubsec:unknown_space_inflation}).
The corridor polytopes are then generated to be free of all inflated regions:
For each time layer $n$ and each spatial polytope $p$, the polytope $\mathcal{C}[n][p]$ is generated to be free of all inflated tracked dynamic obstacles and, when unknown-space inflation is enabled, the inflated unknown region:
\begin{eqnarray}
    \mathcal{C}[n][p] \cap \hat{\mathcal{O}}_k^n &=& \emptyset, \quad \forall\, k,\; \forall\, n,\; \forall\, p, \label{eq:safety_corridor_free} \\
    \mathcal{C}[n][p] \cap \hat{\mathcal{U}}^n &=& \emptyset, \quad \forall\, n,\; \forall\, p. \label{eq:safety_corridor_free_unknown}
\end{eqnarray}

\paragraph{MIQP polytope assignment}
As described in Section~\ref{sec:trajectory_optimization}, the MIQP assigns each piece $n$ to a polytope $\mathcal{C}[n][p_n^*]$ such that all four control points lie inside (Eq.~\eqref{eq:polytope_constraints_MIQP}).
By the convex hull property of B\'ezier curves~\cite{farin2002curves}, the entire piece therefore remains inside $\mathcal{C}[n][p_n^*]$.

\subsection{Safety}

\begin{theorem}[Safety guarantee]\label{thm:safety}
    Under Assumptions~\ref{ass:bounded_vel}--\ref{ass:perfect_tracking} and the corridor construction in Section~\ref{subsubsec:corridor_construction}, the trajectory $\vect{x}(t)$ produced by the MIQP solver is collision-free with respect to all \emph{tracked} dynamic obstacles for the duration of the trajectory.
    Furthermore, when unknown-space inflation is enabled (Eq.~\eqref{eq:safety_corridor_free_unknown}) and Assumption~\ref{ass:untracked_in_unknown} holds, the trajectory is also collision-free with respect to all \emph{untracked} dynamic obstacles.
    That is:
    \[
        \vect{x}(t) \notin \mathcal{O}_k(t), \quad \forall\, t \in [t_0,\; t_0 + N \cdot dt],\; \forall\, k,
    \]
    where $k$ ranges over all tracked dynamic obstacles, and additionally over all untracked dynamic obstacles when unknown-space inflation is enabled.
\end{theorem}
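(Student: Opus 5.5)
The plan is a pointwise containment argument. Fix any $t \in [t_0, t_0 + N\,dt]$ and let $n$ be the index of the piece active at $t$, so that $t \in [t_0 + n\,dt,\; t_0 + (n{+}1)\,dt]$. At piece boundaries either adjacent index works, since continuity~\eqref{eq:continuity_constraints} makes both pieces give the same point. By the MIQP polytope assignment and the convex hull property of B\'ezier curves, $\vect{x}(t) \in \mathcal{C}[n][p_n^*]$. By Assumption~\ref{ass:perfect_tracking} this is also the executed position. The goal is then to show $\mathcal{O}_k(t) \subseteq \hat{\mathcal{O}}_k^n$ for tracked $k$, and $\mathcal{O}_k(t) \subseteq \hat{\mathcal{U}}^n$ for untracked $k$. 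The disjointness conditions~\eqref{eq:safety_corridor_free} and~\eqref{eq:safety_corridor_free_unknown} then give $\vect{x}(t) \notin \mathcal{O}_k(t)$.

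\emph{Tracked obstacles.} Fix $i \in \{x,y,z\}$. By Assumption~\ref{ass:bounded_vel} and the mean value theorem (integrating $\dot c_k^i$), $|c_k^i(t) - c_k^i(t_0)| \le v^{\mathrm{obs}}_{\max}(t - t_0) \le v^{\mathrm{obs}}_{\max}(n{+}1)\,dt$. Assumption~\ref{ass:estimation_error} gives $|c_k^i(t_0) - \hat c_k^i(t_0)| \le \epsilon$. The triangle inequality then yields $|c_k^i(t) - \hat c_k^i(t_0)| \le r_n$ by~\eqref{eq:safety_rn}. Now take any $\mathbf{y} \in \mathcal{O}_k(t)$. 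The AABB containment gives $|y_i - c_k^i(t)| \le h_k^i$, hence $|y_i - \hat c_k^i(t_0)| \le h_k^i + r_n$, which is $\mathbf{y} \in \hat{\mathcal{O}}_k^n$. Using the end time $(n{+}1)\,dt$ in $r_n$ is exactly what makes this bound valid over the whole layer.

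\emph{Untracked obstacles.} This is the delicate step. Assumption~\ref{ass:untracked_in_unknown} puts $\mathcal{O}_k(t_0) \subseteq \mathcal{U}(t_0)$. Take $\mathbf{y} \in \mathcal{O}_k(t)$. By the AABB containment there is a corresponding point $\mathbf{y}' = \mathbf{y} - (\mathbf{c}_k(t) - \mathbf{c}_k(t_0))$ in the box at time $t_0$. The translation has $\|\mathbf{c}_k(t) - \mathbf{c}_k(t_0)\|_\infty \le v^{\mathrm{obs}}_{\max}(n{+}1)\,dt \le r_n$.

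The hardest point is that the box, not the obstacle, is translated. I would therefore first try to assume (or argue) that the untracked obstacle's bounding box at $t_0$ lies in $\mathcal{U}(t_0)$. Failing that, I would argue with the obstacle set itself, treating it as a rigid translation consistent with the centroid motion. So suppose $\mathbf{y}' \in \mathcal{U}(t_0)$ and $\|\mathbf{y} - \mathbf{y}'\|_\infty \le r_n$. If $\mathbf{y} \in \mathcal{U}(t_0)$, we are done. Otherwise the segment from $\mathbf{y}'$ to $\mathbf{y}$ leaves $\mathcal{U}(t_0)$, and at the exit it crosses a boundary point $\mathbf{b} \in \mathcal{B}_{\mathcal{U}}(t_0)$. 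Since $\mathbf{b}$ lies on that segment, $\|\mathbf{y} - \mathbf{b}\|_\infty \le \|\mathbf{y} - \mathbf{y}'\|_\infty \le r_n$, so $\mathbf{y} \in \mathcal{B}_{\mathcal{U}}(t_0) \oplus B_\infty(r_n) \subseteq \hat{\mathcal{U}}^n$.

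I expect this voxel-level boundary-crossing argument to need care about closedness of voxel sets, and about whether the boundary is taken inside or outside $\mathcal{U}$. I would handle that by working with closed voxel cubes and defining the boundary voxels as those adjacent to non-unknown space, so that the exit point lies in a boundary voxel. Combining the two cases with the disjointness conditions completes the proof for all $t$ and all $k$.
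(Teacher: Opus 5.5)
Your proposal is correct and follows the paper's own three-step argument. Each piece lies in its assigned polytope $\mathcal{C}[n][p_n^*]$ by the B\'ezier convex hull property; the true obstacle lies in its inflated set because the per-axis displacement from $\hat{\mathbf{c}}_k(t_0)$ is at most $\epsilon + v^{\mathrm{obs}}_{\max}(n{+}1)\,dt = r_n$; and the two sets are disjoint by construction.

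In the untracked case you are more careful than the paper, which simply asserts $\mathcal{O}_k(\bar t)\subseteq\hat{\mathcal{U}}^n$. You rightly note that this tacitly needs every point of the obstacle, not only its centroid, to move at most $v^{\mathrm{obs}}_{\max}(n{+}1)\,dt$ per axis. Rigid translation, or the $t_0$ bounding box lying in $\mathcal{U}(t_0)$, would suffice. Your segment-exit argument then supplies the step $\mathbf{y}\in\mathcal{U}(t_0)\cup\bigl(\mathcal{B}_{\mathcal{U}}(t_0)\oplus B_\infty(r_n)\bigr)$ that the paper leaves implicit.
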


\begin{proof}
    Consider an arbitrary trajectory piece $n$ executing over $[t_0 + n \cdot dt,\; t_0 + (n{+}1) \cdot dt]$ and any time $\bar{t}$ within that interval.
    Let $p_n^*$ be the spatial polytope assigned to piece $n$ by the MIQP (Eq.~\eqref{eq:polytope_constraints_MIQP}).
    We proceed in three steps: first, we show the trajectory stays inside its assigned corridor polytope; second, we show the true obstacle remains inside its inflated region; third, we show the corridor and inflated region are disjoint, which together imply no collision.

    \textit{(i) Trajectory is inside a corridor polytope.}
    By Eq.~\eqref{eq:polytope_constraints_MIQP}, all four control points of piece $n$ lie inside $\mathcal{C}[n][p_n^*]$.
    Since a B\'ezier curve is contained within the convex hull of its control points~\cite{farin2002curves}, piece $n$ remains inside $\mathcal{C}[n][p_n^*]$ throughout its time interval, so $\vect{x}(\bar{t}) \in \mathcal{C}[n][p_n^*]$.

    \textit{(ii) The true obstacle is inside the inflated region.}
    For tracked dynamic obstacles, from $t_0$ to $\bar{t}$ the true centroid can differ from the estimated position by at most $\epsilon$ (Assumption~\ref{ass:estimation_error}) plus $v^{\mathrm{obs}}_{\max} \cdot (n{+}1) \cdot dt$ of motion (Assumption~\ref{ass:bounded_vel}) per axis.
    Since $r_n = v^{\mathrm{obs}}_{\max} \cdot (n{+}1) \cdot dt + \epsilon$, the true obstacle region is contained in the inflated AABB: $\mathcal{O}_k(\bar{t}) \subseteq \hat{\mathcal{O}}_k^n$.
    For untracked dynamic obstacles (when unknown-space inflation is enabled), by Assumption~\ref{ass:untracked_in_unknown} the obstacle starts in $\mathcal{U}(t_0)$, and by Assumption~\ref{ass:bounded_vel} it moves at most $v^{\mathrm{obs}}_{\max} \cdot (n{+}1) \cdot dt$ per axis during time layer $n$, so $\mathcal{O}_k(\bar{t}) \subseteq \hat{\mathcal{U}}^n$.

    \textit{(iii) Corridor and inflated regions are disjoint.}
    By construction, the corridor polytope is disjoint from all inflated tracked dynamic obstacles (Eq.~\eqref{eq:safety_corridor_free}): ${\mathcal{C}[n][p_n^*] \cap \hat{\mathcal{O}}_k^n = \emptyset}$.
    When unknown-space inflation is enabled, it is also disjoint from the inflated unknown region (Eq.~\eqref{eq:safety_corridor_free_unknown}): ${\mathcal{C}[n][p_n^*] \cap \hat{\mathcal{U}}^n = \emptyset}$.

    Combining (i)--(iii): the trajectory lies inside the corridor, the obstacle lies inside its inflated region, and the two are disjoint, so $\vect{x}(\bar{t}) \notin \mathcal{O}_k(\bar{t})$.
    Since $\bar{t}$ and $k$ were arbitrary, the trajectory is collision-free with respect to all dynamic obstacles at all times.
    Safety with respect to static obstacles follows directly: the corridor polytopes are constructed in free space that excludes all static obstacles (inflated by the drone radius $r_{\text{drone}}$), so the trajectory cannot intersect any static obstacle either.
\end{proof}

\paragraph{Role of agent and obstacle velocities in the inflation radius.}
The inflation radius $r_n = v^{\mathrm{obs}}_{\max} \cdot (n{+}1) \cdot dt + \epsilon$ depends on both the obstacle velocity bound $v^{\mathrm{obs}}_{\max}$ and the per-piece duration $dt$.
As described in Section~\ref{subsec:time_allocation}, $dt = f \cdot dt_0$, where $dt_0$ is the minimum feasible time per piece derived from the agent's dynamic constraints ($v_{\max}$, $a_{\max}$, $j_{\max}$) and $f \ge 1$ is the time allocation factor.
Hence, tighter agent dynamic constraints reduce $dt_0$ and consequently $dt$, which shrinks $r_n$ and produces larger corridors.
Conversely, a higher $v^{\mathrm{obs}}_{\max}$ increases $r_n$, requiring more conservative corridors and potentially reducing feasibility in dense environments.

\subsection{When Assumptions Are Violated}

The safety guarantee of Theorem~\ref{thm:safety} depends on both assumptions and the corridor construction properties.

\textbf{Velocity bound violated:}
If an obstacle exceeds $v^{\mathrm{obs}}_{\max}$ along any axis, its true position at time $t$ can lie outside the inflated box $\hat{\mathcal{O}}_k^n$.
The corridor remains obstacle-free with respect to the \textit{inflated} region, but the true obstacle may have moved into the corridor.
As $v^{\mathrm{obs}}_{\max}$ increases, the per-layer inflation radius $r_n$ grows linearly, shrinking the STSFC corridors and eventually making trajectory optimization infeasible.
For very fast obstacles, the planner would need to rely more heavily on trajectory prediction to tighten the inflation (inflating around the predicted future position rather than the worst-case reachable set from the current position), which is an avenue for future work.

\textbf{Estimation error exceeded:}
The total buffer available to absorb estimation error is $\epsilon + r_{\text{margin}}$.
In our experiments, $\epsilon = 0$ and $r_{\text{margin}} \in \{0.1, 0.2\}$\,m, so safety is maintained as long as the per-axis estimation error does not exceed $r_{\text{margin}}$.
If the actual error exceeds this margin, the inflated region may not fully contain the true obstacle, and collisions become possible.
Users can increase either $\epsilon$ or $r_{\text{margin}}$ to accommodate larger estimation errors at the cost of more conservative corridors.

\textbf{Tracking error:}
If the low-level controller does not track the planned trajectory perfectly, the actual position may deviate from the planned position.
Even though the planned trajectory lies inside the corridor, the actual trajectory may exit the corridor and potentially collide with obstacles.
In practice, tracking errors are absorbed by the drone radius $r_{\text{drone}}$ and the safety margin $r_{\text{margin}}$, which together provide a spatial buffer between the corridor boundary and the obstacle surface.

\textbf{Unknown-space inflation disabled:}
When unknown-space inflation is disabled, the safety guarantee of Theorem~\ref{thm:safety} covers only tracked dynamic obstacles.
Untracked dynamic obstacles emerging from unobserved space are not represented in the corridor generation, and the trajectory may enter regions where such obstacles are present.

Note that safety is guaranteed only within the local planning horizon $N \cdot dt$; long-term safety requires continuous replanning, as discussed in Section~\ref{subsec:recursive_feasibility}.

\subsection{Recursive Feasibility}\label{subsec:recursive_feasibility}

Recursive feasibility means that if a feasible trajectory exists at the current replanning step, one will also exist at the next step.
OA-MPC~\cite{firoozi2024oa} and Stamouli et al.~\cite{stamouli2024recursively} guarantee this for MPC in dynamic environments using a \emph{shrinking horizon} tied to a fixed mission duration $T$, so the remaining planning time decreases at each step.
Because the horizon shrinks toward the same end time $T$, the obstacle's reachable sets do not grow between replanning steps, and the previous plan remains feasible for the next step.
However, this strategy is not suitable for long-duration navigation where the goal may be far from the agent and the mission duration is not fixed.

SANDO could adopt the same strategy if the goal were close enough to be reached within a single planning horizon, but this is not generally the case for long-range navigation.
As described in Section~\ref{sec:system_overview_section}, SANDO replans in a receding horizon manner toward a subgoal that moves with the agent.
Since the subgoal moves with the agent and the horizon does not shrink toward a fixed end time, the assumptions required for recursive feasibility do not hold.
One alternative is to adopt the approach of Wu et al.~\cite{wu2012guaranteed} when the agent reaches a subgoal but cannot find a new trajectory.
Ref.~\cite{wu2012guaranteed} guarantees infinite-horizon safety by allowing the agent to fly away from obstacles, but this requires the agent to be faster than all obstacles and the environment to be open, which is unrealistic in cluttered settings.

Theorem~\ref{thm:safety} guarantees collision-free execution within each individual planning horizon, but SANDO generally does not guarantee that a feasible plan will exist at every future replanning step due to the receding horizon nature where we use a different subgoal at each step.
However, as shown in simulations in Section~\ref{sec:simulation-results} and hardware experiments in Section~\ref{sec:hardware-experiments}, SANDO often finds new trajectories before reaching the end of the current plan, effectively maintaining safety through continuous replanning even without formal recursive feasibility guarantees.

\section{Simulation Results}\label{sec:simulation-results}

We performed all the simulations on an \texttt{AlienWare Aurora R8} desktop computer with an Intel\textsuperscript{\textregistered} Core~\textsuperscript{TM} i9 CPU $\times$16, 64 GB of RAM.
The operating system is Ubuntu 22.04 LTS, and we used ROS2 Humble for SANDO's implementation.
Other methods were implemented in ROS1 Noetic/Melodic, so we dockerized them and ran them in ROS1 Noetic/Melodic on the same machine for benchmarking.
Table~\ref{tab:system_parameters} lists the SANDO system parameters that remain fixed across all simulations, and Table~\ref{tab:experiment_parameters} summarizes the per-experiment configuration parameters for both simulation and hardware experiments (Section~\ref{sec:hardware-experiments}).
For all baseline methods, we used their default parameter values (planning horizon, number of trajectory pieces, etc.), with one exception: in the static forest benchmark, EGO-Swarm2's default planning horizon of 7.5\,m caused frequent collisions even in the easy environment, so we increased it to 20.0\,m to achieve a higher success rate (see Table~\ref{tab:static_benchmark}).

\begin{table}
  \centering
  \caption{SANDO system parameters across all simulations.}
  \label{tab:system_parameters}
  \small
  \renewcommand{\arraystretch}{1.1}
  \begin{tabular*}{\columnwidth}{@{\extracolsep{\fill}} >{\centering\arraybackslash}m{0.13\columnwidth} >{\centering\arraybackslash}m{0.11\columnwidth} >{\centering\arraybackslash}m{0.16\columnwidth} m{0.40\columnwidth}@{}}
    \toprule
    \textbf{Module} & \textbf{Symbol} & \textbf{Value} & \textbf{Parameter} \\
    \midrule
    \multirow{4}{*}{\makecell{Static\\Heat Map}}
      & $\alpha^s$ & 5.0 & Intensity scale \\
      & $p^s$ & 2 & Decay exponent \\
      & $R^s$ & $3.0 \cdot r_{\text{drone}}$ & Halo radius \\
      & $H_{\max}$ & 50.0 & Maximum heat \\
    \midrule
    \multirow{5}{*}{\makecell{Dynamic\\Heat Map}}
      & $\alpha^d_0$ & 1.0 & Base cost scale \\
      & $\alpha^d_1$ & 2.0 & Tube penalty scale \\
      & $p^d,\, q^d$ & 2 & Decay exponents \\
      & $\gamma_k$ & $v^{\mathrm{obs}}_{\max}$ & Tube growth rate \\
      & $\tau^d_{\text{ratio}}$ & 0.5 & Time weight ratio \\
    \midrule
    \multirow{2}{*}{\makecell{Time\\Alloc.}}
      & $\Delta f$ & 0.1 & Factor step size \\
      & $\kappa$ & 0.4 & Window half-width \\
    \midrule
    Tracker
      & $\alpha$ & 0.9 & AEKF forgetting factor \\
    \midrule
    Solver
      & N/A & Gurobi & MIQP solver~\cite{gurobi} \\
    \bottomrule
  \end{tabular*}
  \vspace{-0.5em}
\end{table}

\begin{table*}
  \caption{Per-experiment SANDO configuration. ``N/A'' indicates the parameter is not applicable.}
  \label{tab:experiment_parameters}
  \centering
  \small
  \renewcommand{\arraystretch}{1.15}
  \setlength{\tabcolsep}{4pt}
  \begin{tabular*}{\textwidth}{@{\extracolsep{\fill}}l c c c c c c@{}}
    \toprule
    \multicolumn{1}{c}{\multirow[0em]{4}{*}{\textbf{Parameter}}}
    & \multicolumn{4}{c}{\textbf{Simulation (Sec.~\ref{sec:simulation-results})}} & \multicolumn{2}{c}{\textbf{Hardware (Sec.~\ref{sec:hardware-experiments})}} \\
    \cmidrule(lr){2-5} \cmidrule(lr){6-7}
      & \makecell{Standardized\\(Sec.~\ref{sec:benchmarking-standardized-static-environments}, \ref{sec:variable-elimination-benchmarking})}
      & \makecell{Static\\Forest\\(Sec.~\ref{sec:benchmarking-static-environments})}
      & \makecell{Dynamic\\w/ GT\\(Sec.~\ref{sec:benchmarking-dynamic-environments}, \ref{sec:temporal-sfc-ablation})}
      & \makecell{Dynamic\\w/o GT\\(Sec.~\ref{sec:dynamic-no-gt})}
      & \makecell{Static\\(Exp.~1--6)}
      & \makecell{Dynamic\\(Exp.~7--16)} \\
    \midrule
    Pieces $N$
      & 4--6
      & 5
      & 5
      & 5
      & 5
      & 5 \\
    Polytopes per layer $P$
      & 3
      & 3
      & 3
      & 2, 3
      & 3
      & 2 \\
    Drone radius $r_{\text{drone}}$ [m]
      & 0.1
      & 0.1
      & 0.1
      & 0.1
      & 0.45
      & 0.45 \\
    Max obs.\ velocity $v^{\mathrm{obs}}_{\max}$ [m/s]
      & N/A
      & N/A
      & 0.5
      & 0.5
      & N/A
      & 0.25 \\
    A$^\ast$ heat weight $w_{\text{heat}}$
      & 5.0
      & 5.0
      & 5.0
      & 5.0
      & 5.0
      & 5.0 $|$ 20.0 \\
    Estimation error $\epsilon$ [m]
      & N/A
      & N/A
      & 0.0
      & 0.0
      & N/A
      & 0.0 \\
    Safety margin $r_{\text{margin}}$ [m]
      & 0.1
      & 0.1
      & 0.1
      & 0.1
      & 0.2
      & 0.2 \\
    Max time factor $f_{\max}$
      & 2.5
      & 2.5
      & 2.5
      & 2.5
      & 3.0
      & 3.0 \\
    Unknown space inflation
      & N/A
      & on
      & on
      & on
      & off
      & off \\
    Sensor
      & N/A
      & MID-360
      & N/A
      & D435
      & MID-360
      & MID-360 \\
    \bottomrule
  \end{tabular*}
  \vspace{-2em}
\end{table*}

\subsection{Benchmarking in Standardized Static Environments}\label{sec:benchmarking-standardized-static-environments}

\begin{figure}
  \centering
  \includegraphics[width=\columnwidth]{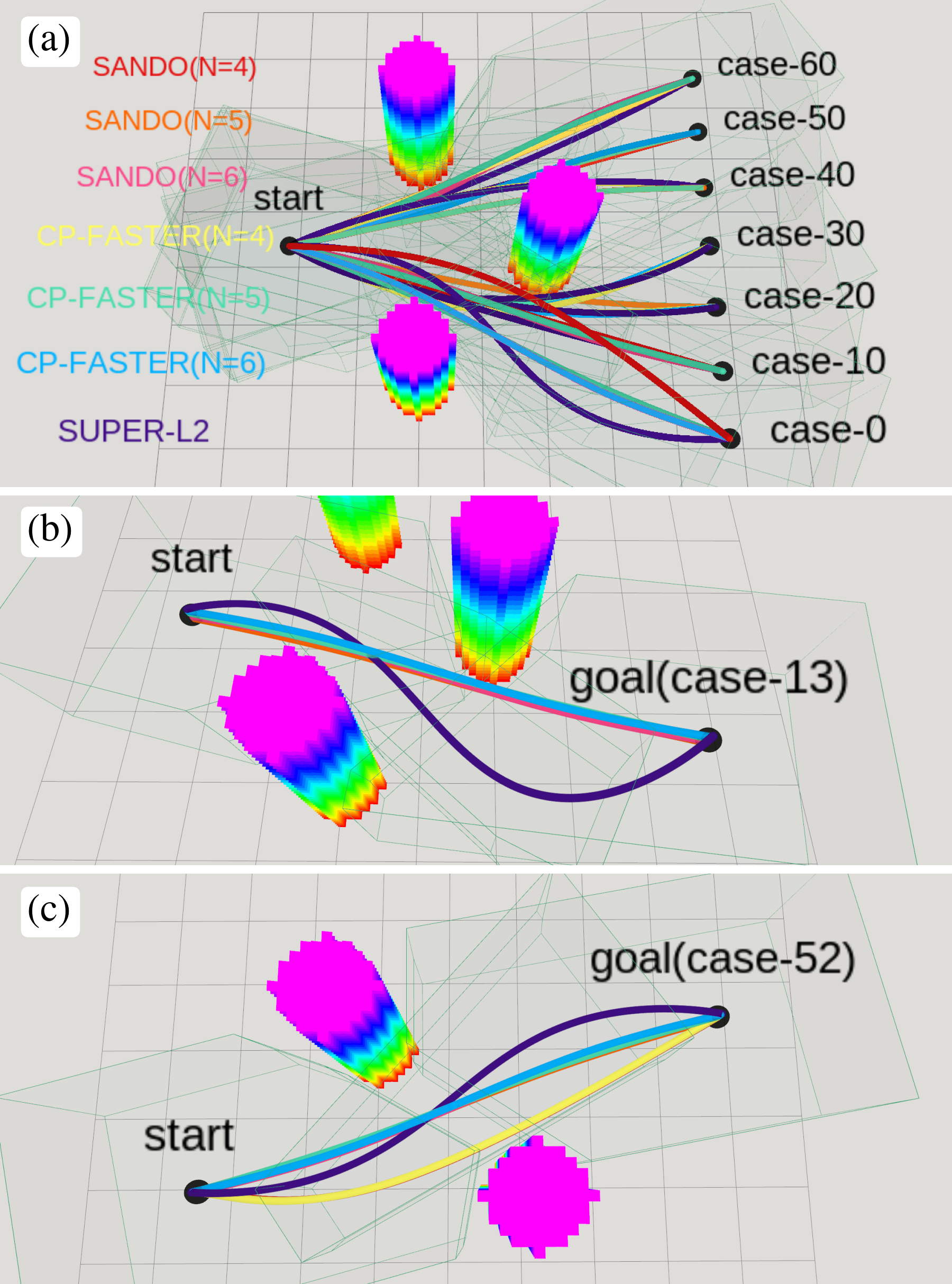}  \vspace*{-0.2in}
  \caption{Standardized Benchmarking: The environment used for benchmarking SANDO against state-of-the-art methods in static environments. Start position to the left and goal positions are the right. The safe flight corridors used as safe constraints shown as green polytopes. Trajectory color-coded according to the planner. (a) shows trajectories for 10 different cases with different start and goal positions. (b) and (c) show a close-up view of one of the cases. Instead of enforcing hard safe flight corridor constraints, SUPER uses soft constraints, where the trajectory is encouraged to go close to the center of overlapping polytopes, which results in longer trajectories as shown in (b) and (c).\label{fig:standardized_benchmarking_environment}}
  \vspace{-1em}
\end{figure}

To compare SANDO against state-of-the-art methods, we first performed benchmarking experiments in a standardized static environment, as shown in Fig.~\ref{fig:standardized_benchmarking_environment}.
We evaluated 61 cases by varying the goal position from $-3$ to $3$\,m in $0.1$\,m increments along the $y$-axis.
For fair comparison, we used the same start and goal positions, dynamic constraints, and safe flight corridor constraints.
The dynamic constraints were set to $\vect{v_{\text{max}}} = 1.0$ \SI{}{\m/\s}, $\vect{a_{\text{max}}} = 2.0$ \SI{}{\m/\s\squared}, and $\vect{j_{\text{max}}} = 3.0$ \SI{}{\m/\s\cubed}.
We compared SANDO against FASTER~\cite{tordesillas2022faster} and SUPER~\cite{ren2025super}, which are state-of-the-art static environment planners that both use safe flight corridors.
Since this benchmark has no dynamic obstacles, SANDO uses spatial safe flight corridors (SSFCs) rather than STSFCs; an SSFC is a single-time-layer STSFC (\ie $\mathcal{C}[0][p]$ only), identical to FASTER's corridor generation, since obstacle positions do not change over time.
FASTER uses MIQP-based hard constraints, while SUPER uses soft constraints. 
SANDO and FASTER use Gurobi~\cite{gurobi} as the MIQP solver, while SUPER uses LBFGS~\cite{liu1989lbfgs} as the soft-constraint solver.
For a fair comparison, we relaxed SUPER to use per-axis dynamic constraints ($L_\infty$ norm), since SANDO and FASTER enforce per-axis constraints.
FASTER only applies dynamic constraints at the very first control points of each piece, so we also report the results of FASTER with additional dynamic constraints at all control points, denoted as FASTER (CP).
The original FASTER is denoted as FASTER (orig.).
For SANDO and FASTER, we set the number of pieces from 4 to 6.

The metrics used in the table are defined as follows.
$R^{\mathrm{opt}}_{\mathrm{succ}}$ [\%] (optimization success rate; optimization completes without failure);
$T^{\mathrm{per}}_{\mathrm{opt}}$ [ms] (per-optimization runtime; since SANDO and FASTER perform iterative time allocation and trajectory optimization, we report the average runtime of each individual optimization);
$T^{\mathrm{total}}_{\mathrm{opt}}$ [ms] (total optimization runtime);
$T_{\mathrm{trav}}$ [s] (trajectory travel time);
$L_{\mathrm{path}}$ [m] (total path length);
$S_{\mathrm{jerk}}=\!\int_0^{T_{\mathrm{trav}}}\!\lVert\mathbf{j}(t)\rVert\,dt$ [m/s$^{2}$] (L1 jerk integral, where $\mathbf{j}(t)=\dddot{\mathbf{p}}(t)$ is the trajectory jerk; smoothness);
$\rho_{\mathrm{sfc}},\,\rho_{\mathrm{vel},}\,\rho_{\mathrm{acc}},\,\rho_{\mathrm{jerk}}$ [\%] (SFC/velocity/acceleration/jerk constraint violation rates; the percentage of trajectory points that violate the respective constraints, with $\rho_{\mathrm{acc/jerk}}$ denoting the combined acceleration and jerk violation rate when reported jointly).
Unlike SANDO and FASTER, SUPER optimizes spatial trajectories combined with temporal allocation in a single optimization problem, so we report the per-optimization runtime as the total optimization runtime since it does not have iterative time allocation.
FASTER and SUPER have two trajectory optimization approaches (exploratory and safe), but for this benchmarking we only report the exploratory trajectory optimization since it yields better performance.
SUPER and SANDO use multi-threading for parallel optimization, while FASTER is single-threaded. 
We report both single-threaded and multi-threaded results for SANDO to show the benefit of multi-threading.

Table~\ref{tab:standardized_benchmark} summarizes the benchmarking results.
As $N$ increases, both SANDO and FASTER achieve better trajectory performance (shorter travel time) at the cost of increased computation time, since trajectories with more segments have larger degrees of freedom.
Note that FASTER (orig.) achieves fastest travel time at $N=5$ but with a high velocity violation rate of \SI{38.0}\%, while FASTER (CP) achieves zero velocity violation since it applies dynamic constraints at all control points.
Multi-threaded SANDO incurs a slightly higher per-optimization time $T^{\mathrm{per}}_{\mathrm{opt}}$ than single-threaded due to thread overhead (\eg 6.8 vs.\ 6.1\,ms at $N\!=\!5$, 17.2 vs.\ 16.1\,ms at $N\!=\!6$), but achieves substantially faster total optimization time $T^{\mathrm{total}}_{\mathrm{opt}}$ due to parallel execution (\eg 9.7 vs.\ 18.2\,ms at $N\!=\!5$, 19.7 vs.\ 28.0\,ms at $N\!=\!6$).
No method exhibited acceleration or jerk constraint violations.
SUPER reports SFC and velocity constraint violations because it uses soft constraints; however, it achieves fastest computation time while performing spatiotemporal optimization.
Compared to FASTER, SANDO achieves consistently faster computation across all $N$ while maintaining the same trajectory performance and no constraint violations.
Overall, SUPER achieves the fastest computation time due to its MINCO-based~\cite{wang2022minco} spatiotemporal optimization, but at the cost of constraint violations.
Among hard-constraint methods, SANDO and FASTER (CP) achieve the best trajectory performance with zero constraint violations, and SANDO is consistently computationally faster than FASTER across all $N$, especially with multi-threading.
These results illustrate the fundamental trade-off between $N$ (and similarly $P$) and computational cost: larger $N$ and $P$ increase the MIQP's degrees of freedom and the number of binary assignment variables ($N \times P$), improving trajectory quality but increasing solve time.
In dynamic environments where fast replanning is critical, we use $N = 5$ and $P \in \{2, 3\}$ as a practical operating point that balances trajectory quality against real-time computation (see Table~\ref{tab:experiment_parameters}).

\begin{table*}
  \caption{Local trajectory optimization benchmarking results (computation time, performance, and constraint violation).
  SANDO and FASTER (CP) with $N=6$ achieve the best trajectory performance (shortest travel time) while maintaining no constraint violations, but SANDO achieves substantially faster computation time than FASTER (CP) due to multi-threading. 
  We mark in \best{green} the best value in each column and in \worst{red} the worst value.}
  \label{tab:standardized_benchmark}
  \centering
  \renewcommand{\arraystretch}{1.0}
  \resizebox{\textwidth}{!}{
    \begin{tabular}{c @{\hspace{4pt}} l c c c c c c c c c c c}
      \toprule
      \multicolumn{2}{c}{\multirow{2}{*}[-0.4ex]{\textbf{Algorithm}}}
      & \multirow{2}{*}[-0.4ex]{\textbf{Thread}}
      & \multirow{2}{*}[-0.4ex]{\textbf{N}}
      & \multicolumn{1}{c}{\textbf{Success}}
      & \multicolumn{2}{c}{\textbf{Computation Time}}
      & \multicolumn{3}{c}{\textbf{Performance}}
      & \multicolumn{3}{c}{\textbf{Constraint Violation}}
      \\
      \cmidrule(lr){5-5}
      \cmidrule(lr){6-7}
      \cmidrule(lr){8-10}
      \cmidrule(lr){11-13}
      &&&&
      $R^{\mathrm{opt}}_{\mathrm{succ}}$ [\%] &
      $T^{\mathrm{per}}_{\mathrm{opt}}$ [ms] &
      $T^{\mathrm{total}}_{\mathrm{opt}}$ [ms] &
      $T_{\mathrm{trav}}$ [s] &
      $L_{\mathrm{path}}$ [m] &
      $S_{\mathrm{jerk}}$ [m/s$^{2}$] &
      $\rho_{\mathrm{SFC}}${[\%]} &
      $\rho_{\mathrm{vel}}${[\%]} &
      $\rho_{\mathrm{acc/jerk}}${[\%]}
      \\
      \midrule
      \multirow{2}{*}{SUPER} & ($L_2$) & \multirow{2}{*}{multi} & \multirow{2}{*}{--} & \best{100.0} & \multicolumn{2}{c}{0.6} & \worst{14.1} & \best{6.8} & \best{1.3} & 0.2 & 0.5 & \best{0.0} \\
       & ($L_\infty$) &  &  & \best{100.0} & \multicolumn{2}{c}{\best{0.5}} & 13.6 & \worst{6.9} & 1.5 & \worst{0.3} & 4.0 & \best{0.0} \\

      \midrule

      \multirow{2}{*}{FASTER} & (orig.) & \multirow{2}{*}{single} & \multirow{4}{*}{4} & \worst{93.4} & 7.9 & 48.1 & 13.0 & \best{6.8} & 1.4 & \best{0.0} & 5.0 & \best{0.0} \\
       & (CP) &  &  & \worst{93.4} & 4.5 & 36.5 & 13.2 & \best{6.8} & 1.4 & \best{0.0} & \best{0.0} & \best{0.0} \\
      \multicolumn{2}{c}{\multirow{2}{*}{SANDO}} & single &  & \worst{93.4} & 1.3 & 12.7 & 13.2 & \best{6.8} & 1.4 & \best{0.0} & \best{0.0} & \best{0.0} \\
      \multicolumn{2}{c}{} & multi &  & \worst{93.4} & 1.3 & 3.2 & 13.2 & \best{6.8} & 1.4 & \best{0.0} & \best{0.0} & \best{0.0} \\

      \midrule

      \multirow{2}{*}{FASTER} & (orig.) & \multirow{2}{*}{single} & \multirow{4}{*}{5} & \best{100.0} & 16.8 & 31.3 & \best{8.5} & \best{6.8} & \worst{8.4} & \best{0.0} & \worst{38.0} & \best{0.0} \\
       & (CP) &  &  & \best{100.0} & 18.0 & 67.6 & 11.1 & \best{6.8} & 1.9 & \best{0.0} & \best{0.0} & \best{0.0} \\
      \multicolumn{2}{c}{\multirow{2}{*}{SANDO}} & single &  & \best{100.0} & 6.1 & 18.2 & 11.1 & \best{6.8} & 1.9 & \best{0.0} & \best{0.0} & \best{0.0} \\
      \multicolumn{2}{c}{} & multi &  & \best{100.0} & 6.8 & 9.7 & 11.1 & \best{6.8} & 1.9 & \best{0.0} & \best{0.0} & \best{0.0} \\

      \midrule

      \multirow{2}{*}{FASTER} & (orig.) & \multirow{2}{*}{single} & \multirow{4}{*}{6} & \best{100.0} & \worst{24.6} & \worst{94.9} & 9.8 & \best{6.8} & 2.8 & \best{0.0} & 8.2 & \best{0.0} \\
       & (CP) &  &  & \best{100.0} & 20.2 & 56.8 & 9.8 & \best{6.8} & 2.8 & \best{0.0} & \best{0.0} & \best{0.0} \\
      \multicolumn{2}{c}{\multirow{2}{*}{SANDO}} & single &  & \best{100.0} & 16.1 & 28.0 & 9.8 & \best{6.8} & 2.8 & \best{0.0} & \best{0.0} & \best{0.0} \\
      \multicolumn{2}{c}{} & multi &  & \best{100.0} & 17.2 & 19.7 & 9.8 & \best{6.8} & 2.8 & \best{0.0} & \best{0.0} & \best{0.0} \\
      \bottomrule
    \end{tabular}
  }
  \vspace{-1.0em}
\end{table*}

\subsection{Effectiveness of Variable Elimination}\label{sec:variable-elimination-benchmarking}

We also benchmarked SANDO with and without variable elimination (VE) (see Section~\ref{subsubsec:variable_elimination}) to evaluate the effectiveness of the technique.
The benchmark was performed in the same standardized static environment with $N=4,5,6$ with multi-threaded SANDO under the same dynamic constraints as in Section~\ref{sec:benchmarking-standardized-static-environments}.
Table~\ref{tab:variable_elimination_benchmark} summarizes the results.
The metrics used in the table are the same as those in Section~\ref{sec:benchmarking-standardized-static-environments} except that we report all the constraint violation rates as a single value $\rho_{\mathrm{viol}}$ [\%] (the maximum rate among SFC/velocity/acceleration/jerk constraint violations).
VE achieves identical trajectory performance to the non-VE baseline across all $N$ while reducing per-optimization time by up to $7.4\times$, confirming that, as expected, variable elimination reduces computation time while producing the same optimal solution since the underlying optimization problem is equivalent.
This is because VE reduces the number of decision variables and constraints in the MIQP while effectively solving the original problem, so the same optimal solution is obtained with much faster computation.

\begin{table}
  \caption{Variable elimination (VE) benchmarking results in standardized environment. 
  VE achieves identical trajectory performance (travel time, path length, jerk, and constraint violations) to the non-VE baseline across all $N$, while reducing per-optimization time by up to $7.4\times$ at $N\!=\!4$, $1.9\times$ at $N\!=\!5$, and $1.4\times$ at $N\!=\!6$.
  We highlight the best and worst values for each $N$ in \best{green} and \worst{red}, respectively.}
  \label{tab:variable_elimination_benchmark}
  \centering
  \renewcommand{\arraystretch}{1.0}
  \resizebox{\columnwidth}{!}{
    \begin{tabular}{c c c c c c c c}
      \toprule
      \multirow{2}{*}[-0.2em]{\textbf{N}}
      & \multirow{2}{*}[-0.2em]{\textbf{VE}}
      & $R^{\mathrm{opt}}_{\mathrm{succ}}$
      & $T^{\mathrm{per}}_{\mathrm{opt}}$
      & $T_{\mathrm{trav}}$
      & $L_{\mathrm{path}}$
      & $S_{\mathrm{jerk}}$
      & $\rho_{\mathrm{viol}}$
      \\
      & &
      {[\%]} &
      {[ms]} &
      {[s]} &
      {[m]} &
      {[m/s$^{2}$]} &
      {[\%]}
      \\
      \midrule

      \multirow{2}{*}{4} & \YesGreen & 93.4 & \best{1.3} & 13.2 & 6.8 & 1.4 & 0.0 \\
       & \NoRed & 93.4 & \worst{9.6} & 13.2 & 6.8 & 1.4 & 0.0 \\

      \midrule

      \multirow{2}{*}{5} & \YesGreen & \best{100.0} & \best{6.8} & 11.1 & 6.8 & 1.9 & 0.0 \\
       & \NoRed & \best{100.0} & \worst{12.8} & 11.1 & 6.8 & 1.9 & 0.0 \\

      \midrule

      \multirow{2}{*}{6} & \YesGreen & \best{100.0} & \best{17.2} & 9.8 & 6.8 & 2.8 & 0.0 \\
       & \NoRed & \best{100.0} & \worst{24.7} & 9.8 & 6.8 & 2.8 & 0.0 \\
      \bottomrule
    \end{tabular}
  }
  \vspace{-0.5em}
\end{table}

\subsection{Benchmarking against State-of-the-Art Methods in Static Environments}\label{sec:benchmarking-static-environments}

\begin{figure}
  \centering
  \includegraphics[width=\columnwidth]{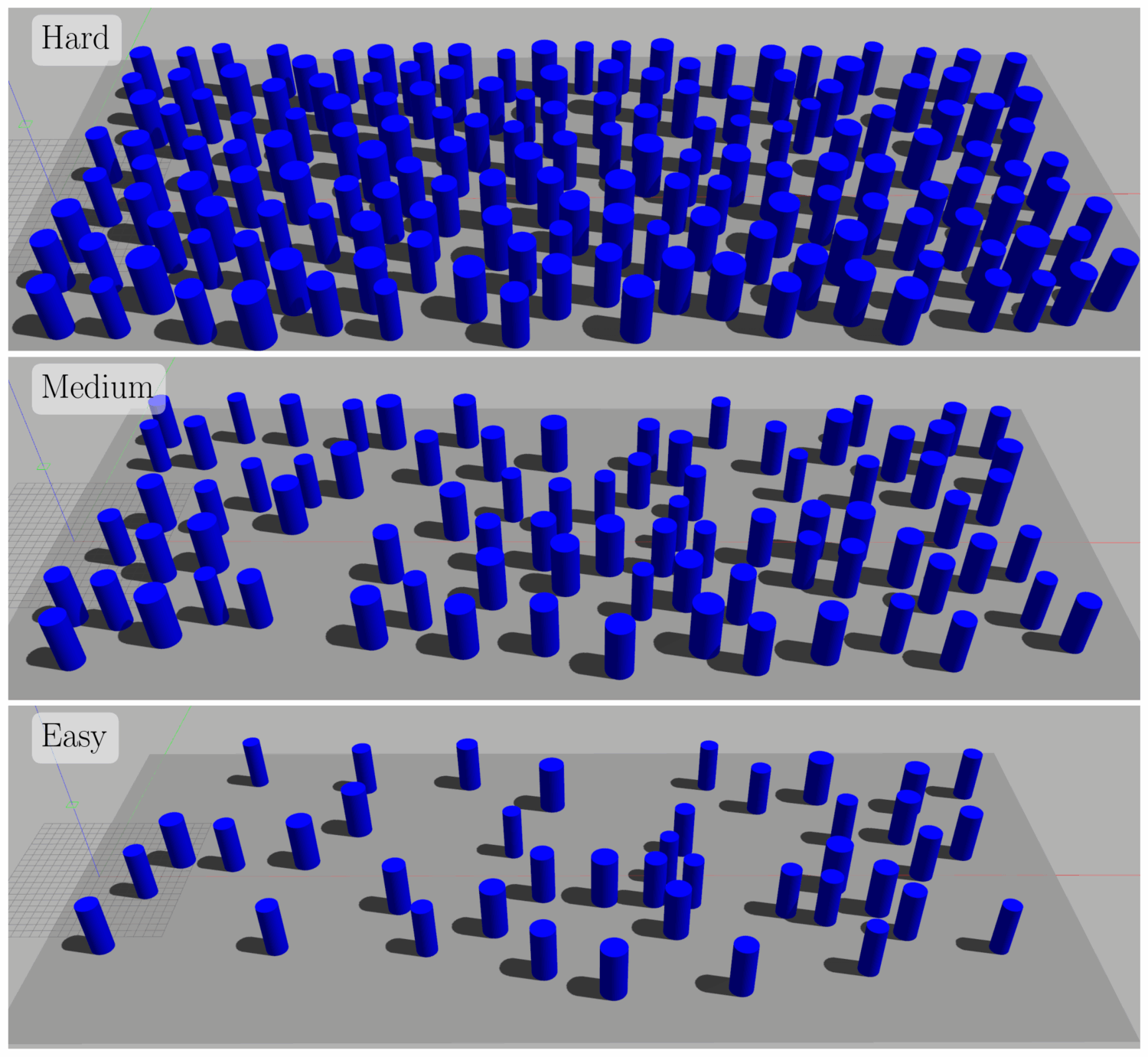} \vspace*{-0.1in}
  \caption{Static Benchmarking: The static forest Gazebo environment used for benchmarking SANDO against state-of-the-art methods.\label{fig:static_benchmarking_environment}}
  \vspace{-1em}
\end{figure}

\begin{figure}
  \centering
  \includegraphics[width=\columnwidth]{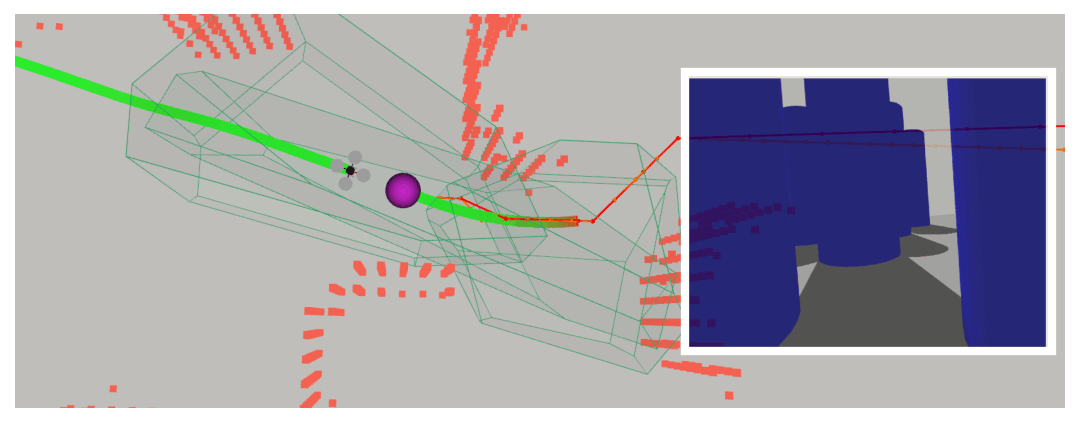} \vspace*{-0.25in}
  \caption{Rviz Visualization of Static Benchmarking (Hard Environment): One of the simulation runs in the hard static environment. The purple dot is the replanning point (Point A), the green polytopes are the safe flight corridors, and the color-coded trajectory is the optimized trajectory, where green is faster and red is slower. The orange dots are occupied voxels, the orange line is the original global path, and the red line is a smoother version of the global path used for SFC generation.\label{fig:static_benchmarking_rviz}}
  \vspace{-1em}
\end{figure}

To evaluate SANDO's full planning pipeline in realistic settings, we benchmarked it against state-of-the-art methods in obstacle-rich static forest environments at three difficulty levels.
Static cylindrical obstacles (radius 1.0--1.5\,m, height 6\,m) were placed randomly, occupying a $\SI{100}{m}\times\SI{40}{m}$ area (Fig.~\ref{fig:static_benchmarking_environment}).
Three difficulty levels are defined by the fraction of the area occupied by obstacles: Easy (5\%), Medium (10\%), and Hard (20\%).
The agent starts at $(0,0,3)\,\si{m}$ and the goal is $(105,0,3)\,\si{m}$.

We benchmarked SANDO against EGO-Swarm2~\cite{zhou2022swarm}, SUPER~\cite{ren2025super}, and FASTER~\cite{tordesillas2022faster}.
To simulate LiDAR data, we used the \texttt{livox\_ros\_driver2} package, which provides a ROS 2 interface for the Livox MID-360 LiDAR sensor; all methods receive the same sensor data for fair comparison.
The dynamic constraints were set to $\vect{v_{\text{max}}} = 5.0$ \SI{}{\m/\s}, $\vect{a_{\text{max}}} = 20.0$ \SI{}{\m/\s\squared}, and $\vect{j_{\text{max}}} = 100.0$ \SI{}{\m/\s\cubed}.
SANDO and FASTER enforce hard $L_\infty$ dynamic constraints, while EGO-Swarm2 and SUPER use soft constraints.
EGO-Swarm2 and SUPER originally use $L_2$ norms for dynamic constraints, but we relaxed them to $L_\infty$ norms for fair comparison since SANDO and FASTER enforce per-axis constraints.
We performed 10 simulations per difficulty level with the same evaluation metrics in Section~\ref{sec:benchmarking-standardized-static-environments} with the addition of $R_{\mathrm{succ}}$ [\%] (overall success rate; the percentage of runs in which the agent reaches the goal without collision) and $T_{\mathrm{replan}}$ [ms] (total replanning computation time).
Fig.~\ref{fig:static_benchmarking_rviz} shows a visualization of one of SANDO's simulation runs in RViz.

Table~\ref{tab:static_benchmark} summarizes the results.
SANDO achieves a 100\% success rate across all difficulty levels, whereas FASTER drops to 90\% and 70\% in the medium and hard cases, respectively, and EGO-Swarm2 drops to 50--60\% in the hard case.
SANDO also achieves the fastest travel time and the lowest total optimization computation time $T^{\mathrm{total}}_{\mathrm{opt}}$ across all cases.
Since SUPER and FASTER use both safe and exploratory trajectory optimization, we report the computation time of both trajectories in $T^{\mathrm{total}}_{\mathrm{opt}}$.
SANDO maintains no constraint violations in velocity, acceleration, and jerk owing to its hard-constraint formulation.
Although FASTER also enforces hard constraints, it exhibits velocity violations of 7.9\% in the hard case, likely due to the constraints only being applied at the first control point of each piece.
SUPER shows high jerk violation rates (8--12.7\%) because its soft-constraint solver does not strictly enforce dynamic limits.
Similarly, EGO-Swarm2 exhibits velocity violations (3--8.6\%) due to its soft-constraint optimization.
In terms of smoothness, SANDO's jerk integral $S_{\mathrm{jerk}}$ is lower than SUPER's and FASTER's, but higher than EGO-Swarm2's.
Overall, SANDO achieves the best overall performance with the highest success rate, fastest travel time, and no constraint violations across all difficulty levels.

\begin{table*}
  \caption{Benchmark results against state-of-the-art methods in static environments. SANDO outperforms the other methods in terms of travel time and achieves a 100\% success rate. Since SUPER performs global path planning for both exploratory and safe trajectories, we list the corresponding computation times as {Exploratory | Safe} in the Global Path Planning Computation Time column. We mark in \best{green} the best value and in \worst{red} the worst value in each column per case.}
  \label{tab:static_benchmark}
  \centering
  \renewcommand{\arraystretch}{1.0}
  \resizebox{\textwidth}{!}{
    \begin{tabular}{c c c c c c c c c c c c c}
      \toprule
      \multirow{2}{*}[-0.4ex]{\textbf{Env}}
      & \multirow{2}{*}[-0.4ex]{\textbf{Algorithm}}
      & \multicolumn{2}{c}{\multirow{2}{*}[-0.4ex]{\textbf{Constr.}}}
      & \multicolumn{1}{c}{\textbf{Success}}
      & \multicolumn{2}{c}{\textbf{Comp.\ Time}}
      & \multicolumn{3}{c}{\textbf{Performance}}
      & \multicolumn{3}{c}{\textbf{Constraint Violation}}
      \\
      \cmidrule(lr){5-5}
      \cmidrule(lr){6-7}
      \cmidrule(lr){8-10}
      \cmidrule(lr){11-13}
      &&&&
      $R_{\mathrm{succ}}$ [\%]
      & $T^{\mathrm{total}}_{\mathrm{opt}}$ [ms]
      & $T_{\mathrm{replan}}$ [ms]
      & $T_{\mathrm{trav}}$ [s]
      & $L_{\mathrm{path}}$ [m]
      & $S_{\mathrm{jerk}}$ [m/s$^{2}$]
      & $\rho_{\mathrm{vel}}$ [\%]
      & $\rho_{\mathrm{acc}}$ [\%]
      & $\rho_{\mathrm{jerk}}$ [\%]
      \\
      \midrule

      \multirow{6}{*}{Easy} & \multirow{2}{*}{EGO-Swarm2} & \multirow{2}{*}{Soft} & $L_2$ & \best{100.0} & {5.4} & \best{6.2} & {25.7} & {108.5} & {146.0} & {7.7} & \best{0.0} & \best{0.0} \\
       &  &  & $L_\infty$ & \best{100.0} & {5.7} & {6.5} & {25.0} & {107.2} & \best{142.4} & \worst{8.6} & \best{0.0} & \best{0.0} \\
       & \multirow{2}{*}{SUPER} & \multirow{2}{*}{Soft} & $L_2$ & \best{100.0} & 8.1 | 23.8 & \worst{32.0} & \worst{26.0} & \worst{122.2} & {1291.0} & \best{0.0} & \best{0.0} & \worst{8.3} \\
       & & & $L_\infty$ & \best{100.0} & 8.4 | 23.4 & {31.9} & {25.7} & {121.1} & \worst{1305.4} & \best{0.0} & \best{0.0} & {8.0} \\
       & FASTER & Hard & $L_\infty$ & \best{100.0} & {\worst{19.4} | 38.4} & {23.9} & {22.8} & \best{105.5} & {266.9} & {4.4} & \best{0.0} & \best{0.0} \\
       & SANDO & Hard & $L_\infty$ & \best{100.0} & \best{3.7} & {12.3} & \best{21.6} & {105.6} & {184.9} & \best{0.0} & \best{0.0} & \best{0.0} \\

      \midrule

      \multirow{6}{*}{Medium} & \multirow{2}{*}{EGO-Swarm2} & \multirow{2}{*}{Soft} & $L_2$ & \best{100.0} & {5.4} & {6.0} & \worst{27.2} & {109.4} & {196.7} & \worst{7.0} & \best{0.0} & \best{0.0} \\
       &  &  & $L_\infty$ & \best{100.0} & {4.7} & \best{5.2} & \worst{27.2} & {109.0} & {186.7} & {6.2} & \best{0.0} & \best{0.0} \\
       & \multirow{2}{*}{SUPER} & \multirow{2}{*}{Soft} & $L_2$ & \best{100.0} & 8.5 | 22.5 & \worst{31.1} & {25.8} & \worst{121.5} & {1358.9} & \best{0.0} & \best{0.0} & {8.4} \\
       & & & $L_\infty$ & \best{100.0} & 8.4 | 22.1 & {30.6} & {25.7} & {120.6} & \worst{1369.8} & \best{0.0} & \best{0.0} & \worst{8.6} \\
       & FASTER & Hard & $L_\infty$ & \worst{90.0} & {\worst{22.2} | 38.4} & {26.7} & {23.1} & \best{105.7} & {320.8} & {4.5} & \best{0.0} & \best{0.0} \\
       & SANDO & Hard & $L_\infty$ & \best{100.0} & \best{4.5} & {14.9} & \best{21.6} & {106.1} & {239.7} & \best{0.0} & \best{0.0} & \best{0.0} \\

      \midrule

      \multirow{6}{*}{Hard} & \multirow{2}{*}{EGO-Swarm2} & \multirow{2}{*}{Soft} & $L_2$ & \worst{50.0} & {23.5} & {26.8} & \worst{34.2} & {120.4} & {311.7} & {3.0} & \best{0.0} & \best{0.0} \\
       &  &  & $L_\infty$ & {60.0} & {21.9} & {24.5} & {32.3} & \worst{121.6} & {296.0} & {5.1} & \best{0.0} & \best{0.0} \\
       & \multirow{2}{*}{SUPER} & \multirow{2}{*}{Soft} & $L_2$ & \best{100.0} & 8.8 | 18.9 & {27.9} & {24.6} & {114.9} & {1504.2} & \best{0.0} & \best{0.0} & {11.9} \\
       & & & $L_\infty$ & \best{100.0} & 9.2 | 19.5 & {28.8} & {24.8} & {115.4} & \worst{1527.3} & \best{0.0} & \best{0.0} & \worst{12.7} \\
       & FASTER & Hard & $L_\infty$ & {70.0} & {\worst{35.0} | 36.2} & \worst{41.4} & {30.6} & \best{107.9} & {1348.7} & \worst{7.9} & \best{0.0} & \best{0.0} \\
       & SANDO & Hard & $L_\infty$ & \best{100.0} & \best{6.1} & \best{21.6} & \best{22.2} & {108.7} & {581.4} & \best{0.0} & \best{0.0} & \best{0.0} \\
      \bottomrule
    \end{tabular}
  }
  \vspace{-1.0em}
\end{table*}

\subsection{SANDO in Dynamic Environments}\label{sec:benchmarking-dynamic-environments}

\begin{figure}
  \centering
  \includegraphics[width=\columnwidth]{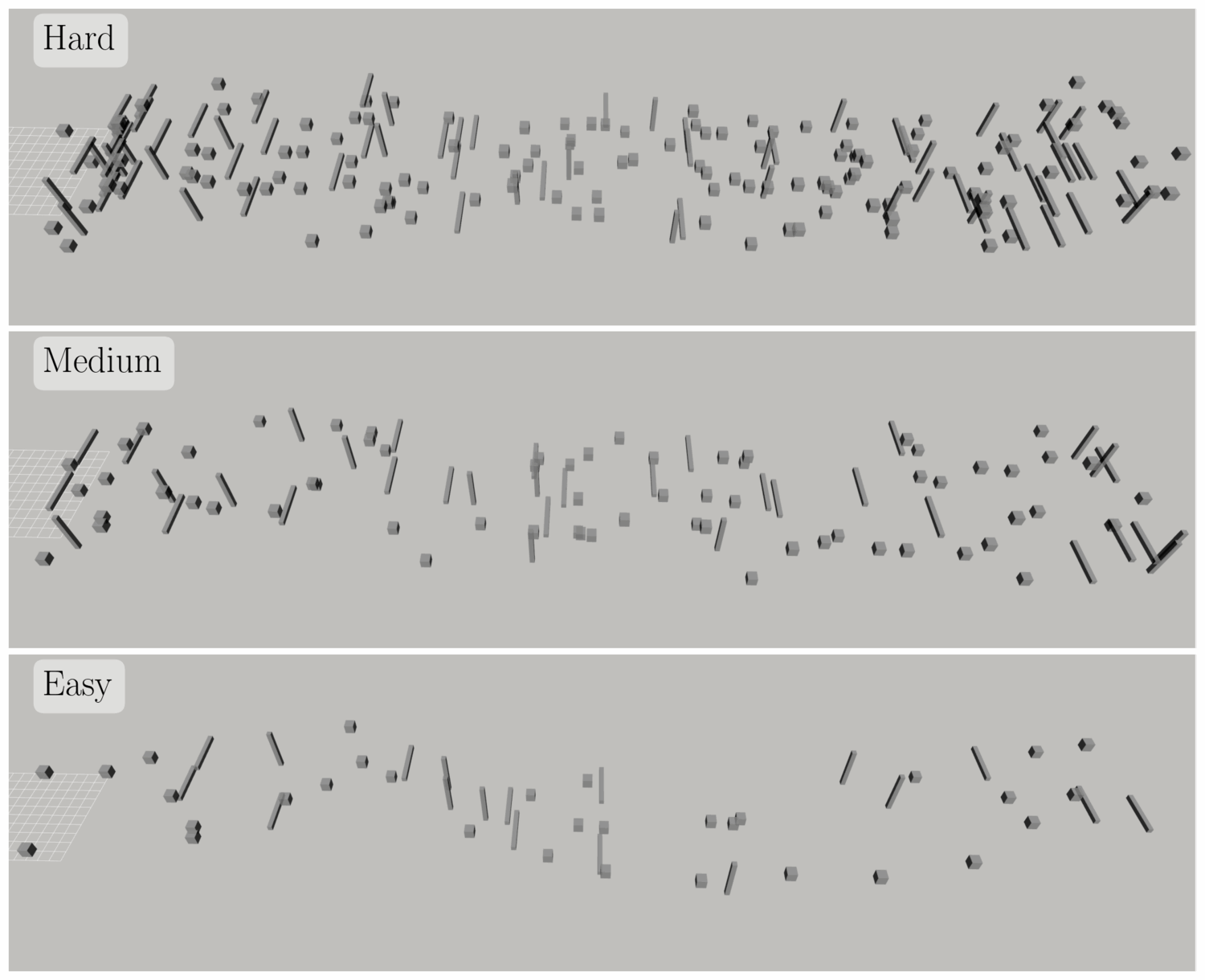} \vspace*{-0.2in}
  \caption{Dynamic Benchmarking: The dynamic Gazebo environment used for benchmarking SANDO against state-of-the-art methods at three difficulty levels (Easy, Medium, and Hard). The environment contains a mix of static cylindrical obstacles and dynamic cube obstacles following trefoil knot trajectories with randomized parameters.\label{fig:dynamic_benchmarking_environment}}
  \vspace{-1em}
\end{figure}

\begin{figure}
  \centering
  \includegraphics[width=\columnwidth]{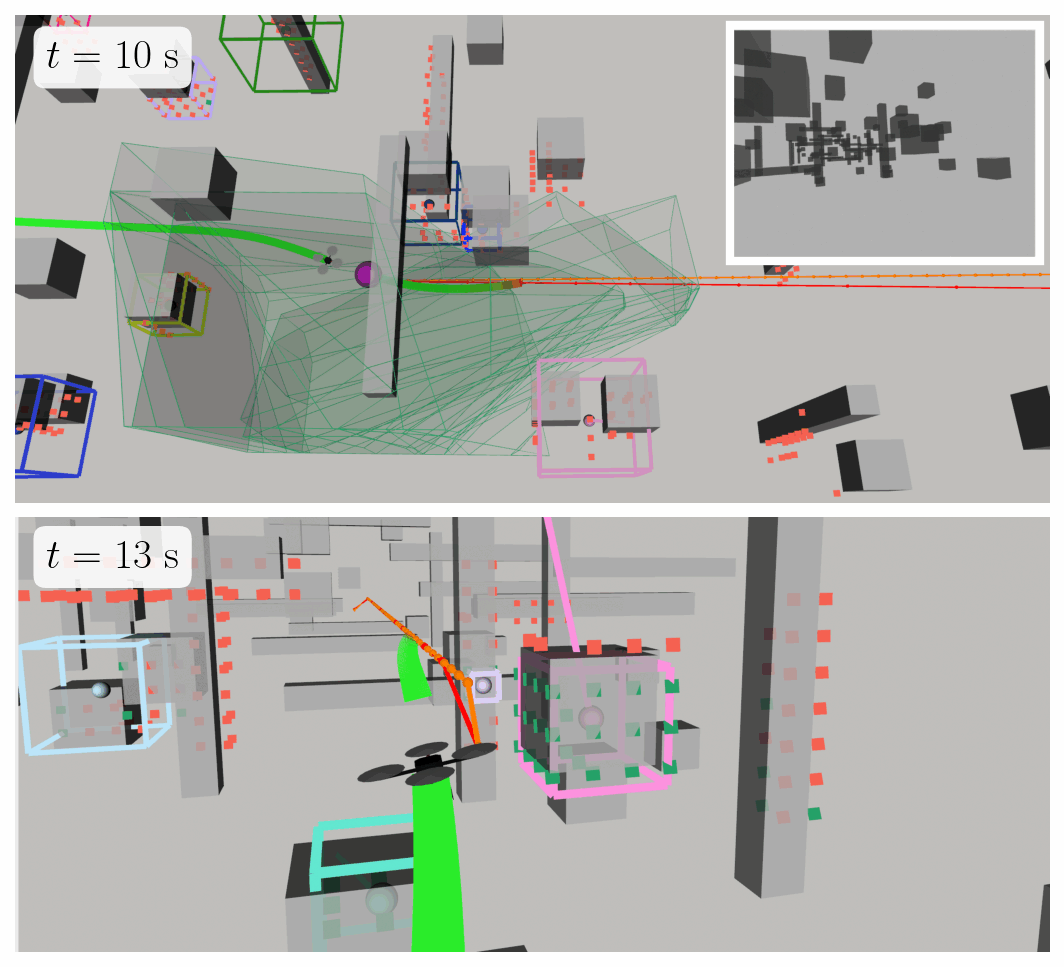} \vspace*{-0.2in}
  \caption{Rviz Visualization of Dynamic Benchmarking (Hard Environment): We use the same visualization scheme as in Fig.~\ref{fig:static_benchmarking_rviz}. The estimated AABBs of the dynamic obstacles are shown as colored boxes and the predicted trajectories of the dynamic obstacles are shown as colored lines. The spatiotemporal safe flight corridors (STSFC) are shown as green polytopes. Notice that STSFCs have many layers in the time dimension, and hence many more polytopes than the static case.\label{fig:dynamic_benchmarking_rviz}}
  \vspace{-1em}
\end{figure}

We next evaluated SANDO in environments containing both static and dynamic obstacles to test its spatiotemporal collision avoidance capability.
The environment spans a $\SI{100}{m}\times\SI{40}{m}$ forest area populated with static cylindrical obstacles and dynamic obstacles modeled as \SI{0.8}{m} cubes following trefoil knot trajectories with randomized parameters (position, scale, speed, and time offset) (Fig.~\ref{fig:dynamic_benchmarking_environment}).
Three difficulty levels are defined: Easy (50 obstacles, ${\sim}33$ dynamic), Medium (100 obstacles, ${\sim}65$ dynamic), and Hard (200 obstacles, ${\sim}130$ dynamic), with approximately 65\% of obstacles being dynamic.
The agent starts at $(0,0,2)\,\si{m}$ with a goal at $(105,0,2)\,\si{m}$.

We benchmarked SANDO against EGO-Swarm2~\cite{zhou2022swarm}, I-MPC~\cite{xu2025intent}, and FAPP~\cite{lu2024fapp} with 10 simulations per difficulty level.
All methods use $L_\infty$ dynamic constraints set to $v_{\max} = 5.0$\,\SI{}{\m/\s}, $a_{\max} = 20.0$\,\SI{}{\m/\s\squared}, and $j_{\max} = 100.0$\,\SI{}{\m/\s\cubed}.
For I-MPC, we swept over six combinations of velocity and acceleration limits ($v_{\max} \in \{1,2,3,4,5\}$\,\SI{}{\m/\s} with matched $a_{\max}$, plus $v_{\max}=5$\,\SI{}{\m/\s}, $a_{\max}=20$\,\SI{}{\m/\s\squared}) because at the nominal limits ($v_{\max}=5$\,\SI{}{\m/\s}, $a_{\max}=20$\,\SI{}{\m/\s\squared}), I-MPC reports large constraint violations.
Constraint violations for all methods are evaluated against the shared limits $v_{\max}=5$\,\SI{}{\m/\s} and $a_{\max}=20$\,\SI{}{\m/\s\squared}.
Each method uses different inputs for dynamic obstacles.
For instance, EGO-Swarm2 does not have a built-in dynamic obstacle estimator/predictor, but I-MPC and FAPP have built-in estimators/predictors.
For a fair comparison, we feed the ground truth positions and future predictions of dynamic obstacles to EGO-Swarm2 since it only receives a quintic polynomial prediction of each dynamic obstacle with a clearance radius.
For I-MPC, we feed the ground truth positions, velocities, and sizes of dynamic obstacles since that is what their built-in \texttt{fake\_detector} provides.
For FAPP, we also feed the ground truth positions and velocities.
FAPP uses a fixed squared-distance threshold of \SI{1.96}{\meter\squared} in its obstacle avoidance cost, corresponding to an effective avoidance radius of $\sqrt{1.96} \approx 1.4\,\mathrm{m}$ from the obstacle center regardless of the actual obstacle size; \ie it does not adapt per obstacle.
In contrast, for this benchmarking, SANDO receives only the current position of dynamic obstacles, and hence SANDO receives the least information about dynamic obstacles compared to other methods. 

Table~\ref{tab:dynamic_benchmark} summarizes the results, and Fig.~\ref{fig:dynamic_benchmarking_rviz} shows a visualization of one of SANDO's simulation runs in RViz.
SANDO achieves a 100\% success rate across all difficulty levels, while all other methods exhibit failures in one or more cases.
SANDO ($P$=3) also achieves the shortest path length in all cases and the fastest travel time in easy and hard.
Most notably, SANDO maintains zero constraint violations in velocity, acceleration, and jerk across all cases, demonstrating that hard constraints in the MIQP formulation reliably enforce dynamic feasibility even in dense dynamic environments.
EGO-Swarm2 achieves 100\% success in easy and medium but drops to 80\% in hard, with velocity violations of 5.1--11.8\% across all cases.
FAPP achieves 80\% success in easy and medium and 50\% in hard, with small velocity violations (3.1--8.1\%) and no acceleration or jerk violations.
Among I-MPC variants, lower velocity and acceleration limits reduce constraint violations, while higher limits lead to constraint violations without improved success rates.
I-MPC's jerk violations are reported as ``{-}'' because it does not enforce jerk constraints.
These results show that SANDO's STSFC approach with worst-case reachable set inflation maintains safety under the most conservative obstacle information assumption.

\begin{table*}
  \caption{Dynamic obstacle benchmarking results. We report success rate, computation time, flight performance, smoothness, and constraint violation metrics. We mark in \best{green} the best value and in \worst{red} the worst value in each column per case.}
  \label{tab:dynamic_benchmark}
  \centering
  \renewcommand{\arraystretch}{1.0}
  \resizebox{\textwidth}{!}{
    \begin{tabular}{c c @{\hskip -0.3em} c c c c c c c c c}
      \toprule
      \multirow{2}{*}[-0.4em]{\textbf{Env}}
      & \multicolumn{2}{c}{\multirow{2}{*}[-0.4em]{\textbf{Algorithm}}}
      & \multicolumn{1}{c}{\textbf{Success}}
      & \multicolumn{1}{c}{\textbf{Comp. Time}}
      & \multicolumn{3}{c}{\textbf{Performance}}
      & \multicolumn{3}{c}{\textbf{Constraint Violation}}
      \\
      \cmidrule(lr){4-4}
      \cmidrule(lr){5-5}
      \cmidrule(lr){6-8}
      \cmidrule(lr){9-11}
      &&&
      $R_{\mathrm{succ}}$ [\%] &
      $T^{\mathrm{per}}_{\mathrm{opt}}$ [ms] &
      $T_{\mathrm{trav}}$ [s] &
      $L_{\mathrm{path}}$ [m] &
      $S_{\mathrm{jerk}}$ [m/s$^{2}$] &
      $\rho_{\mathrm{vel}}$ [\%] &
      $\rho_{\mathrm{acc}}$ [\%] &
      $\rho_{\mathrm{jerk}}$ [\%]
      \\
      \midrule
      \multirow{10}{*}{Easy}
       & \multicolumn{2}{c}{EGO-Swarm2} & \best{100.0} & 5.7 & 25.4 & 107.5 & \best{130.4} & 11.8 & \best{0.0} & \best{0.0} \\
      & I-MPC & \multicolumn{1}{l}{($v$=1, $a$=1)} & 80.0 & 12.1 & \worst{97.3} & 117.3 & 1029.6 & \best{0.0} & \best{0.0} & {-} \\
      & I-MPC & \multicolumn{1}{l}{($v$=2, $a$=2)} & 90.0 & 12.3 & 52.7 & 115.6 & 1167.1 & \best{0.0} & \best{0.0} & {-} \\
      & I-MPC & \multicolumn{1}{l}{($v$=3, $a$=3)} & 90.0 & 12.1 & 38.3 & 117.7 & 1254.6 & \best{0.0} & \best{0.0} & {-} \\
      & I-MPC & \multicolumn{1}{l}{($v$=4, $a$=4)} & 90.0 & \worst{12.5} & 29.6 & 104.8 & 1236.6 & 1.3 & \best{0.0} & {-} \\
      & I-MPC & \multicolumn{1}{l}{($v$=5, $a$=5)} & \worst{70.0} & 10.9 & 25.7 & 118.8 & 1322.9 & \worst{56.2} & \best{0.0} & {-} \\
      & I-MPC & \multicolumn{1}{l}{($v$=5, $a$=20)} & 80.0 & 11.9 & 26.2 & \worst{131.1} & \worst{2383.3} & 47.2 & \worst{4.8} & {-} \\
      & \multicolumn{2}{c}{FAPP} & 80.0 & \best{0.6} & 22.8 & 106.3 & 178.9 & 8.1 & \best{0.0} & \best{0.0} \\
      & \multicolumn{2}{c}{SANDO ($P$=2)} & \best{100.0} & 2.5 & 24.9 & 105.8 & 559.1 & \best{0.0} & \best{0.0} & \best{0.0} \\
      & \multicolumn{2}{c}{SANDO ($P$=3)} & \best{100.0} & 3.0 & \best{21.8} & \best{105.5} & 236.0 & \best{0.0} & \best{0.0} & \best{0.0} \\
      \midrule
      \multirow{10}{*}{Medium}
       & \multicolumn{2}{c}{EGO-Swarm2} & \best{100.0} & 8.5 & 26.2 & 108.3 & \best{145.3} & 10.4 & \best{0.0} & \best{0.0} \\
      & I-MPC & \multicolumn{1}{l}{($v$=1, $a$=1)} & 70.0 & 22.7 & \worst{96.7} & 123.6 & 1037.0 & \best{0.0} & \best{0.0} & {-} \\
      & I-MPC & \multicolumn{1}{l}{($v$=2, $a$=2)} & 70.0 & \worst{22.9} & 53.3 & 119.9 & 1181.9 & \best{0.0} & \best{0.0} & {-} \\
      & I-MPC & \multicolumn{1}{l}{($v$=3, $a$=3)} & 60.0 & 22.2 & 38.4 & 122.2 & 1123.2 & \best{0.0} & \best{0.0} & {-} \\
      & I-MPC & \multicolumn{1}{l}{($v$=4, $a$=4)} & \worst{30.0} & 20.2 & 30.1 & 120.9 & 1138.6 & 3.5 & \best{0.0} & {-} \\
      & I-MPC & \multicolumn{1}{l}{($v$=5, $a$=5)} & 50.0 & 17.6 & 25.1 & 120.8 & 1100.2 & \worst{60.3} & \best{0.0} & {-} \\
      & I-MPC & \multicolumn{1}{l}{($v$=5, $a$=20)} & \worst{30.0} & 17.3 & 27.0 & \worst{134.0} & \worst{2585.8} & 56.9 & \worst{4.2} & {-} \\
      & \multicolumn{2}{c}{FAPP} & 80.0 & \best{0.7} & \best{21.7} & 106.7 & 240.0 & 4.8 & \best{0.0} & \best{0.0} \\
      & \multicolumn{2}{c}{SANDO ($P$=2)} & \best{100.0} & 2.9 & 24.6 & 106.4 & 641.6 & \best{0.0} & \best{0.0} & \best{0.0} \\
      & \multicolumn{2}{c}{SANDO ($P$=3)} & \best{100.0} & 3.8 & 21.8 & \best{105.9} & 356.3 & \best{0.0} & \best{0.0} & \best{0.0} \\
      \midrule
      \multirow{10}{*}{Hard}
       & \multicolumn{2}{c}{EGO-Swarm2} & 80.0 & 18.7 & 29.6 & 114.2 & \best{221.2} & 5.1 & \best{0.0} & \best{0.0} \\
      & I-MPC & \multicolumn{1}{l}{($v$=1, $a$=1)} & 40.0 & 31.7 & \worst{85.1} & 121.3 & 940.1 & \best{0.0} & \best{0.0} & {-} \\
      & I-MPC & \multicolumn{1}{l}{($v$=2, $a$=2)} & 30.0 & 30.9 & 51.0 & 123.3 & 942.8 & \best{0.0} & \best{0.0} & {-} \\
      & I-MPC & \multicolumn{1}{l}{($v$=3, $a$=3)} & 30.0 & \worst{32.5} & 36.7 & 119.9 & 1047.0 & \best{0.0} & \best{0.0} & {-} \\
      & I-MPC & \multicolumn{1}{l}{($v$=4, $a$=4)} & \worst{0.0} & {-} & {-} & {-} & {-} & {-} & {-} & {-} \\
      & I-MPC & \multicolumn{1}{l}{($v$=5, $a$=5)} & 10.0 & 26.8 & 28.5 & 132.3 & 983.9 & \worst{44.6} & \best{0.0} & {-} \\
      & I-MPC & \multicolumn{1}{l}{($v$=5, $a$=20)} & 10.0 & 30.0 & 26.4 & \worst{198.0} & \worst{3688.7} & 41.7 & \worst{36.3} & {-} \\
      & \multicolumn{2}{c}{FAPP} & 50.0 & \best{0.6} & 27.9 & 111.3 & 587.1 & 3.1 & \best{0.0} & \best{0.0} \\
      & \multicolumn{2}{c}{SANDO ($P$=2)} & \best{100.0} & 3.6 & 27.0 & 110.0 & 1198.4 & \best{0.0} & \best{0.0} & \best{0.0} \\
      & \multicolumn{2}{c}{SANDO ($P$=3)} & \best{100.0} & 5.0 & \best{22.7} & \best{108.5} & 669.6 & \best{0.0} & \best{0.0} & \best{0.0} \\
      \bottomrule
    \end{tabular}
  }
  \vspace{-1.0em}
\end{table*}

\subsection{STSFC Ablation}\label{sec:temporal-sfc-ablation}

To evaluate the effectiveness of the STSFC approach, we compared it against a worst-case baseline that inflates all dynamic obstacles by the maximum time horizon.
The STSFC approach inflates obstacles per layer using $r_n = v^{\mathrm{obs}}_{\max} \cdot (n{+}1) \cdot dt + \epsilon$, where $(n{+}1) \cdot dt$ is the end time of layer $n$ (Section~\ref{subsec:temporal_safety_corridor}), whereas the worst-case baseline uses $r = v^{\mathrm{obs}}_{\max} \cdot T_{\text{traj}} + \epsilon$ for all layers, where $T_{\text{traj}} = N \cdot dt$ is the total trajectory duration.
We conducted the comparison in the Hard dynamic environment from Section~\ref{sec:benchmarking-dynamic-environments} at two maximum velocities ($v_{\max} = 2.5$ and $5.0$\,m/s) to examine how corridor inflation interacts with agent speed.

Table~\ref{tab:sfc_ablation} summarizes the results.
At $v_{\max} = 2.5$\,m/s, the worst-case baseline achieves only 80\% success rate because the large uniform inflation radius results in replanning failures and stoppage when the agent gets hit by dynamic obstacles.
The STSFC approach maintains a 100\% success rate by inflating obstacles proportionally to each time layer, preserving more free space in earlier layers while still guaranteeing safety.
STSFC also achieves substantially lower computation time (8.9 vs.\ 15.5\,ms), faster travel time (43.8 vs.\ 58.8\,s), and a much smoother trajectory (jerk integral 330.6 vs.\ 1392.2\,m/s$^2$).
At $v_{\max} = 5.0$\,m/s, both approaches achieve 100\% success, as the faster trajectory (and hence shorter traversal time) reduces the worst-case inflation radius and allows the worst-case approach to succeed without many replanning failures.
Nevertheless, STSFC still provides lower computation time (5.0 vs.\ 6.4\,ms), faster travel time (22.7 vs.\ 23.5\,s), and a smoother trajectory (jerk integral 669.6 vs.\ 753.3\,m/s$^2$) than the worst-case baseline.

\begin{table}
  \caption{SFC ablation study. We compare Worst-Case SFC and Spatiotemporal SFC (STSFC) at different maximum velocities in the Hard dynamic environment. Neither approach exhibited any constraint violations. We highlight the \best{best} and \worst{worst} value for each velocity.}
  \label{tab:sfc_ablation}
  \centering
  \renewcommand{\arraystretch}{1.0}
  \setlength{\tabcolsep}{4pt}
  \resizebox{\columnwidth}{!}{
    \begin{tabular}{c c c c c c c}
      \toprule
      \textbf{$v_{\max}$} &
      \multirow{2}{*}[-0.2em]{\textbf{SFC Mode}} &
      $R_{\mathrm{succ}}$ &
      $T^{\mathrm{per}}_{\mathrm{opt}}$ &
      $T_{\mathrm{trav}}$ &
      $L_{\mathrm{path}}$ &
      $S_{\mathrm{jerk}}$
      \\
      {[m/s]} & &
      {[\%]} &
      {[ms]} &
      {[s]} &
      {[m]} &
      {[m/s$^{2}$]}
      \\
      \midrule
      \multirow{2}{*}{2.5} & Worst-Case & \worst{80.0} & \worst{15.5} & \worst{58.8} & \worst{111.1} & \worst{1392.2} \\
       & SANDO (STSFC) & \best{100.0} & \best{8.9} & \best{43.8} & \best{109.7} & \best{330.6} \\
      \midrule
      \multirow{2}{*}{5.0} & Worst-Case & \best{100.0} & \worst{6.4} & \worst{23.5} & \best{108.2} & \worst{753.3} \\
       & SANDO (STSFC) & \best{100.0} & \best{5.0} & \best{22.7} & \worst{108.5} & \best{669.6} \\
      \bottomrule
    \end{tabular}
  }
  \vspace{-1.0em}
\end{table}

\subsection{Dynamic Environments without Ground Truth Obstacle Knowledge}\label{sec:dynamic-no-gt}

\begin{table*}
  \caption{Benchmark results in unknown dynamic environments. SANDO navigates using only pointcloud sensing (no ground truth obstacle trajectories). We highlight the \best{best} and \worst{worst} value for each environment.}
  \label{tab:unknown_dynamic_benchmark}
  \centering
  \renewcommand{\arraystretch}{1.0}
  \resizebox{\textwidth}{!}{
    \begin{tabular}{c c c c c c c c c c}
      \toprule
      \multirow{2}{*}[-0.4ex]{\textbf{Env}}
      & \multirow{2}{*}[-0.4ex]{\textbf{$P$}}
      & \multicolumn{1}{c}{\textbf{Success}}
      & \multicolumn{3}{c}{\textbf{Comp.\ Time}}
      & \multicolumn{3}{c}{\textbf{Performance}}
      & \multicolumn{1}{c}{\textbf{Constr.\ Viol.}}
      \\
      \cmidrule(lr){3-3}
      \cmidrule(lr){4-6}
      \cmidrule(lr){7-9}
      \cmidrule(lr){10-10}
      &&
      $R_{\mathrm{succ}}$ [\%]
      & $T^{\mathrm{per}}_{\mathrm{opt}}$ [ms]
      & $T_{\mathrm{replan}}$ [ms]
      & $T_{\mathrm{STSFC}}$ [ms]
      & $T_{\mathrm{trav}}$ [s]
      & $L_{\mathrm{path}}$ [m]
      & $S_{\mathrm{jerk}}$ [m/s$^{2}$]
      & $\rho_{\mathrm{viol}}$ [\%]
      \\
      \midrule
      \multirow{2}{*}{Easy} & 2 & \worst{93.0} & \best{3.0} & \best{17.2} & \best{6.3} & \worst{23.1} & \worst{105.7} & \worst{306.7} & \best{0.0} \\
       & 3 & \best{95.0} & \worst{4.6} & \worst{19.9} & \worst{7.2} & \best{22.6} & \best{105.6} & \best{217.9} & \best{0.0} \\
      \midrule
      \multirow{2}{*}{Medium} & 2 & \best{90.0} & \best{3.7} & \best{22.1} & \best{8.5} & \worst{24.0} & \worst{106.6} & \worst{472.3} & \best{0.0} \\
       & 3 & \worst{86.0} & \worst{6.0} & \worst{26.4} & \worst{10.1} & \best{22.8} & \best{106.4} & \best{334.7} & \best{0.0} \\
      \midrule
      \multirow{2}{*}{Hard} & 2 & \best{64.0} & \best{4.6} & \best{30.6} & \best{12.2} & \worst{24.9} & \worst{109.2} & \worst{836.8} & \best{0.0} \\
       & 3 & \worst{51.0} & \worst{8.5} & \worst{37.4} & \worst{14.7} & \best{23.7} & \best{108.1} & \best{576.0} & \best{0.0} \\
      \bottomrule
    \end{tabular}
  }
  \vspace{-1.0em}
\end{table*}

The dynamic benchmarks in Sections~\ref{sec:benchmarking-dynamic-environments} and~\ref{sec:temporal-sfc-ablation} provide all methods with ground truth obstacle information for fair comparison.
In practice, however, this information is not available.
This section evaluates SANDO in the same dynamic environments but without any ground truth obstacle knowledge, testing the full perception-to-planning pipeline.

The environment, obstacle configuration, dynamic constraints, start/goal positions, and trial settings are identical to those in Section~\ref{sec:benchmarking-dynamic-environments}.
For sensing, we used a simulated Intel RealSense D435 depth camera (via the \texttt{realsense-ros} package) rather than the Livox MID-360 LiDAR used in the static benchmarks, because the simulated LiDAR produces too few points on the small dynamic obstacles for reliable detection.
This is a simulation artifact; in hardware experiments (Section~\ref{sec:hardware-experiments}), the real LiDAR sensor provides sufficient point density for dynamic obstacle detection.
Point clouds are processed by the temporal occupancy grid and AEKF-based dynamic obstacle tracker (Section~\ref{sec:obstacle_tracking}); the planner receives only estimated obstacle positions, bounding boxes, and predicted trajectories rather than ground truth.
We evaluated two spatial polytope configurations ($P=2$ and $P=3$).

Table~\ref{tab:unknown_dynamic_benchmark} summarizes the results, where $T_{\mathrm{STSFC}}$ denotes the spatiotemporal safe flight corridor generation time.
$P=2$ generally achieves faster per-optimization time (3.0--4.6\,ms vs.\ 4.6--8.5\,ms for $P=3$), while $P=3$ produces smoother trajectories with lower jerk integrals.
In the easy case, both configurations achieve high success rates (93\% for $P=2$, 95\% for $P=3$).
In the medium case, $P=2$ achieves 90\% while $P=3$ drops to 86\%.
In the hard case, success rates are 64\% ($P=2$) and 51\% ($P=3$).
This is likely because a shorter replanning horizon and faster computation time with $P=2$ allows more frequent replanning and quicker reactions to unexpected obstacles, which is beneficial in dense dynamic environments where perception uncertainty is high.
Neither configuration produced any constraint violations across all difficulty levels.

Compared to the ground-truth results in Section~\ref{sec:benchmarking-dynamic-environments}, where SANDO achieves 100\% success across all difficulties, the perception-only gap is small in the easy case (93--95\% vs.\ 100\%) but becomes larger in medium (86--90\% vs.\ 100\%) and hard (51--64\% vs.\ 100\%) cases.
The primary failure mode is late detection of dynamic obstacles: when obstacles are not detected until they are close to the agent, the planner has insufficient time to generate a collision-free trajectory, resulting in collisions.
This highlights the challenge of perception uncertainty in dense dynamic environments.
Nevertheless, these results confirm that SANDO can maintain a high success rate even without any ground truth obstacle information, showcasing the robustness of its full perception-to-planning pipeline in dynamic environments.

\section{Hardware Experiments}\label{sec:hardware-experiments}

To evaluate the performance of SANDO, we conducted hardware experiments in static and dynamic environments.
Fig.~\ref{fig:hardware_uav_picture} shows the UAV platform used in our experiments. 
For perception, we use a Livox Mid-360 LiDAR sensor, and for localization, we use onboard DLIO~\cite{chen2023dlio}.
SANDO runs on an Intel\textsuperscript{\texttrademark} NUC 13 with an Intel\textsuperscript{\textregistered} Core~\textsuperscript{TM} i7 CPU $\times$16, 64 GB of RAM, and for low-level control, we use PX4~\cite{meier2015px4} on a Pixhawk flight controller~\cite{meier2011pixhawk}.  
All perception, planning, control, and localization modules run onboard in real time, enabling fully autonomous operations.
As summarized in Table~\ref{tab:experiment_parameters}, we use $w_{\text{heat}} = 5.0$ for Experiments 7--10 (single obstacle) and increase it to $w_{\text{heat}} = 20.0$ for Experiments 11--16 (five obstacles) to account for the higher collision risk in denser environments.
We also disabled unknown space inflation for the hardware experiments because the MID-360's field of view has many blind spots, and inflating unknown space results in excessive conservatism close to the UAV and prevents it from finding any feasible trajectory.
As noted in Section~\ref{subsec:theoretical_safety_guarantees}, with unknown space inflation turned off, the formal safety guarantee of Theorem~\ref{thm:safety} covers only tracked dynamic obstacles; untracked obstacles emerging from unobserved space are not accounted for in the corridor generation.
Nevertheless, SANDO maintains practical safety through the combined effect of conservative obstacle inflation ($r_{\text{margin}} = 0.2$\,m, $r_{\text{drone}} = 0.45$\,m), continuous replanning, and heat map-based steering away from occluded areas, as demonstrated by 16 collision-free hardware flights across all experiments.

\begin{figure}[t]
    \centering
    \includegraphics[width=\columnwidth, clip, trim=0 120 0 10]{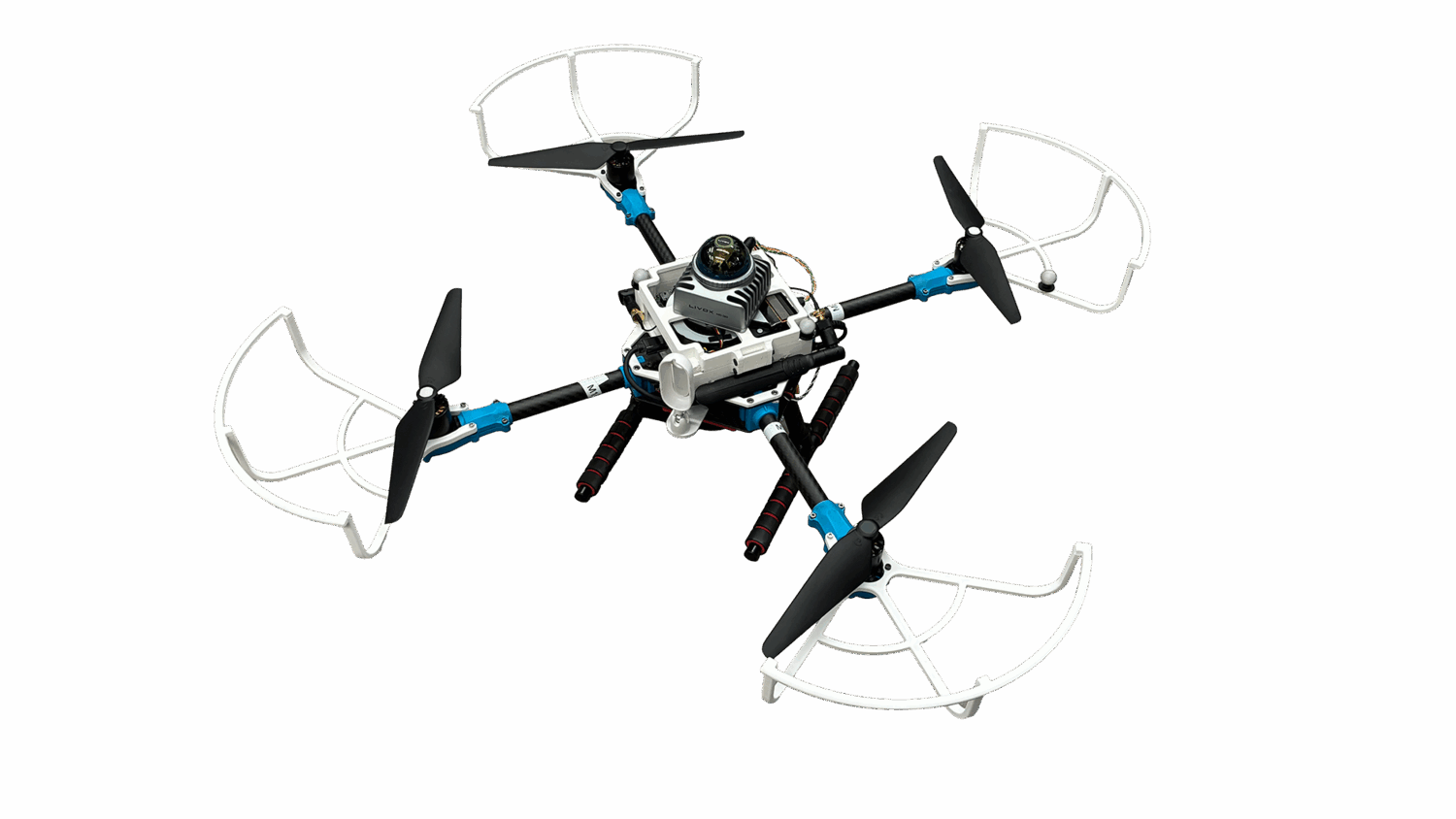}
    \caption{Holybro PX4 Development Kit X500 is equipped with a Pixhawk flight controller and protective propeller guards. 
    A Livox Mid-360 LiDAR is mounted on top for perception and localization. 
    All modules run fully onboard: perception, planning, and localization on an Intel\textsuperscript{\texttrademark} NUC 13, and low-level control on the Pixhawk flight controller.\label{fig:hardware_uav_picture}}
    \vspace{-0.5em}
\end{figure}

\subsection{Static Environments}

\begin{figure}[t]
    \centering
    \includegraphics[width=\columnwidth]{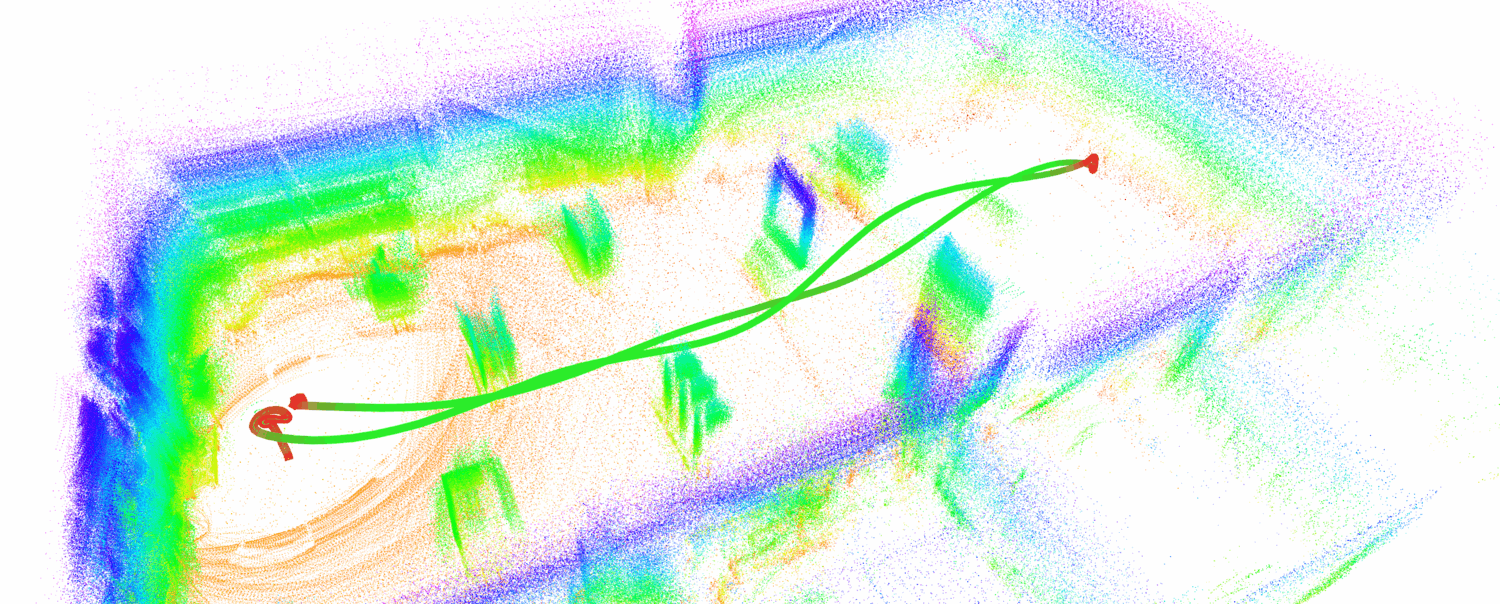}
    \caption{Static environment experiment (Experiment 3): the UAV successfully navigates through a cluttered environment with static obstacles. The agent's speed is color-coded (green for faster, red for slower), and the point cloud is colored by height.\label{fig:hw_static_pc}}
    \vspace{-0.5em}
\end{figure}

\begin{figure}[t]
    \centering
    \includegraphics[width=\columnwidth]{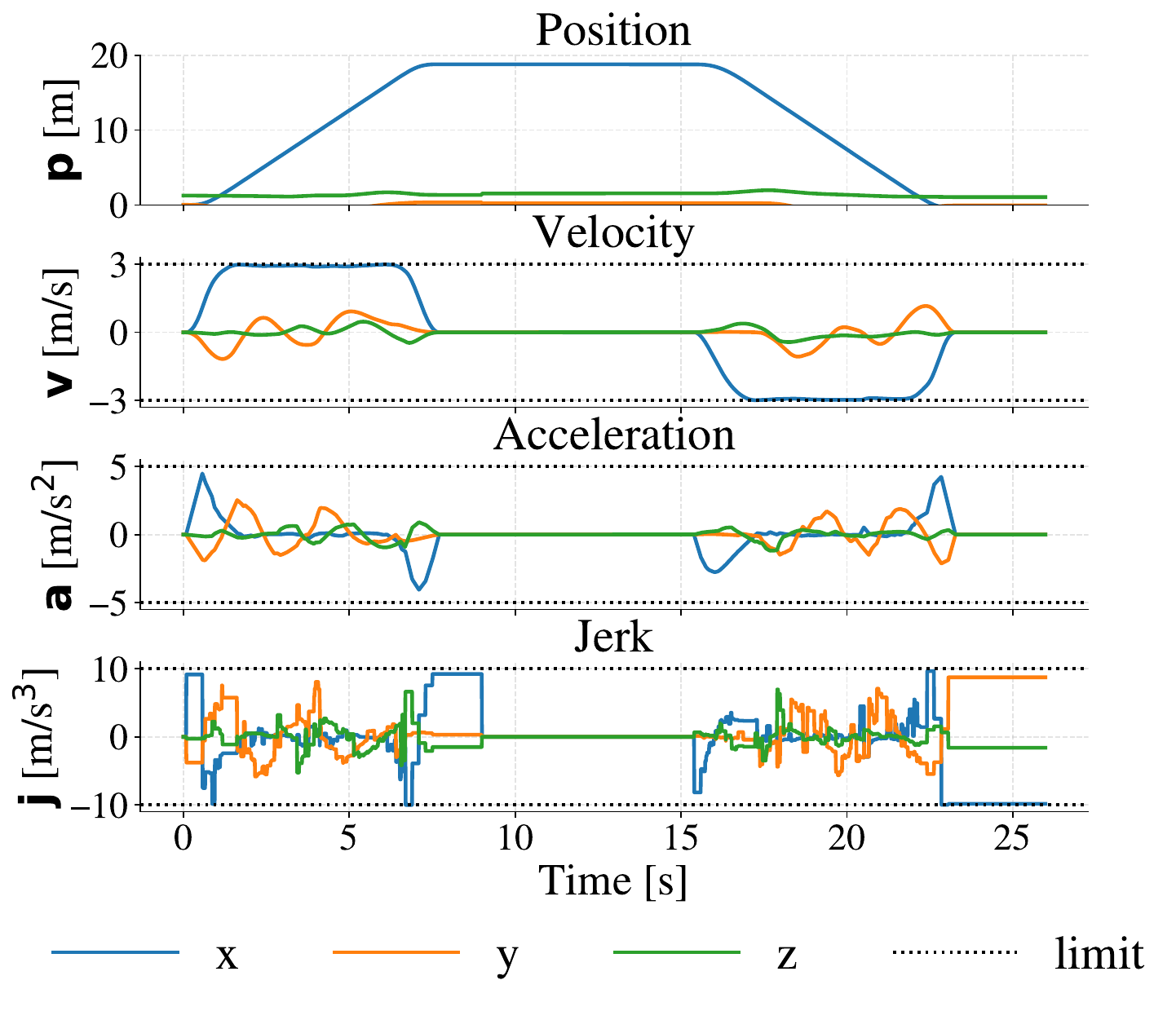}
    \vspace*{-2em}
    \caption{UAV's velocity profile in Experiment 3. UAV satisfies the dynamic constraints and the velocity profile is smooth. When the UAV reaches the goal, it rotates in place to face the starting point, which causes the velocity, acceleration, and jerk to drop to zero in the middle of the flight.\label{fig:hw_static_history}}
    \vspace{-0.5em}
\end{figure}

We first evaluated SANDO in a static indoor environment where obstacles were placed in a $20\,\text{m} \times 8\,\text{m}$ area.
The UAV was tasked with flying from $x=0.0$\,m to $x=20.0$\,m and returning to the start.
To assess performance across a range of speeds, we conducted six flights with velocity limits $v_{\max} \in \{1.0, 2.0, 3.0, 4.0, 5.0, 6.0\}$\,m/s.
The acceleration and jerk limits were set to $a_{\max} = 5.0$\,m/s$^2$ and $j_{\max} = 10.0$\,m/s$^3$ for the first four flights.
For Experiments~5 and 6, $j_{\max}$ was reduced to $7.5$\,m/s$^3$ to mitigate the larger tracking errors observed at higher speeds.
Fig.~\ref{fig:hw_static_pc} shows the resulting trajectory overlaid on the LiDAR point cloud; the UAV successfully avoids all static obstacles across the entire speed range.

\begin{table}
  \caption{Hardware flight computation times in static environments with increasing velocity limits. All use $a_{\max}=5$ m/s$^2$ and $j_{\max}=10$ m/s$^3$.}
  \label{tab:hw_static}
  \centering
  \renewcommand{\arraystretch}{1.0}
  \resizebox{\columnwidth}{!}{
    \begin{tabular}{c c c c c c}
      \toprule
      \multirow{2}{*}[-0.4ex]{\textbf{Exp.}} & \multirow{2}{*}[-0.4ex]{\makecell{$v_{\max}$ \\ {[m/s]}}} & \multicolumn{4}{c}{\textbf{Computation Time [ms]}} \\
      \cmidrule(lr){3-6}
      & & $T_{\mathrm{replan}}$ & $T_{\mathrm{global}}$ & $T_{\mathrm{SSFC}}$ & $T_{\mathrm{opt}}$ \\
      \midrule
      1 & 1.0 & $35.8 \pm 8.8$ & $0.1 \pm 0.1$ & $6.1 \pm 3.0$ & $14.0 \pm 6.3$ \\
      2 & 2.0 & $35.7 \pm 8.8$ & $0.1 \pm 0.1$ & $6.5 \pm 3.2$ & $12.3 \pm 6.4$ \\
      3 & 3.0 & $34.5 \pm 7.6$ & $0.1 \pm 0.1$ & $7.7 \pm 3.2$ & $9.7 \pm 5.2$ \\
      4 & 4.0 & $34.8 \pm 8.0$ & $0.1 \pm 0.1$ & $8.9 \pm 4.0$ & $8.6 \pm 4.2$ \\
      5 & 5.0 & $33.9 \pm 7.0$ & $0.1 \pm 0.1$ & $7.9 \pm 3.6$ & $8.3 \pm 4.0$ \\
      6 & 6.0 & $32.6 \pm 6.8$ & $0.1 \pm 0.1$ & $7.7 \pm 3.6$ & $8.2 \pm 4.3$ \\
      \bottomrule
    \end{tabular}
  }
\end{table}

Table~\ref{tab:hw_static} reports the computation times for all six flights, where $T_{\mathrm{replan}}$, $T_{\mathrm{global}}$, $T_{\mathrm{SSFC}}$, and $T_{\mathrm{opt}}$ denote the average total replanning, global planning, spatial safe flight corridor (SSFC) generation, and trajectory optimization times, respectively.
As in the static simulation benchmark (Section~\ref{sec:benchmarking-standardized-static-environments}), SANDO uses SSFCs rather than STSFCs since there are no dynamic obstacles.
Although the computation times are higher than in simulation due to the less powerful onboard computer, SANDO consistently maintains real-time performance, with average replanning times around \SI{35}{\ms} across all velocity limits.
Fig.~\ref{fig:hw_static_history} shows the velocity profile for Experiment~3, confirming that the UAV satisfies the dynamic constraints while maintaining smooth velocity transitions throughout the flight.

\begin{figure}[t]
    \centering
    \includegraphics[width=\columnwidth]{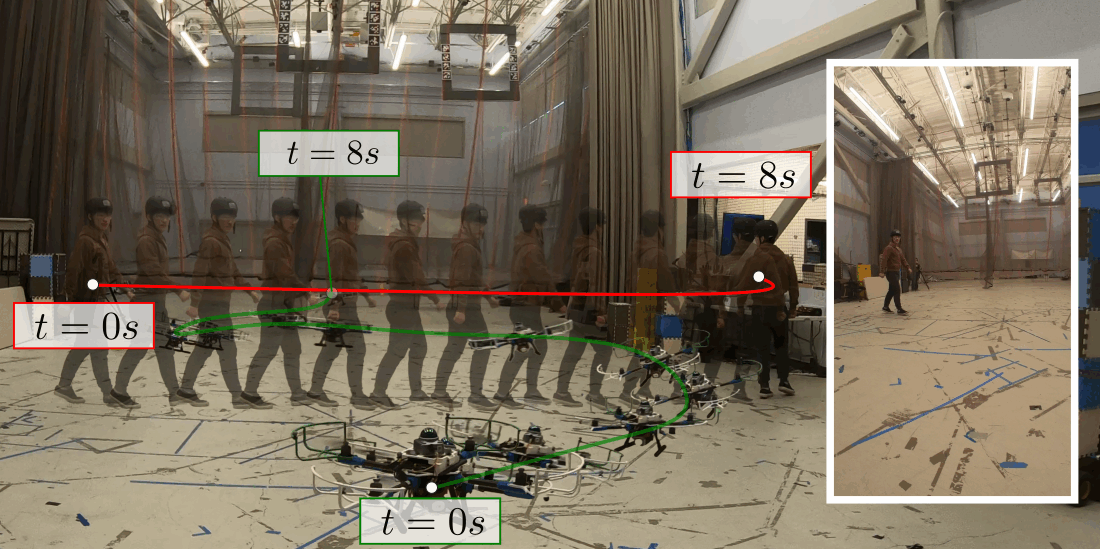}
    \vspace*{-1em}
    \caption{Experiment 10: the UAV avoids a single dynamic obstacle moving at \SI{0.5}{\m/\s} in an $8\,\text{m} \times 8\,\text{m}$ area. Screenshots are taken every 0.5\,s. The trajectory is color-coded by speed, and the onboard camera view is overlaid on the right side.\label{fig:hardware_uav_dynamic_exp10}}
    \vspace{-1em}
    \centering
    \includegraphics[width=\columnwidth]{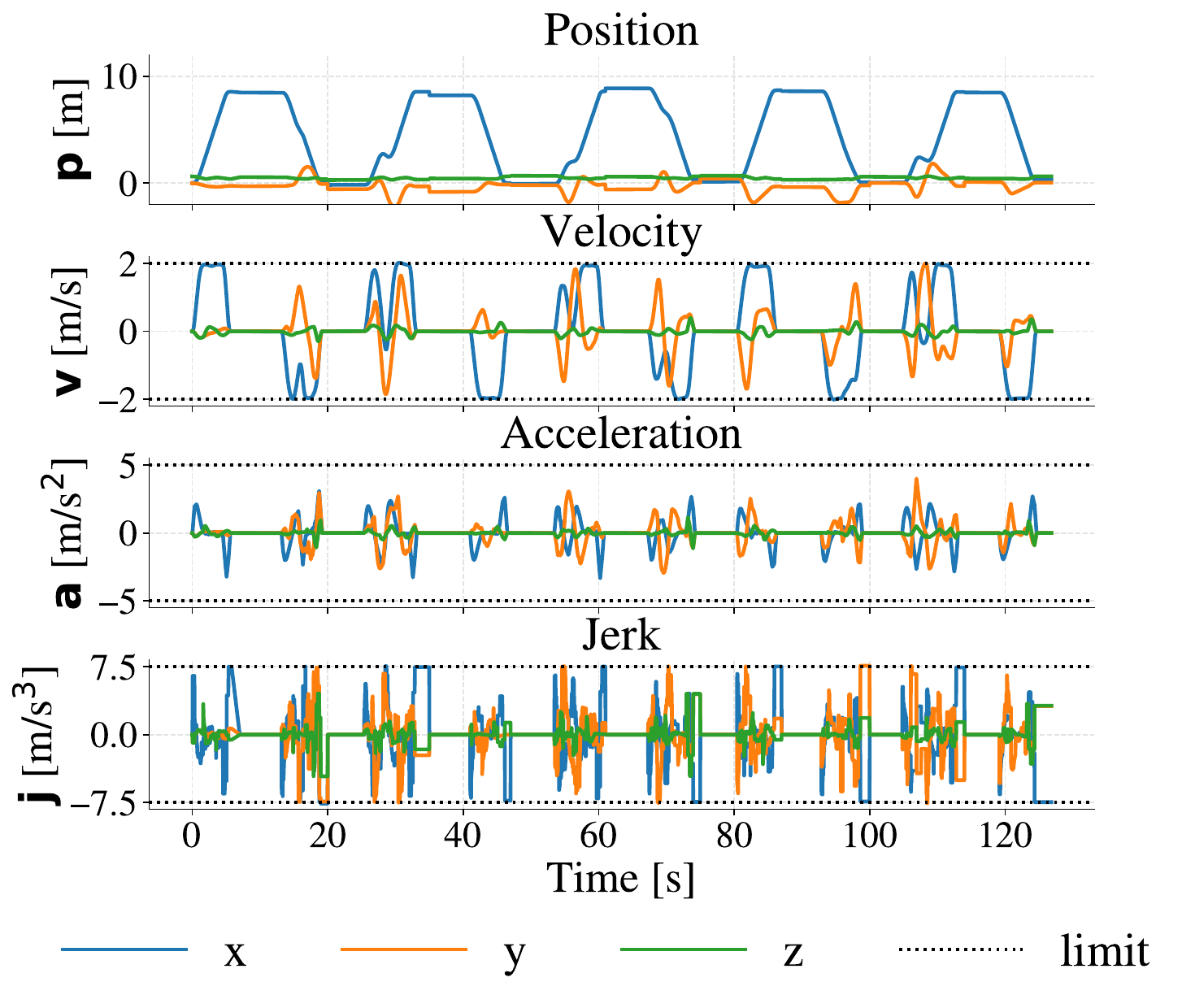} \vspace*{-0.2in}
    \caption{UAV's velocity profile in Experiment 10. The UAV satisfies the dynamic constraints while flying back and forth around a dynamic obstacle.\label{fig:hw_dynamic_history}}
    \vspace{-1em}
\end{figure}

\begin{figure}[htbp]
    \centering
    \includegraphics[width=\columnwidth]{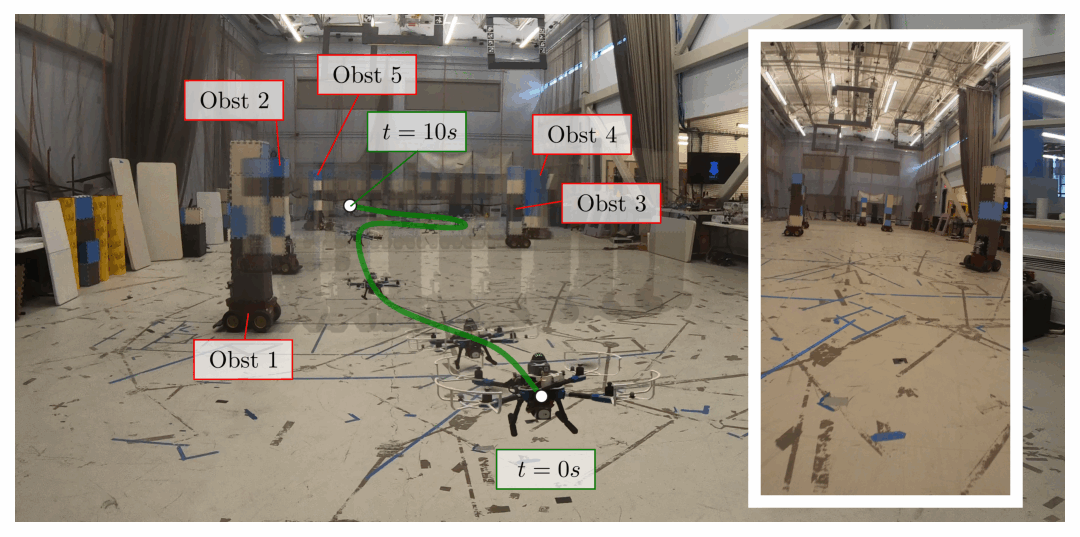}
    \vspace*{-1em}
    \caption{Experiment 13: the UAV navigates among five dynamic obstacles moving at \SI{0.5}{\m/\s} with random linear trajectories in an $8\,\text{m} \times 20\,\text{m}$ area. Screenshots are taken every 1\,s.\label{fig:hardware_uav_dynamic_exp12}}
    \vspace{-1em}
\end{figure}

\begin{figure}[th]
    \centering
    \includegraphics[width=\columnwidth]{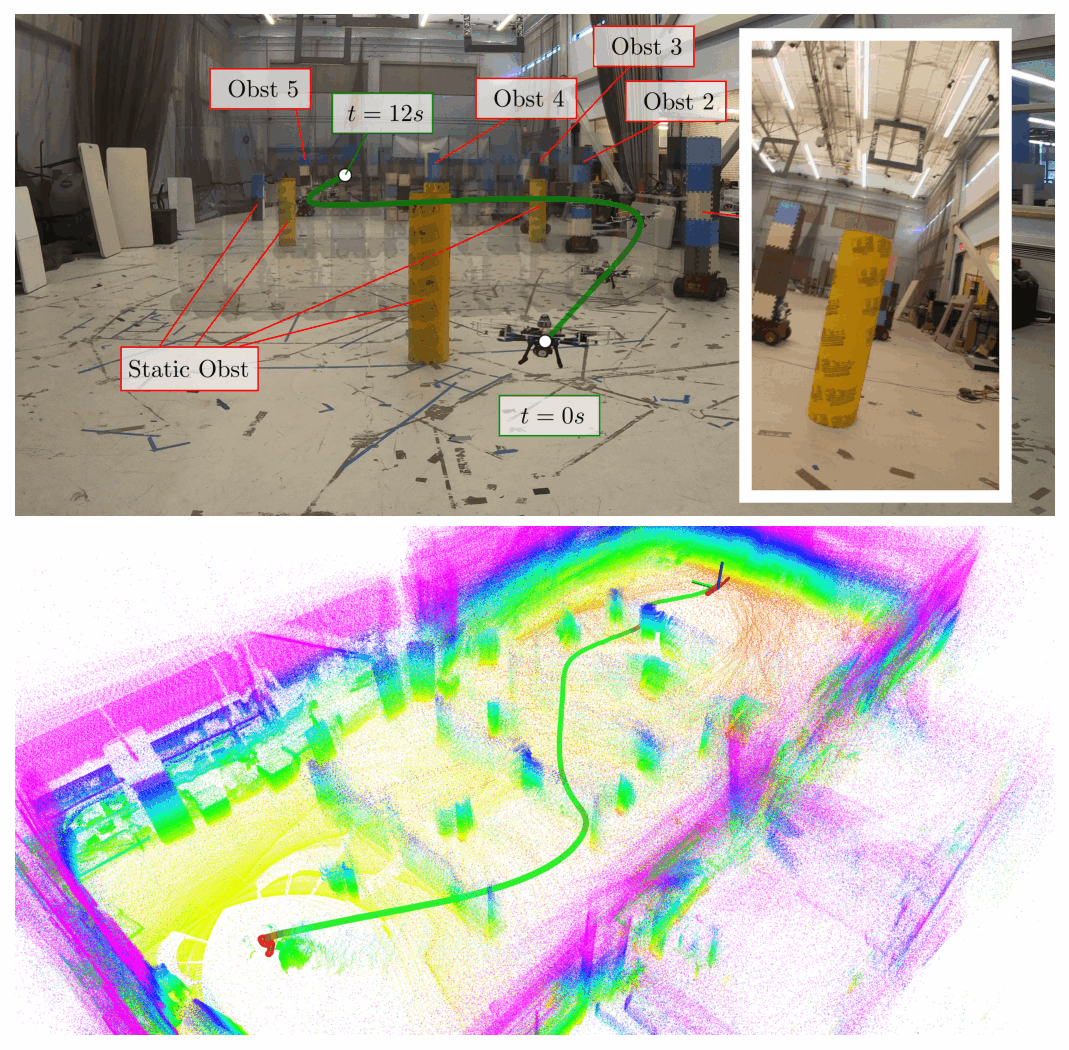} \vspace*{-0.2in}
    \caption{Experiment 14: the UAV flies at $v_{\max}=2.0$\,m/s through an $8\,\text{m} \times 20\,\text{m}$ area with five dynamic obstacles and additional static obstacles. Screenshots are taken every 1\,s.\label{fig:hardware_uav_dynamic_exp13}}
\end{figure}

\begin{table}
  \caption{Hardware flight computation times in dynamic environments. All flights use $a_{\max}=5$ m/s$^2$ and $j_{\max}=10$ m/s$^3$.}
  \label{tab:hw_dynamic_round2}
  \centering
  \renewcommand{\arraystretch}{1.0}
  \resizebox{\columnwidth}{!}{
    \begin{tabular}{c c c c c c c c}
      \toprule
      \multirow{2}{*}[-0.4ex]{\textbf{Exp.}} & \multirow{2}{*}[-0.4ex]{\makecell{\textbf{Obst.} \\ \textbf{Type}}} & \multirow{2}{*}[-0.4ex]{\makecell{\textbf{Obst.} \\ \textbf{Traj.}}} & \multirow{2}{*}[-0.4ex]{\makecell{$v_{\max}$ \\ {[m/s]}}} & \multicolumn{4}{c}{\textbf{Computation Time [ms]}} \\
      \cmidrule(lr){5-8}
      & & & & $T_{\mathrm{replan}}$ & $T_{\mathrm{global}}$ & $T_{\mathrm{STSFC}}$ & $T_{\mathrm{opt}}$ \\
      \midrule
      7 & \multirow{4}{*}{1 Dyn.} & Line & \multirow{4}{*}{2.0} & $21.4 \pm 5.0$ & $0.1 \pm 0.0$ & $4.9 \pm 3.2$ & $6.0 \pm 2.4$ \\
      8 & & Circle & & $22.2 \pm 5.5$ & $0.1 \pm 0.1$ & $5.0 \pm 3.3$ & $6.7 \pm 2.9$ \\
      9 & & Fig. Eight & & $22.1 \pm 5.3$ & $0.1 \pm 0.0$ & $4.9 \pm 3.2$ & $6.4 \pm 2.8$ \\
      10 & & Person & & $22.1 \pm 6.0$ & $0.1 \pm 0.1$ & $5.0 \pm 3.2$ & $6.4 \pm 3.2$ \\
      \midrule
      11 & \multirow{3}{*}{5 Dyn.} & \multirow{3}{*}{Line} & \multirow{3}{*}{2.0} & $24.6 \pm 6.6$ & $0.5 \pm 0.5$ & $6.2 \pm 3.8$ & $6.7 \pm 3.3$ \\
      12 & & & & $20.3 \pm 5.2$ & $0.1 \pm 0.1$ & $5.0 \pm 3.4$ & $5.5 \pm 2.1$ \\
      13 & & & & $21.3 \pm 5.5$ & $0.2 \pm 0.3$ & $5.0 \pm 3.3$ & $5.9 \pm 2.7$ \\
      \midrule
      14 & \multirow{3}{*}{\makecell{5 Dyn. \\ \& Static}} & \multirow{3}{*}{Line} & 2.0 & $21.9 \pm 5.7$ & $0.2 \pm 0.3$ & $5.0 \pm 3.3$ & $6.6 \pm 3.1$ \\
      15 & & & 3.0 & $21.4 \pm 5.7$ & $0.1 \pm 0.1$ & $5.1 \pm 3.4$ & $6.1 \pm 2.8$ \\
      16 & & & 4.0 & $21.2 \pm 5.3$ & $0.1 \pm 0.2$ & $4.9 \pm 3.0$ & $6.7 \pm 2.8$ \\
      \bottomrule
    \end{tabular}
  }
  \vspace{-0.5em}
\end{table}

\subsection{UAV in Dynamic Environments}

To evaluate SANDO in dynamic environments, we conducted hardware experiments involving one or five dynamic obstacles, each created by attaching an approximately \SI{2.0}{\meter}-tall foam rectangular box to a wheeled platform.
In Experiments 7 to 10, the UAV flew back and forth ($x=0.0$\,m to $x=8.0$\,m, 5 rounds) in an $8\,\text{m} \times 8\,\text{m}$ area with a single dynamic obstacle moving at approximately \SI{0.5}{\m/\s}: Experiments 7--9 used linear, circular, and figure-eight obstacle trajectories, while Experiment 10 (Fig.~\ref{fig:hardware_uav_dynamic_exp10}) used a person walking randomly to test SANDO's ability to handle unpredictable human motion.
Experiments 11 to 13 (Fig.~\ref{fig:hardware_uav_dynamic_exp12}) had five dynamic obstacles in an $8\,\text{m} \times 20\,\text{m}$ area, where each obstacle follows a linear trajectory at \SI{0.5}{\m/\s} with random initial positions and directions.
The agent was tasked with flying from $x=0.0$\,m to $x=20.0$\,m and returning to the start, similar to the static environment experiments.
In Experiments 14 to 16 (Fig.~\ref{fig:hardware_uav_dynamic_exp13}), the same five dynamic obstacles were placed in an environment with additional static obstacles, and the UAV was tasked with flying at higher speeds ($v_{\max} \in \{2.0, 3.0, 4.0\}$\,m/s) to further challenge SANDO's performance.
The agent was tasked with flying from $x=0.0$\,m to $x=20.0$\,m, while avoiding both static and dynamic obstacles.
The dynamic constraints were set to $v_{\max} = 2.0\,\text{m/s}$, $a_{\max} = 5.0\,\text{m/s}^2$, and $j_{\max} = 10.0\,\text{m/s}^3$ for Experiments 7 to 13, and $v_{\max} \in \{2.0, 3.0, 4.0\}$\,m/s, $a_{\max} = 5.0\,\text{m/s}^2$, and $j_{\max} = 10.0\,\text{m/s}^3$ for Experiments 14 to 16.
Fig.~\ref{fig:hw_dynamic_history} shows the velocity profile for Experiment~10, confirming that the UAV satisfies the dynamic constraints throughout the flight.

Table~\ref{tab:hw_dynamic_round2} summarizes the computation times for all ten dynamic environment experiments, where $T_{\mathrm{STSFC}}$ denotes the spatiotemporal safe flight corridor generation time.
For the dynamic experiments, we used $P=2$ spatial polytopes per time layer, whereas the static experiments used $P=3$.
This difference is because, as shown in Table~\ref{tab:unknown_dynamic_benchmark}, $P=2$ achieves higher success rates than $P=3$ in harder dynamic environments (64\% vs.\ 51\% in the hard case), since fewer polytopes per layer reduce the MIQP complexity and allow faster replanning, which is critical when the obstacles move.
This also explains why the average replanning times in dynamic experiments (around \SI{22}{\ms}) are lower than in static environments.
Overall, SANDO successfully avoids all dynamic obstacles across all ten experiments while maintaining real-time performance.

\section{Conclusions}\label{sec:conclusion}

This paper presented SANDO, a safe trajectory planner for 3D dynamic unknown environments.
SANDO combines a heat map-based global planner with STSFC generation that inflates dynamic obstacles by worst-case reachable sets, a variable elimination technique that reduces hard-constraint MIQP optimization to as few as one free variable per axis (for $N=4$), and an AEKF-based dynamic obstacle tracker.
Simulations across standardized static benchmarks, obstacle-rich forests, and dynamic environments showed that SANDO consistently achieves the highest success rate with no constraint violations across all difficulty levels, and perception-only experiments without ground truth obstacle information confirmed robust performance under realistic sensing conditions.
Hardware experiments on a quadrotor validated the approach.

The current framework has two main limitations.
First, as discussed in Section~\ref{subsec:recursive_feasibility}, SANDO does not guarantee recursive feasibility due to its moving subgoal structure.
Second, the worst-case reachable set inflation can become overly conservative in dense environments, as evidenced by the perception-only results in Section~\ref{sec:dynamic-no-gt}.

\ifarxiv
  \section{Acknowledgements}\label{sec:acknowledgements}
The authors would like to thank Lili Sun for her insightful comments, drone hardware design, and help with the hardware experiments; Lucas Jia for his assistance with the hardware experiments; Juan Rached for his help with the drone design and build; and Mason B. Peterson for his help with the ground robot setup.

\fi

\bibliographystyle{ieeetr}
\bibliography{root}

\end{document}